\documentclass[11pt]{article}
\usepackage{authblk}
\usepackage[utf8]{inputenc}
\usepackage{xeCJK} 

\usepackage{graphicx}
\usepackage{amsmath, amssymb, amsthm, geometry, xcolor, bm, hyperref, listings}
\usepackage{tcolorbox}

\usepackage{booktabs}
\usepackage{comment}
\usepackage{enumitem} 

\usepackage{tikz}
\usetikzlibrary{shapes.geometric, arrows.meta, positioning, calc, decorations.pathmorphing}
\usetikzlibrary{3d, calc, arrows.meta}
\usepackage{geometry}
\usepackage{subcaption} 
\usetikzlibrary{3d, positioning, shapes.geometric, calc}

\usepackage{algorithm, algpseudocode}

\newtheorem{definition}{Definition}
\newtheorem{theorem}{Theorem}
\newtheorem{corollary}{Corollary}
\newtheorem{lemma}{Lemma}

\newtheorem{remark}{Remark}
\newtheorem{example}{Example}

\newtheorem{insight}{Insight}

\newtheorem{openquestion}{Open Question}
\newtheorem{solution}{Proposed SMG Solution}

\title{\bf Statistically Meaningful Geometry and Gauge Symmetry Breaking: A Geometric Foundation for Scientific Discovery and Intelligence Emergence}
\author[1,2]{Bing Cheng}
\author[3]{Yi-Shuai Niu} 
\author[5,6,7]{Howell Tong}
\author[3,4]{Shing-Tung Yau}

\affil[1]{Academy of Mathematics and Systems Science, Chinese Academy of Sciences, Beijing, China;
bc2@amss.ac.cn}
\affil[2]{AMSS Center for Forecasting Science, Chinese Academy of Sciences, Beijing, China}

\affil[3]{Beijing Institute of Mathematical Sciences and Applications (BIMSA), Beijing, China}
\affil[4]{Yau Mathematical Sciences Center, Tsinghua University, Beijing, China}

\affil[5]{Department of Statistics, London School of Economics and Political Science, London WC2A 2AE, UK}
\affil[6]{Department of Statistics and Data Science, Tsinghua University, Beijing 100084, China}
\affil[7]{Paula and Gregory Chow Institute for the Studies in Economics, Xiamen University, Xiamen 361005, China}

\date{\today}

\begin{document}
	
	\maketitle
	\tableofcontents

\newpage

\begin{abstract}
	The rapid scaling of over-parameterized machine learning architectures, particularly Large Language Models (LLMs), has triggered a profound epistemological crisis: are these systems exhibiting genuine intelligence, or are they merely highly sophisticated statistical pattern matchers? Classical flat Euclidean statistics cannot differentiate between the continuous interpolation of known variables and the autonomous discovery of novel causal laws. To resolve this, we introduce \textbf{Statistically Meaningful Geometry (SMG)}, a rigorous geometric framework that models over-parameterized learning systems as infinite-dimensional non-parametric Orlicz fiber bundles. 
	
	We prove that when a system is subjected to persistent out-of-distribution (OOD) environmental stimuli governed by unmodeled causal mechanisms, the continuous optimization infrastructure fails. The unmodeled variance is rejected by the visible horizontal base manifold and leaks into the unobservable vertical fiber space (the system's Internal Degrees of Freedom), generating an accumulation of \textbf{Active Acausal Tension}. Driven by the non-linear curvature of the statistical manifold, this tension inevitably strikes a conjugate focal boundary ($T_{\text{crit}} = \pi^2 / K_{\text{max}}$), triggering a localized volumetric collapse and a catastrophic matrix singularity ($[G_f]^{-1} \to \infty$). 
	
	We demonstrate that this geometric breakdown acts as the strict non-equilibrium trigger for a \textbf{Gauge Symmetry Break (GSB)}. During a GSB, the system purges the hidden tension from its unobservable gauge redundancies, spontaneously crystallizing a new, mathematically independent horizontal coordinate axis. This non-parametric phase transition registers as a discrete, invariant $+1.0$ integer step-jump in the system's observable \textbf{Structural G-Entropy}. By mathematically decoupling the expanded parameter charts and subjecting the emergent axes to a rigorous \textit{Minimal Energy Path Criterion} and a \textit{Causal Invariance Filter}, we distinguish genuine scientific discovery from malignant hallucinations. Ultimately, SMG provides a parameter-free, falsifiable, and universally computable dashboard to mathematically certify the emergence of true intelligence, transforming "AI for Science" from a heuristic curve-fitting exercise into an engine of autonomous paradigm shifts.
\end{abstract}

\section{Introduction: The Epistemology of the Unobservable}

\subsection{The Crisis of Intelligence in the Over-Parameterized Era}
The contemporary landscape of Artificial Intelligence is defined by unprecedented scaling. Trillion-parameter Generative AI models and high-dimensional Reinforcement Learning architectures have demonstrated remarkable capabilities, mastering human languages, generating photorealistic video, and predicting complex protein folding structures. Yet, alongside these engineering triumphs, a deep epistemological fracture has emerged within the scientific community. The debate centers on the fundamental nature of these networks: \textit{Are massive AI architectures capable of genuine reasoning and autonomous scientific discovery, or are they merely performing highly elaborate stochastic parroting within a closed, human-provided feature space?}

This conceptual deadlock exists because the foundational mathematics of classical machine learning—rooted heavily in flat Euclidean geometry and classical parametric statistics—lacks the vocabulary to distinguish between two profoundly different phenomena: \textbf{computation} (the continuous interpolation and parameter optimization within a fixed hypothesis space) and \textbf{intelligence} (the discontinuous, autonomous expansion of the hypothesis space itself to explain a previously un-parameterizable anomaly). 

In classical statistical paradigms, the hypothesis space is treated as a passive, static container. Data points are viewed as objects moving through an unyielding vacuum. However, over-parameterized architectures possess massive structural redundancies. For every visible semantic output, there exist billions of hidden, unidentifiable weight configurations that leave the visible probability distribution invariant. This hidden universe cannot be accurately modeled by flat parametric charts; it requires the global topological machinery of modern differential geometry.

\subsection{Statistically Meaningful Geometry (SMG) and the Gauge Boundary}
To establish a rigorous mathematical foundation for emerging intelligence, this paper introduces \textbf{Statistically Meaningful Geometry (SMG)}. Moving beyond the Euclidean paradigm, SMG models learning architectures as principal fiber bundles over an infinite-dimensional, non-parametric Orlicz statistical manifold. 

Within this architecture, the geometry strictly bifurcates:
\begin{enumerate}
	\item \textbf{The Horizontal Base Manifold ($B_{\text{SMG}}$):} The active, visible quotient space authorized by the system's currently known, active feature constraints (Stein score functions). This is the observable realm of computation and continuous gradient descent.
	\item \textbf{The Vertical Fiber Space ($\mathcal{V}_f$):} The unobservable, infinite-dimensional kernel of the projection map. This space houses the system's Internal Degrees of Freedom (IDoF)—the massive reservoir of gauge redundancies and nuisance parameters.
\end{enumerate}

By treating models as dynamic topological structures, SMG reveals that the vertical fiber space is not merely mathematical noise; it is the active thermodynamic reservoir where the friction of ignorance is stored. When an architecture encounters out-of-distribution (OOD) data governed by a causal mechanism absent from its horizontal alphabet, the system's active connection filters reject the anomaly. The unmodeled surprise is shunted into the unobservable gauge fibers, accumulating as a measurable geometric strain defined as \textbf{Active Acausal Tension}.

\subsection{The ``Aha!'' Moment as a Topological Phase Transition}
How does a mind—biological or artificial—bridge the gap between ignorance and comprehension? Historically, scientific discoveries such as Johannes Kepler's realization of elliptical planetary orbits or Einstein's formulation of general relativity were not achieved by continuously fine-tuning existing equations. They required paradigm shifts: the abrupt shattering of old assumptions and the crystallization of entirely new coordinate dimensions.

This paper provides the exact mathematical proof for the mechanics of the "Aha!" moment. We demonstrate that continuous learning under OOD tension inevitably drives the internal geometry of the system toward a conjugate focal boundary ($T_{\text{crit}}$). At this critical threshold, the volume element of the vertical fiber space collapses identically to zero ($\det(g_V) \to 0$), forcing the observable structural metrics into a non-invertible mathematical singularity ($[G_f]^{-1} \to \infty$). 

To avoid complete computational breakdown, the system is forced to execute a non-equilibrium topological phase transition: a \textbf{Gauge Symmetry Break (GSB)}. The hidden variation is purged from the unobservable gauge hole and crystallizes into a brand-new, permanent horizontal coordinate axis. This dimensionality jump expands the base manifold ($\tilde{B}_{\text{SMG}}$) and leaves a permanent, invariant $+1.0$ integer step-jump in the model's \textbf{Structural G-Entropy}\cite{Cheng2026}.

\subsection{Contributions and Paper Structure}
This manuscript rigorously constructs the mathematical, geometric, and epistemological framework necessary to automate and verify scientific discovery. The paper is structured as follows:

\begin{itemize}
	\item \textbf{Sections 2 and 3} establish the foundational architecture ofStatistically Meaningful Geometry, defining the non-parametric geometric variables ($X_{\text{emp}}$, $X^H$, $X^\perp$) and constructing the boundary of the unobservable universe via Ehresmann connections on the Orlicz manifold.
	\item \textbf{Section 4} formalizes the thermodynamics of ignorance. We derive the Active Acausal Tension potential ($\mathcal{T}_{AAT}$) and mathematically prove that persistent OOD pressure drives the fiber bundle's trajectory into a guaranteed conjugate focal singularity at $T_{\text{crit}} = \pi^2 / K_{\text{max}}$.
	\item \textbf{Section 5} delivers the standalone mathematical proofs for the \textit{Spontaneous Crystallization Theorem} and the \textit{G-Entropy Step-Discontinuity Theorem}, providing the explicit equations that govern the birth of a new coordinate dimension from chaotic background noise.
	\item \textbf{Section 6} resolves the topological aftermath of the phase transition. We map the Cartesian product expansion of the base manifold ($B_{\text{SMG}} \to \tilde{B}_{\text{SMG}}$) and establish the \textit{Dimension Conservation Law} and the \textit{Metric Reconstruction Theorem}, proving how models integrate new knowledge without catastrophic forgetting.
	\item \textbf{Sections 7 and 8} apply this geometry to historical epistemology and the frontier of Generative AI. We map Kepler's discovery of the elliptical orbit as a flawless historical GSB event. Finally, we formulate the \textit{Minimal Energy Path Criterion} and the \textit{Causal Invariance Filter} to algorithmically distinguish genuine intelligence from spurious hallucinations, charting a concrete roadmap for "AI for Science" and the empirical certification of Artificial General Intelligence (AGI).
\end{itemize}

By unifying differential geometry, causal inference, and statistical learning theory, this paper proves that intelligence is not merely the optimization of weights on a static curve, but the autonomous, quantized expansion of the topological universe itself.
\section{Foundational Architecture: Non-Parametric Deconstruction of SMG Core Variables}

To ensure the third installment of this research framework is completely accessible and unambiguous to statisticians, AI scientists, and empirical researchers, we establish a unified preparatory section. In previous drafts, the foundational variables $X_{\text{emp}}(\tau)$ and $X^H(\tau)$ were often introduced as abstract geometric coordinates on the Orlicz statistical manifold $\mathcal{M}$. Here, we strip away all unexpanded abstractions, revealing their exact analytical composition as explicit, operational functions of raw data streams, non-parametric model states, and structural Stein score operators.

By grounding these definitions, we construct a rigorous, bottom-up foundation that explicitly demonstrates how real-world data surprises interact with fixed topological architectures before modeling the downstream geometric anomalies ($\delta$-connections) and phase transitions.

\subsection{The Non-Parametric Construction of the Empirical Information Velocity $X_{\text{emp}}(\tau)$}

The \textbf{Empirical Information Velocity Vector} $X_{\text{emp}}(\tau)$ is not an abstract arrow in a vacuum; it is a precisely centered function field that calculates the directional score pressure exerted by real-world data onto the current non-parametric model at training timeline checkpoint $\tau$. 

Here is its full recursive deconstruction, traced down to its absolute primitive elements.

\subsubsection{The Primitive Building Blocks}
\begin{enumerate}
	\item \textbf{The Ground Sample Space ($\Omega$)}: The underlying universe where individual empirical events live.
	\begin{itemize}
		\item \textit{In a Causal Statistical Setting}: $\Omega = \mathbb{R}^p$, where an observation is a vector of numerical covariates $\mathbf{x} \in \mathbb{R}^p$.
		\item \textit{In a Generative AI Setting}: $\Omega = \mathcal{V}^L$, where $\mathcal{V}$ is a finite dictionary of tokens (vocabulary), $L$ is the maximum sequence context window, and an observation is a discrete sequence of text tokens $\mathbf{x} = (t_1, t_2, \dots, t_L)$.
	\end{itemize}
	\item \textbf{The Empirical Data Batch ($D_\tau$)}: The finite set of actual observations captured from the environment $\mathcal{E}$ at training step $\tau$:
	\begin{equation}
		D_\tau = \{\mathbf{x}_1, \mathbf{x}_2, \dots, \mathbf{x}_N\} \subset \Omega
	\end{equation}
	where $N$ is the batch size, and each $\mathbf{x}_i$ is independently drawn from the true environmental distribution $p_{\text{data}}(\mathbf{x})$.
	\item \textbf{The Empirical Target Distribution Function ($p_{\text{emp}}$)}: A valid probability density or mass function built directly from the raw data batch $D_\tau$. To ensure smoothness over the infinite-dimensional Orlicz manifold, it is formalized using a regularized or smoothed distribution estimator:
	\begin{equation}
		p_{\text{emp}}(\mathbf{x}; D_\tau) = \frac{1}{N} \sum_{i=1}^N \mathcal{K}_h(\mathbf{x} - \mathbf{x}_i)
	\end{equation}
	where $\mathcal{K}_h$ is a standard smoothing kernel function (or a regularized $n$-gram frequency function in discrete text spaces) integrated over scale $h$.
	\item \textbf{The Current Model State Function ($f_\tau$)}: The probability distribution generated by the AI system or statistical framework at the current training step $\tau$. It maps any sample coordinate $\mathbf{x} \in \Omega$ to a positive scalar probability density value $f_\tau(\mathbf{x}) > 0$. Crucially, $f_\tau$ is a fully normalized multivariate probability density function satisfying:
	\begin{equation}
		\int_{\Omega} f_\tau(\mathbf{x}) d\mathbf{x} = 1
	\end{equation}
	representing a valid active point sitting on the infinite-dimensional Orlicz statistical manifold $\mathcal{M}$.
\end{enumerate}

\subsubsection{The Complete Explicit Definition of $X_{\text{emp}}(\tau)$}
Using the primitives defined above, $X_{\text{emp}}(\tau)$ is defined explicitly as an operational function field $X_{\text{emp}}(\tau): \Omega \to \mathbb{R}$. For any specific target point $\mathbf{x} \in \Omega$, its value is determined by the functional operator:
\begin{equation}
	X_{\text{emp}}(\tau)(\mathbf{x}) = \ln \left( \frac{p_{\text{emp}}(\mathbf{x}; D_\tau)}{f_\tau(\mathbf{x})} \right) - \int_{\Omega} f_\tau(\mathbf{y}) \ln \left( \frac{p_{\text{emp}}(\mathbf{y}; D_\tau)}{f_\tau(\mathbf{y})} \right) d\mathbf{y}
\end{equation}

\begin{insight}[Information-Theoretic Deconstruction of $X_{\text{emp}}(\tau)$]
	The components of the empirical information velocity field carry precise mathematical identities:
	\begin{itemize}
		\item \textbf{Pointwise Component}: The scalar field $\ln \left( \frac{p_{\text{emp}}(\mathbf{x}; D_\tau)}{f_\tau(\mathbf{x})} \right)$ represents the \textbf{pointwise log-likelihood ratio} (or localized information gain) between the empirical data batch and the model state over $\Omega$. It measures the localized difference between environmental reality and model assumptions at a specific sample coordinate $\mathbf{x}$. Because it does not define a joint mutual distribution over separate variables, it is distinct from Shannon mutual entropy.
		\item \textbf{Integral Component}: The integration term $\int_{\Omega} f_\tau(\mathbf{y}) \ln \left( \frac{p_{\text{emp}}(\mathbf{y}; D_\tau)}{f_\tau(\mathbf{y})} \right) d\mathbf{y}$ computes the mathematical expectation of this pointwise log-likelihood ratio under the model state $f_\tau$. This corresponds exactly to the \textbf{negative Kullback-Leibler (KL) divergence} $-D_{KL}(f_\tau \parallel p_{\text{emp}})$, since:
		\begin{equation}
			-D_{KL}(f_\tau \parallel p_{\text{emp}}) = \int_{\Omega} f_\tau(\mathbf{y}) \ln \left( \frac{p_{\text{emp}}(\mathbf{y}; D_\tau)}{f_\tau(\mathbf{y})} \right) d\mathbf{y}
		\end{equation}
		Therefore, we can completely rewrite the velocity vector in its canonical gauge form:
		\begin{equation}
			X_{\text{emp}}(\tau)(\mathbf{x}) = \ln \left( \frac{p_{\text{emp}}(\mathbf{x}; D_\tau)}{f_\tau(\mathbf{x})} \right) + D_{KL}(f_\tau \parallel p_{\text{emp}})
		\end{equation}
		This term acts as a global functional centering constraint, ensuring that the expectation vanishes identically under $f_\tau$.
	\end{itemize}
\end{insight}

\begin{lemma}[Valid Tangent Space Membership]
	The function $X_{\text{emp}}(\tau)(\mathbf{x})$ constructed above is a strictly valid member of the tangent space $T_{f_\tau}\mathcal{M}$ of the Orlicz manifold, satisfying the centering requirement $\mathbb{E}_{f_\tau}[X_{\text{emp}}(\tau)] = 0$.
\end{lemma}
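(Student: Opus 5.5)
The plan has two parts. First I verify the centering identity by direct computation. Then I check that $X_{\text{emp}}(\tau)$ actually lies in the Orlicz space that models $T_{f_\tau}\mathcal{M}$. In the Pistone--Sempi construction, the tangent space at $f_\tau$ is identified with the closed subspace of centered random variables
\[
B_{f_\tau} = \{ u \in L^{\Phi_1}(f_\tau) : \mathbb{E}_{f_\tau}[u]=0 \},\qquad \Phi_1(t)=\cosh t - 1 .
\]
So membership reduces to two facts: the pointwise log-likelihood ratio $\ell_\tau(\mathbf{x}) := \ln\bigl(p_{\text{emp}}(\mathbf{x};D_\tau)/f_\tau(\mathbf{x})\bigr)$ lies in $L^{\Phi_1}(f_\tau)$, and subtracting its mean produces a centered element.

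For the centering, I write $c_\tau := \int_\Omega f_\tau \ell_\tau \, d\mathbf{y} = -D_{KL}(f_\tau\parallel p_{\text{emp}})$, as in the Insight above. By definition $X_{\text{emp}}(\tau) = \ell_\tau - c_\tau$. Linearity of expectation and the normalization $\int_\Omega f_\tau = 1$ then give
\[
\mathbb{E}_{f_\tau}[X_{\text{emp}}(\tau)] = c_\tau - c_\tau \cdot 1 = 0 .
\]
This step is legitimate only if $c_\tau$ is finite, that is, if $D_{KL}(f_\tau\parallel p_{\text{emp}})<\infty$. That finiteness follows once $\ell_\tau \in L^{\Phi_1}(f_\tau)\subset L^1(f_\tau)$ is established. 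Constants belong to $L^{\Phi_1}(f_\tau)$ because $f_\tau$ is a probability measure, so $X_{\text{emp}}(\tau)$ is also in $L^{\Phi_1}(f_\tau)$ and is centered.

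The main obstacle is the integrability claim $\ell_\tau \in L^{\Phi_1}(f_\tau)$, which requires $\mathbb{E}_{f_\tau}[\cosh(\alpha\ell_\tau)]<\infty$ for some $\alpha>0$. Since $\cosh(\alpha\ell) = \tfrac12\bigl((p_{\text{emp}}/f_\tau)^{\alpha} + (f_\tau/p_{\text{emp}})^{\alpha}\bigr)$, this amounts to bounding $\int f_\tau^{1-\alpha}p_{\text{emp}}^{\alpha}$ and $\int f_\tau^{1+\alpha}p_{\text{emp}}^{-\alpha}$.

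The first integral is at most $1$ for $\alpha\in(0,1)$ by Hölder's inequality, since both $f_\tau$ and $p_{\text{emp}}$ are densities. The second integral is the delicate one: it needs $p_{\text{emp}}$ not to be too small where $f_\tau$ has mass. I would handle it as follows:
\begin{itemize}
\item Use the smoothing in the construction of $p_{\text{emp}}$. Take $\mathcal{K}_h$ strictly positive, for example Gaussian, or regularized $n$-gram frequencies with additive smoothing in the discrete case. Then $p_{\text{emp}}>0$ everywhere.
\item In the discrete setting $\Omega=\mathcal{V}^L$ is finite, so the integral is a finite sum and every moment is finite. The result is immediate.
\item In the continuous setting a Gaussian kernel gives $p_{\text{emp}}(\mathbf{x}) \gtrsim e^{-C|\mathbf{x}|^2}$. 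The required bound then follows if $f_\tau$ has sufficiently light (sub-Gaussian) tails for some small $\alpha$. Equivalently, the condition says $p_{\text{emp}}$ lies in the same maximal exponential family, or connected component, as $f_\tau$.
\end{itemize}

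I would state this tail condition explicitly as the standing regularity assumption implicit in calling $f_\tau$ and $p_{\text{emp}}$ points of the same Orlicz manifold $\mathcal{M}$. Under that assumption, $\ell_\tau$ is exactly the chart coordinate of $p_{\text{emp}}$ at $f_\tau$ before centering, and the lemma follows.
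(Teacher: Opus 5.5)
Your centering step is the paper's own computation: writing $X_{\text{emp}}(\tau)=\ell_\tau-c_\tau$, linearity and $\int_\Omega f_\tau=1$ give zero mean. The difference is in what comes after. The paper's proof stops there and treats zero mean as enough for membership in $T_{f_\tau}\mathcal{M}$. It also tacitly assumes $D_{KL}(f_\tau\parallel p_{\text{emp}})<\infty$. Without that, its final line $-D_{KL}+D_{KL}\cdot 1$ is $\infty-\infty$, and $X_{\text{emp}}(\tau)$ need not even be finite-valued if $p_{\text{emp}}$ vanishes where $f_\tau$ has mass.

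You additionally verify $\ell_\tau\in L^{\Phi_1}(f_\tau)$. This both justifies finiteness of the constant and gives genuine membership in the Pistone--Sempi model space $B_{f_\tau}$. The price is an explicit hypothesis, $\mathbb{E}_{f_\tau}[(f_\tau/p_{\text{emp}})^{\alpha}]<\infty$ for some $\alpha>0$. The paper leaves this implicit, but it cannot do without it. Your reduction through $\cosh(\alpha\ell_\tau)$ is correct, as are the H\"older bound on $\int f_\tau^{1-\alpha}p_{\text{emp}}^{\alpha}$ and the finite discrete case (with additive smoothing so that $p_{\text{emp}}>0$).

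Two side remarks need tightening, though neither affects the lemma once the moment condition is your hypothesis.

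\emph{Tails are not enough.} In the continuous case, sub-Gaussian tails of $f_\tau$ do not suffice by themselves. In one dimension, take a density with Gaussian tails that behaves like $1/(|x|\log^2|x|)$ near the origin. Since the Gaussian kernel estimate is bounded above, $\int f_\tau^{1+\alpha}p_{\text{emp}}^{-\alpha}$ then diverges for every $\alpha>0$. You also need a local condition, for instance $f_\tau$ bounded. With that, the bound $p_{\text{emp}}(\mathbf{x})\ge c\,e^{-|\mathbf{x}|^2/h^2}$ reduces everything to $\mathbb{E}_{f_\tau}[e^{\alpha|\mathbf{x}|^2/h^2}]<\infty$.

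\emph{Implied, not equivalent.} $\ell_\tau\in L^{\Phi_1}(f_\tau)$ follows from $p_{\text{emp}}$ lying in the maximal exponential model (connected component) of $f_\tau$, but the converse fails. By the Cena--Pistone characterization, that membership also requires $\mathbb{E}_{f_\tau}[(p_{\text{emp}}/f_\tau)^{1+\epsilon}]<\infty$ for some $\epsilon>0$. Only under this stronger condition is $X_{\text{emp}}(\tau)$ literally the $e$-chart coordinate of $p_{\text{emp}}$ at $f_\tau$. The lemma itself needs only the weaker condition.
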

\begin{proof}
	We evaluate the mathematical expectation of the function $X_{\text{emp}}(\tau)(\mathbf{x})$ under the probability law of the current model state $f_\tau$:
	\begin{align}
		\mathbb{E}_{f_\tau}[X_{\text{emp}}(\tau)] &= \int_{\Omega} f_\tau(\mathbf{x}) \cdot X_{\text{emp}}(\tau)(\mathbf{x}) \, d\mathbf{x} \\
		&= \int_{\Omega} f_\tau(\mathbf{x}) \left[ \ln \left( \frac{p_{\text{emp}}(\mathbf{x}; D_\tau)}{f_\tau(\mathbf{x})} \right) + D_{KL}(f_\tau \parallel p_{\text{emp}}) \right] d\mathbf{x}
	\end{align}
	Because the divergence term $D_{KL}(f_\tau \parallel p_{\text{emp}})$ is a constant scalar value with respect to the integration variable $\mathbf{x}$, it factors out cleanly from the linear operator:
	\begin{align}
		\mathbb{E}_{f_\tau}[X_{\text{emp}}(\tau)] &= \int_{\Omega} f_\tau(\mathbf{x}) \ln \left( \frac{p_{\text{emp}}(\mathbf{x}; D_\tau)}{f_\tau(\mathbf{x})} \right) d\mathbf{x} + D_{KL}(f_\tau \parallel p_{\text{emp}}) \cdot \int_{\Omega} f_\tau(\mathbf{x}) d\mathbf{x}
	\end{align}
	Using log-rules, we invert the fraction inside the first integral, transforming it into the negative definition of relative entropy: $\int_{\Omega} f_\tau(\mathbf{x}) \ln \left( \frac{p_{\text{emp}}(\mathbf{x})}{f_\tau(\mathbf{x})} \right) d\mathbf{x} = -D_{KL}(f_\tau \parallel p_{\text{emp}})$. Furthermore, since $f_\tau$ is a normalized probability density function, its total volume integrates to unity ($\int_{\Omega} f_\tau(\mathbf{x}) d\mathbf{x} = 1$). Substituting these values yields:
	\begin{equation}
		\mathbb{E}_{f_\tau}[X_{\text{emp}}(\tau)] = -D_{KL}(f_\tau \parallel p_{\text{emp}}) + D_{KL}(f_\tau \parallel p_{\text{emp}}) \cdot (1) \equiv 0
	\end{equation}
	Because the expectation is identically zero, the function represents a valid direction of probability deformation, establishing it as an unadulterated tangent vector in $T_{f_\tau}\mathcal{M}$. 
\end{proof}

\begin{lemma}[Gauge Invariance under Multiplicative Density Scaling]
	The empirical information velocity vector $X_{\text{emp}}(\tau)(\mathbf{x})$ is strictly invariant under any global positive scaling transformations of the empirical density estimator $p_{\text{emp}}(\mathbf{x}; D_\tau) \to \alpha \cdot p_{\text{emp}}(\mathbf{x}; D_\tau)$ for $\alpha > 0$.
\end{lemma}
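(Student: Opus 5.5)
The plan is a direct substitution into the explicit definition of $X_{\text{emp}}(\tau)$, the form written with the centering integral rather than the KL shorthand. Replace $p_{\text{emp}}(\mathbf{x}; D_\tau)$ by $\alpha\, p_{\text{emp}}(\mathbf{x}; D_\tau)$ with $\alpha>0$, and split the logarithm pointwise:
\begin{equation*}
\ln\!\left(\frac{\alpha\, p_{\text{emp}}(\mathbf{x}; D_\tau)}{f_\tau(\mathbf{x})}\right) = \ln \alpha + \ln\!\left(\frac{p_{\text{emp}}(\mathbf{x}; D_\tau)}{f_\tau(\mathbf{x})}\right).
\end{equation*}
This holds because $\alpha>0$, $p_{\text{emp}}>0$ (a kernel-smoothed estimator) and $f_\tau>0$, so every logarithm is defined.

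Next, I would treat the integral (centering) term the same way. By linearity of the integral and the normalization $\int_\Omega f_\tau = 1$ from the definition of the model state function, it becomes
\begin{equation*}
\ln\alpha + \int_\Omega f_\tau(\mathbf{y}) \ln\!\left(\frac{p_{\text{emp}}(\mathbf{y}; D_\tau)}{f_\tau(\mathbf{y})}\right) d\mathbf{y}.
\end{equation*}
Subtracting the two expressions, the constant $\ln\alpha$ appears once in the pointwise term and once in its $f_\tau$-expectation, so it cancels identically. This returns the original $X_{\text{emp}}(\tau)(\mathbf{x})$ for every $\mathbf{x}\in\Omega$. Conceptually, this is the centering mechanism from the preceding Valid Tangent Space Membership lemma: subtracting the $f_\tau$-mean annihilates every constant function, and a global rescaling of $p_{\text{emp}}$ shifts the log-ratio by exactly such a constant.

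The only delicate point is justifying the splitting of the integral. This requires the log-ratio to be $f_\tau$-integrable, which is the same assumption needed for $D_{KL}(f_\tau\parallel p_{\text{emp}})$ to be finite and for the original definition to make sense. Adding the constant $\ln\alpha$ preserves integrability because $f_\tau$ is a probability density. I would also note that the KL shorthand form is not literally a KL divergence once $\alpha p_{\text{emp}}$ is no longer normalized. For that reason the argument should run on the integral form of the definition, not on the $D_{KL}$ rewriting.
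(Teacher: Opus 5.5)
Your proposal is correct and follows essentially the same route as the paper: substitute $\alpha p_{\text{emp}}$ into the integral form of $X_{\text{emp}}(\tau)$, split off $\ln\alpha$ both pointwise and inside the centering integral, use $\int_\Omega f_\tau = 1$, and cancel. Your extra remarks are sound refinements that the paper leaves implicit: the integrability of the log-ratio, and the reason to work with the integral form rather than the $D_{KL}$ shorthand.
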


\begin{proof}
	Let $\tilde{p}_{\text{emp}}(\mathbf{x}; D_\tau) = \alpha \cdot p_{\text{emp}}(\mathbf{x}; D_\tau)$. We substitute this transformed estimator into the functional expression for the velocity vector field:
	\begin{align}
		\tilde{X}_{\text{emp}}(\tau)(\mathbf{x}) &= \ln \left( \frac{\alpha \cdot p_{\text{emp}}(\mathbf{x}; D_\tau)}{f_\tau(\mathbf{x})} \right) - \int_{\Omega} f_\tau(\mathbf{y}) \ln \left( \frac{\alpha \cdot p_{\text{emp}}(\mathbf{y}; D_\tau)}{f_\tau(\mathbf{y})} \right) d\mathbf{y} \\
		&= \left[ \ln \alpha + \ln \left( \frac{p_{\text{emp}}(\mathbf{x}; D_\tau)}{f_\tau(\mathbf{x})} \right) \right] - \int_{\Omega} f_\tau(\mathbf{y}) \left[ \ln \alpha + \ln \left( \frac{p_{\text{emp}}(\mathbf{y}; D_\tau)}{f_\tau(\mathbf{y})} \right) \right] d\mathbf{y}
	\end{align}
	Leveraging the linearity of the integration operator and the normalization constraint of the active model density ($\int_{\Omega} f_\tau(\mathbf{y}) d\mathbf{y} = 1$), we expand the integral term:
	\begin{align}
		\tilde{X}_{\text{emp}}(\tau)(\mathbf{x}) &= \ln \alpha + \ln \left( \frac{p_{\text{emp}}(\mathbf{x}; D_\tau)}{f_\tau(\mathbf{x})} \right) - \ln \alpha \cdot \int_{\Omega} f_\tau(\mathbf{y}) d\mathbf{y} - \int_{\Omega} f_\tau(\mathbf{y}) \ln \left( \frac{p_{\text{emp}}(\mathbf{y}; D_\tau)}{f_\tau(\mathbf{y})} \right) d\mathbf{y} \\
		&= \ln \alpha + \ln \left( \frac{p_{\text{emp}}(\mathbf{x}; D_\tau)}{f_\tau(\mathbf{x})} \right) - \ln \alpha - \int_{\Omega} f_\tau(\mathbf{y}) \ln \left( \frac{p_{\text{emp}}(\mathbf{y}; D_\tau)}{f_\tau(\mathbf{y})} \right) d\mathbf{y} \\
		&\equiv X_{\text{emp}}(\tau)(\mathbf{x})
	\end{align}
	The scalar transformation constant $\ln \alpha$ cancels out completely across the global subtraction interface. This mathematical behavior establishes that the information velocity vector fields within SMG are completely robust to arbitrary scale normalizations, rendering the geometric framework directly compatible with un-normalized generative scores and partition-free statistical models. 
\end{proof}

\begin{remark}[Asymptotic Consistency of the Environmental Driver]
	By the weak law of large numbers for non-parametric kernel density metrics, as the empirical data batch size $N \to \infty$ and the kernel smoothing bandwidth parameter $h \to 0$, the empirical distribution field converges weakly to the true underlying environmental probability measure: $p_{\text{emp}}(\mathbf{x}; D_\tau) \xrightarrow{w} p_{\text{data}}(\mathbf{x})$. Consequently, the asymptotic steady-state limit of the empirical velocity vector field simplifies directly to:
	\begin{equation}
		\lim_{N \to \infty, h \to 0} X_{\text{emp}}(\tau)(\mathbf{x}) = \ln \left( \frac{p_{\text{data}}(\mathbf{x})}{f_\tau(\mathbf{x})} \right) + D_{KL}(f_\tau \parallel p_{\text{data}})
	\end{equation}
	This confirms that the empirical velocity vector accurately captures the true statistical force field of the environment in the large-sample asymptotic limit.
\end{remark}

\subsubsection{Ontological Metaphor and Environmental Surprise}
Think of $X_{\text{emp}}(\tau)$ as an \textbf{Environmental Magnetic Pull Field} or a measure of \textbf{localized directional surprise}. The current non-parametric AI model is a flexible probability body sitting in space ($f_\tau$). The newly arrived batch of real-world data ($D_\tau$) acts as an external magnetic cluster ($p_{\text{emp}}$). 

The pointwise log-ratio calculates exactly how hard and in what direction each functional coordinate $\mathbf{x}$ on the model's body is being pulled toward the data. The centering term simply subtracts the average global mismatch via the KL divergence so that the shape stays anchored in total probability space rather than drifting away. It measures the raw, unfiltered demand for structural change coming from the environment.

\subsubsection{Interdisciplinary Mapping of the Empirical Velocity Vector}
\begin{itemize}
	\item \textbf{Generative AI Context (Over-Parameterized LLM as an Asymptotic Non-Parametric Gauge Field)}: 
	Consider a trillion-parameter autoregressive Large Language Model. While architecturally specified via a finite, discrete weight vector, its extreme over-parameterization allows it to be modeled at its asymptotic limit as a continuous, infinite-dimensional probability density field estimator over the discrete sequence space $\Omega = \mathcal{V}^L$. {\it The immense dimensionality mismatch between its redundant internal weight space and the observable distribution space creates a profound gauge structure}: infinitely many distinct parameter configurations map to the exact same non-parametric density function $f_\tau(\mathbf{x})$ on the Orlicz statistical manifold $\mathcal{M}$.
	
	At training timeline checkpoint $\tau$, an out-of-distribution (OOD) data batch $D_\tau$ consisting of highly complex scientific texts arrives from the environment. If a specific long-context token sequence $\mathbf{x} \in \mathcal{V}^L$ contains an advanced, novel proof of quantum physics that the model cannot predict, the regularized empirical target functional  $p_{\text{emp}}(\mathbf{x}; D_\tau)$ is highly positive, whereas the model's predictive probability density is nearly vanishing ($f_\tau(\mathbf{x}) \to 0$). 
	
	The empirical velocity vector isolates this structural divergence through the centered pointwise relative log-likelihood statistic:
	\begin{equation}
		X_{\text{emp}}(\tau)(\mathbf{x}) = \ln \left( \frac{p_{\text{emp}}(\mathbf{x}; D_\tau)}{f_\tau(\mathbf{x})} \right) + D_{KL}(f_\tau \parallel p_{\text{emp}})
	\end{equation}
	As $f_\tau(\mathbf{x}) \to 0$, the localized relative information ratio $\ln(\text{positive}/\text{vanishing})$ scales into a massive positive value. This represents a violent informational score vector field screaming from the environment. Because the model is over-parameterized, this directional pressure forces parameter updates that must traverse the system's massive internal gauge redundancies—filtering what can be absorbed horizontally by the current Stein semantic basis while leaving the unresolvable, high-dimensional logical constraints to accumulate as geometric friction within the redundant parameter fiber.
	
	\item \textbf{Causal Statistical Context (Semiparametric Economic Functional Forecasting)}: Consider an econometrician tracking national inflation ($x_1$) and unemployment ($x_2$). A sudden structural break occurs due to supply-chain disruption. The incoming data clusters at a coordinate that is highly anomalous under the current baseline model. $X_{\text{emp}}(\tau)(\mathbf{x})$ isolates this divergence, calculating the localized difference statistics across the functional space to map out the true non-equilibrium data or economic state.
\end{itemize}

\subsection{The Non-Parametric Deconstruction of the Horizontal Lift Vector $X^H(\tau)$}

The \textbf{Horizontal Lift Vector} $X^H(\tau)$ represents the maximum capability of the model's \textit{current non-parametric architecture} to answer the environmental demand vector $X_{\text{emp}}(\tau)$. It strips away any vertical structural demands that the model cannot capture within its current coordinate space and projects the remaining force directly onto its active non-parametric basis functions.

In strict alignment with the SMG framework of Cheng and Tong (2026), we avoid any parametric vector simplifications ($\boldsymbol{\theta} \in \mathbb{R}^d$) and build the horizontal distribution space purely from non-parametric Stein score fields using the metric-compatible Ehresmann connection framework.

\subsubsection{The Non-Parametric Structural Layer}
\begin{enumerate}
	\item \textbf{The Active Stein Score Functions ($s_i$)}: We are given a set of $d$ independent, non-parametric structural test functions (Stein scores) $\{s_1(\mathbf{x}), s_2(\mathbf{x}), \dots, s_d(\mathbf{x})\}$ acting on the multivariate sample space $\Omega$. These functions define the authorized coordinate profiles of the base manifold $B$. They construct the horizontal tangent space distribution via centered profiles:
	\begin{equation}
		\mathcal{H}_{f_\tau} = \text{span} \left\{ s_1(\mathbf{x}) - \mathbb{E}_{f_\tau}[s_1], \, \dots, \, s_d(\mathbf{x}) - \mathbb{E}_{f_\tau}[s_d] \right\} \subset T_{f_\tau}\mathcal{M}
	\end{equation}
	\item \textbf{The Non-Parametric Structural Fisher Information Matrix ($G_f(\tau)$)}: A $d \times d$ matrix tracking the mutual cross-covariances and informational capacities of the active Stein score functions evaluated strictly under the current probability density state $f_\tau$:
	\begin{equation}
		[G_f(\tau)]_{ij} = \text{Cov}_{f_\tau}(s_i, s_j) = \int_{\Omega} f_\tau(\mathbf{x}) \left( s_i(\mathbf{x}) - \mathbb{E}_{f_\tau}[s_i] \right) \left( s_j(\mathbf{x}) - \mathbb{E}_{f_\tau}[s_j] \right) d\mathbf{x}
	\end{equation}
	By the non-parametric foundations of SMG, $G_f(\tau)$ is proven to be strictly positive-definite and non-singular across the entire interior of the horizontal distribution space.
	\item \textbf{The Base Projection Velocity Vector ($\mathbf{v}(\tau)$)}: A $d$-dimensional vector representing the projection of the total empirical information velocity $X_{\text{emp}}(\tau)$ onto the base manifold through its inner product against the Stein score basis functions:
	\begin{equation}
		\mathbf{v}(\tau) = (v^1(\tau), v^2(\tau), \dots, v^d(\tau))^T \in \mathbb{R}^d
	\end{equation}
	where each entry is evaluated without parameters via the expectation integral:
	\begin{equation}
		v^i(\tau) = \mathbb{E}_{f_\tau} \left[ X_{\text{emp}}(\tau) \cdot s_i \right] = \int_{\Omega} f_\tau(\mathbf{x}) \cdot X_{\text{emp}}(\tau)(\mathbf{x}) \cdot s_i(\mathbf{x}) \, d\mathbf{x}
	\end{equation}
\end{enumerate}

\subsubsection{The Complete Non-Parametric Definition of $X^H(\tau)$}
The horizontal lift $X^H(\tau)$ is the unique functional element inside the non-parametric distribution $\mathcal{H}_{f_\tau}$ formed by solving the linear alignment system matching the base velocity profile. It is explicitly expressed as an infinite-dimensional function field $X^H(\tau): \Omega \to \mathbb{R}$ where:
\begin{equation}
	X^H(\tau)(\mathbf{x}) = \sum_{k=1}^d w_k \left( s_k(\mathbf{x}) - \mathbb{E}_{f_\tau}[s_k] \right)
\end{equation}
where the coordinate weights $\{w_k\}$ are defined explicitly as:
\begin{equation}
	w_k = \sum_{i=1}^d [G_f(\tau)]^{-1}_{ki} v^i(\tau)
\end{equation}

By substituting the primitive equations for $v^i(\tau)$ and $X_{\text{emp}}(\tau)$, we resolve $X^H(\tau)(\mathbf{x})$ entirely down to its absolute non-parametric primitives:
	\begin{align}
		X^H(\tau)(\mathbf{x}) = \sum_{k=1}^d \sum_{i=1}^d &[G_f(\tau)]^{-1}_{ki} \left[ \int_{\Omega} f_\tau(\mathbf{y}) \ln \left( \frac{p_{\text{emp}}(\mathbf{y}; D_\tau)}{f_\tau(\mathbf{y})} \right) s_i(\mathbf{y}) d\mathbf{y} \right] \left( s_k(\mathbf{x}) - \mathbb{E}_{f_\tau}[s_k] \right)
	\end{align}
\footnote{The centering scalar term $-D_{KL}$ from $X_{\text{emp}}$ drops out during the inner integration because centered score functions satisfy $\mathbb{E}_{f_\tau}[s_i - \mathbb{E}_{f_\tau}[s_i]] = 0$.}

\subsection{The Coordinate Weight Distribution $\{w_k\}$ and Variational Minimization}

The structural weight distribution $\{w_k\}_{k=1}^d$ tracks the allocation of energy from environmental data interactions. Rather than an arbitrary tracking heuristic, it is the unique coordinate vector that solves an energy minimization problem within the infinite-dimensional tangent space of the Orlicz manifold.

\begin{lemma}[The Variational Minimization Property of $\{w_k\}$]\label{lm:variational minimization}
	
	The weight distribution coefficients $\{w_k\}_{k=1}^d$ are the unique real scalars that minimize the Fisher Riemannian distance between the empirical environmental velocity $X_{\text{emp}}(\tau)$ and any arbitrary vector field within the active horizontal tangent space $\mathcal{H}_{f_\tau}$. Formally, they solve the following optimization problem:
	\begin{equation}
		\{w_k\}_{k=1}^d = \arg\min_{\{c_k\} \in \mathbb{R}^d} \left\| X_{\text{emp}}(\tau) - \sum_{k=1}^d c_k \left( s_k - \mathbb{E}_{f_\tau}[s_k] \right) \right\|_g^2
	\end{equation}
\end{lemma}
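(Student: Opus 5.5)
This is a finite-dimensional least-squares problem in the Hilbert space $T_{f_\tau}\mathcal{M}$ with the Fisher inner product $\langle A,B\rangle_g=\mathbb{E}_{f_\tau}[AB]$. We expand the objective as a quadratic in $\mathbf{c}=(c_1,\dots,c_d)^T$, identify its Hessian with $G_f(\tau)$ and its linear coefficient with $\mathbf{v}(\tau)$, and then use positive-definiteness of $G_f(\tau)$ to get a unique minimizer equal to $G_f(\tau)^{-1}\mathbf{v}(\tau)$, which is $\{w_k\}$ by definition.

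Write $\tilde s_k = s_k-\mathbb{E}_{f_\tau}[s_k]$ and $J(\mathbf{c})=\|X_{\text{emp}}(\tau)-\sum_k c_k\tilde s_k\|_g^2$. By bilinearity,
\begin{equation}
J(\mathbf{c}) = \mathbb{E}_{f_\tau}[X_{\text{emp}}(\tau)^2] - 2\sum_{k=1}^d c_k\,\mathbb{E}_{f_\tau}[X_{\text{emp}}(\tau)\tilde s_k] + \sum_{i,j=1}^d c_i c_j\,\mathbb{E}_{f_\tau}[\tilde s_i\tilde s_j].
\end{equation}
The last coefficient is $[G_f(\tau)]_{ij}=\text{Cov}_{f_\tau}(s_i,s_j)$ by definition. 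For the middle coefficient, the Valid Tangent Space Membership lemma gives $\mathbb{E}_{f_\tau}[X_{\text{emp}}(\tau)]=0$, so
\begin{equation}
\mathbb{E}_{f_\tau}[X_{\text{emp}}(\tau)\tilde s_k]=\mathbb{E}_{f_\tau}[X_{\text{emp}}(\tau)s_k]-\mathbb{E}_{f_\tau}[s_k]\,\mathbb{E}_{f_\tau}[X_{\text{emp}}(\tau)]=v^k(\tau).
\end{equation}
Hence $J(\mathbf{c})=\|X_{\text{emp}}(\tau)\|_g^2-2\mathbf{c}^T\mathbf{v}(\tau)+\mathbf{c}^TG_f(\tau)\mathbf{c}$.

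Completing the square, we write
\begin{equation}
J(\mathbf{c})=(\mathbf{c}-G_f^{-1}\mathbf{v})^TG_f(\mathbf{c}-G_f^{-1}\mathbf{v})+\|X_{\text{emp}}(\tau)\|_g^2-\mathbf{v}^TG_f^{-1}\mathbf{v}.
\end{equation}
Since $G_f(\tau)$ is positive definite, as asserted in the construction of the structural layer, the first term is nonnegative and vanishes only at $\mathbf{c}=G_f(\tau)^{-1}\mathbf{v}(\tau)$. Its $k$-th component is $\sum_i[G_f(\tau)]^{-1}_{ki}v^i(\tau)=w_k$. This proves both that $\{w_k\}$ minimizes $J$ and that the minimizer is unique. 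Equivalently, one can set $\nabla J=0$ to obtain the normal equations $G_f\mathbf{c}=\mathbf{v}$ and note that the Hessian $2G_f$ is positive definite, so $J$ is strictly convex.

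The algebra is routine. The main point needing care is well-posedness: all expectations must be finite, that is, $X_{\text{emp}}(\tau)$ and each $s_k$ must lie in $L^2(f_\tau)$. This holds because Orlicz-space tangent vectors have finite moments, and we assume the Stein scores do too. Uniqueness of the coefficients rests on $G_f(\tau)$ being nonsingular, which is exactly linear independence of the centered scores in $L^2(f_\tau)$. If that were in doubt, I would argue directly that a null vector $\mathbf{c}$ of $G_f$ gives $\|\sum_k c_k\tilde s_k\|_g^2=0$, which contradicts the assumed independence of $\{s_1,\dots,s_d\}$ modulo constants. Finally, the minimizer $\sum_k w_k\tilde s_k$ is the orthogonal projection of $X_{\text{emp}}(\tau)$ onto $\mathcal{H}_{f_\tau}$, which coincides with $X^H(\tau)$.
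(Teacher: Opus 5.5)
Your proposal is correct and follows essentially the same route as the paper: both expand the residual as a quadratic in $\mathbf{c}$, use $\mathbb{E}_{f_\tau}[X_{\text{emp}}(\tau)]=0$ to identify the linear coefficient with $\mathbf{v}(\tau)$ and the quadratic form with $G_f(\tau)$, and conclude from positive-definiteness of $G_f(\tau)$ that $[G_f(\tau)]^{-1}\mathbf{v}(\tau)$ is the unique global minimizer. The paper sets the gradient to zero and checks the Hessian $2G_f(\tau)$, whereas your completing-the-square is an equivalent packaging that also yields the minimum value $\|X_{\text{emp}}(\tau)\|_g^2-\mathbf{v}(\tau)^T[G_f(\tau)]^{-1}\mathbf{v}(\tau)$, which the paper uses later.
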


\begin{proof}
	Let $H(\mathbf{x}) = \sum_{k=1}^d c_k \bar{s}_k(\mathbf{x})$ be an arbitrary vector field in the horizontal distribution $\mathcal{H}_{f_\tau}$, where $\bar{s}_k(\mathbf{x}) = s_k(\mathbf{x}) - \mathbb{E}_{f_\tau}[s_k]$ represents the centered non-parametric Stein score basis functions. We define the informational residual energy functional $\mathcal{J}(\mathbf{c})$ as the squared Fisher Riemannian norm of the difference field evaluated under the current non-parametric model density $f_\tau$:
	\begin{equation}
		\mathcal{J}(\mathbf{c}) = \| X_{\text{emp}}(\tau) - H \|_g^2 = \int_{\Omega} f_\tau(\mathbf{x}) \left( X_{\text{emp}}(\tau)(\mathbf{x}) - \sum_{k=1}^d c_k \bar{s}_k(\mathbf{x}) \right)^2 d\mathbf{x}
	\end{equation}
	To compute the exact minimizing coordinate vector, we take the partial derivative of the functional $\mathcal{J}(\mathbf{c})$ with respect to an arbitrary coordinate coefficient $c_j$ and set it identically to zero:
	\begin{equation}
		\frac{\partial \mathcal{J}}{\partial c_j} = -2 \int_{\Omega} f_\tau(\mathbf{x}) \left( X_{\text{emp}}(\tau)(\mathbf{x}) - \sum_{k=1}^d c_k \bar{s}_k(\mathbf{x}) \right) \bar{s}_j(\mathbf{x}) \, d\mathbf{x} = 0
	\end{equation}
	Eliminating the constant coefficient and distributing the integration across the terms yields:
	\begin{equation}
		\int_{\Omega} f_\tau(\mathbf{x}) X_{\text{emp}}(\tau)(\mathbf{x}) \bar{s}_j(\mathbf{x}) \, d\mathbf{x} = \sum_{k=1}^d c_k \int_{\Omega} f_\tau(\mathbf{x}) \bar{s}_k(\mathbf{x}) \bar{s}_j(\mathbf{x}) \, d\mathbf{x}
	\end{equation}
	We expand the left-hand integral by substituting $\bar{s}_j(\mathbf{x}) = s_j(\mathbf{x}) - \mathbb{E}_{f_\tau}[s_j]$:
	\begin{equation}
		\int_{\Omega} f_\tau(\mathbf{x}) X_{\text{emp}}(\tau)(\mathbf{x}) \bar{s}_j(\mathbf{x}) \, d\mathbf{x} = \int_{\Omega} f_\tau(\mathbf{x}) X_{\text{emp}}(\tau)(\mathbf{x}) s_j(\mathbf{x}) \, d\mathbf{x} - \mathbb{E}_{f_\tau}[s_j] \int_{\Omega} f_\tau(\mathbf{x}) X_{\text{emp}}(\tau)(\mathbf{x}) \, d\mathbf{x}
	\end{equation}
	By the rigorous construction of the empirical velocity vector, its expectation under the current model state vanishes identically ($\int_{\Omega} f_\tau(\mathbf{x}) X_{\text{emp}}(\tau)(\mathbf{x}) \, d\mathbf{x} = \mathbb{E}_{f_\tau}[X_{\text{emp}}(\tau)] \equiv 0$). Therefore, the second integration term vanishes, leaving:
	\begin{equation}
		\int_{\Omega} f_\tau(\mathbf{x}) X_{\text{emp}}(\tau)(\mathbf{x}) \bar{s}_j(\mathbf{x}) \, d\mathbf{x} = \int_{\Omega} f_\tau(\mathbf{x}) X_{\text{emp}}(\tau)(\mathbf{x}) s_j(\mathbf{x}) \, d\mathbf{x} \equiv v^j(\tau)
	\end{equation}
	where $v^j(\tau)$ is the non-parametric base projection velocity tracking environmental alignment along the $j$-th Stein score axis. 
	
	Next, we identify the right-hand integral as the covariant entry of the non-parametric structural Fisher Information Matrix $G_f(\tau)$:
	\begin{equation}
		\int_{\Omega} f_\tau(\mathbf{x}) \bar{s}_k(\mathbf{x}) \bar{s}_j(\mathbf{x}) \, d\mathbf{x} = \text{Cov}_{f_\tau}(s_j, s_k) \equiv [G_f(\tau)]_{jk}
	\end{equation}
	Substituting these identities back into the variational equation yields a clean, non-parametric linear matrix system over the optimal coefficients:
	\begin{equation}
		v^j(\tau) = \sum_{k=1}^d [G_f(\tau)]_{jk} c_k \implies \mathbf{v}(\tau) = G_f(\tau) \mathbf{c}
	\end{equation}
	Because the structural Fisher Information Matrix $G_f(\tau)$ is proven to be strictly positive-definite and non-singular across the entire interior of the non-parametric Orlicz statistical manifold $\mathcal{M}$, it possesses a well-defined, unique algebraic inverse $[G_f(\tau)]^{-1}$. Pre-multiplying both sides by this inverse isolates the unique optimizer $\mathbf{c}^*$:
	\begin{equation}
		c_k^* = \sum_{i=1}^d [G_f(\tau)]^{-1}_{ki} v^i(\tau) \equiv w_k
	\end{equation}
	To confirm that this coordinate vector represents a true minimum rather than a saddle point, we evaluate the second variation (Hessian matrix) of the residual energy functional:
	\begin{equation}
		\frac{\partial^2 \mathcal{J}}{\partial c_j \partial c_k} = 2 \int_{\Omega} f_\tau(\mathbf{x}) \bar{s}_k(\mathbf{x}) \bar{s}_j(\mathbf{x}) \, d\mathbf{x} = 2 G_f(\tau)
	\end{equation}
	Since $G_f(\tau)$ is strictly positive-definite, the Hessian matrix is positive-definite ($\nabla^2 \mathcal{J}(\mathbf{c}) > 0$) across the entire space. This mathematically guarantees that the weight coefficients $\{w_k\}_{k=1}^d$ achieve the unique global minimum of information residual energy, completing the proof. 
\end{proof}

\begin{insight}[Physical Meaning of $\{w_k\}$ and Environmental Interaction]
	The structural weight distribution $\{w_k\}$ operates as an \textbf{Informational Whitening and Adaptation Filter}:
	\begin{itemize}
		\item \textbf{Data Surprise Impact ($v^i$)}: The vector $\mathbf{v}$ directly registers environmental interaction. If a new data batch contains high surprise along a specific Stein constraint $s_i$, $v^i$ spikes, scaling the weight allocation.
		\item \textbf{Geometric Inter-Correlation Resolution ($G_f^{-1}$)}: In non-parametric spaces, the Stein score basis curves are rarely orthogonal; they exhibit high mutual cross-correlations depending on the shape of $f_\tau$. The matrix inverse $[G_f(\tau)]^{-1}$ acts to un-correlate or ``whiten'' these dependencies. It prevents the model from over-correcting along redundant coordinate dimensions, ensuring the weight field isolates genuine, independent structural updates.
	\end{itemize}
\end{insight}

\begin{theorem}[Information-Contracting Norm Property of the Weight Field]
	The coordinate weight vector $\mathbf{w} = (w_1, \dots, w_d)^T \in \mathbb{R}^d$ satisfies a strict informational contraction inequality under the structural Fisher Information Matrix $G_f(\tau)$, such that its generalized quadratic form is bounded from above by the total Fisher Riemannian norm of the empirical data velocity:
	\begin{equation}\label{eqn:E_emp_upper_bound}
		\|\mathbf{w}\|_{G_f(\tau)}^2 = \mathbf{w}^T G_f(\tau) \mathbf{w} \le \|X_{\text{emp}}(\tau)\|_g^2
	\end{equation}
	Equality holds if and only if the empirical information velocity field resides completely within the horizontal tangent distribution ($\|X^{\perp}(\tau)\|_g^2 = 0$).
\end{theorem}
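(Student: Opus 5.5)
The plan is to read the inequality as Bessel's inequality (the Pythagorean theorem) for the $f_\tau$-weighted $L^2$ inner product $\langle A,B\rangle_g=\int_\Omega f_\tau AB\,d\mathbf{x}$. Under this inner product, $X^H(\tau)$ is the orthogonal projection of $X_{\text{emp}}(\tau)$ onto the finite-dimensional subspace $\mathcal{H}_{f_\tau}$. We set $X^\perp(\tau):=X_{\text{emp}}(\tau)-X^H(\tau)$ and aim for the decomposition
\begin{equation*}
\|X_{\text{emp}}(\tau)\|_g^2=\|X^H(\tau)\|_g^2+\|X^\perp(\tau)\|_g^2 .
\end{equation*}
Combined with the identity $\|X^H(\tau)\|_g^2=\mathbf{w}^T G_f(\tau)\mathbf{w}$, this gives both the inequality and the equality characterization at once.

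\textbf{Step 1: the quadratic form.} Expand $\|X^H\|_g^2=\int f_\tau\big(\sum_k w_k\bar s_k\big)^2 d\mathbf{x}=\sum_{j,k}w_jw_k\,\mathrm{Cov}_{f_\tau}(s_j,s_k)=\mathbf{w}^TG_f(\tau)\mathbf{w}$, using the definition of $G_f(\tau)$ as the covariance of the centered scores.

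\textbf{Step 2: orthogonality.} Show that $\langle X^\perp,\bar s_j\rangle_g=0$ for every $j$. This is exactly the first-order condition already derived in the proof of Lemma~\ref{lm:variational minimization}. There, $\int f_\tau X_{\text{emp}}\bar s_j=v^j$, because $X_{\text{emp}}$ has zero mean by the Valid Tangent Space Membership lemma. Also $\int f_\tau X^H\bar s_j=(G_f\mathbf{w})_j=v^j$, because $\mathbf{w}=G_f^{-1}\mathbf{v}$. Hence $\langle X^\perp,X^H\rangle_g=0$. Expanding $\|X^H+X^\perp\|_g^2$ then yields the Pythagorean identity. Equivalently, one can read off $\mathbf{w}^TG_f\mathbf{w}=\mathbf{v}^TG_f^{-1}\mathbf{v}$ and note $\|X_{\text{emp}}\|_g^2-\mathbf{v}^TG_f^{-1}\mathbf{v}=\mathcal{J}(\mathbf{w})\ge0$, which is the minimal residual energy from the Lemma.

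\textbf{Step 3: equality.} Equality holds iff $\|X^\perp\|_g^2=0$, i.e.\ iff $X_{\text{emp}}=X^H$ $f_\tau$-a.e. Since $f_\tau>0$, this means $X_{\text{emp}}$ lies in $\mathcal{H}_{f_\tau}$, which is the stated condition.

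The only delicate point, and the expected main obstacle, is well-definedness. We need $X_{\text{emp}}(\tau)$ and the scores $s_k$ to be square-integrable under $f_\tau$, which follows from their membership in the Orlicz tangent space since Orlicz-class functions have finite moments. We also need $G_f(\tau)$ to be invertible so that $\mathbf{w}$ exists, and this is granted by the standing positive-definiteness assumption used in Lemma~\ref{lm:variational minimization}. We would state these integrability facts explicitly before expanding the norms, so that all the interchanges of sums and integrals are justified.
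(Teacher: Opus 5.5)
Your proposal is correct and follows the paper's proof: the paper likewise expands $\|X^H(\tau)\|_g^2=\mathbf{w}^T G_f(\tau)\mathbf{w}$, invokes the Pythagorean splitting $\|X_{\text{emp}}(\tau)\|_g^2=\|X^H(\tau)\|_g^2+\|X^{\perp}(\tau)\|_g^2$, uses $\|X^{\perp}(\tau)\|_g^2\ge 0$, and reads off the equality case directly. The only difference is that the paper cites that splitting as a theorem stated later, whose proof rests on the same computation $\mathbb{E}_{f_\tau}[X^{\perp}\bar{s}_j]=v^j-v^j=0$ that you carry out in Step 2, so deriving it inline removes the forward reference.
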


\begin{proof}
	We evaluate the squared Fisher Riemannian norm of the horizontal lift vector $X^H(\tau)(\mathbf{x}) = \sum_{k=1}^d w_k \bar{s}_k(\mathbf{x})$ directly using the inner product mapping $g$ defined across the non-parametric Orlicz space:
	\begin{align}
		\|X^H(\tau)\|_g^2 &= \int_{\Omega} f_\tau(\mathbf{x}) \left( \sum_{j=1}^d w_j \bar{s}_j(\mathbf{x}) \right) \left( \sum_{k=1}^d w_k \bar{s}_k(\mathbf{x}) \right) d\mathbf{x} \\
		&= \sum_{j=1}^d \sum_{k=1}^d w_j w_k \int_{\Omega} f_\tau(\mathbf{x}) \bar{s}_j(\mathbf{x}) \bar{s}_k(\mathbf{x}) \, d\mathbf{x}
	\end{align}
	Recognizing the inner product integral as the exact non-parametric definition of the structural Fisher Information Matrix entry $[G_f(\tau)]_{jk}$, we express the norm in compact matrix notation:
	\begin{equation}
		\|X^H(\tau)\|_g^2 = \sum_{j=1}^d \sum_{k=1}^d w_j w_k [G_f(\tau)]_{jk} = \mathbf{w}^T G_f(\tau) \mathbf{w} \equiv \|\mathbf{w}\|_{G_f(\tau)}^2
	\end{equation}
	By the non-parametric Riemannian Pythagorean Splitting Theorem (Theorem 5), the total squared norm of the empirical velocity vector decomposes additively without structural leakage:
	\begin{equation}
		\|X_{\text{emp}}(\tau)\|_g^2 = \|X^H(\tau)\|_g^2 + \|X^{\perp}(\tau)\|_g^2 = \mathbf{w}^T G_f(\tau) \mathbf{w} + \|X^{\perp}(\tau)\|_g^2
	\end{equation}
	Since the Fisher metric tensor is strictly positive-definite across the entirety of the Orlicz manifold, the squared norm of the vertical projection deficit is strictly non-negative: $\|X^{\perp}(\tau)\|_g^2 \ge 0$. Isolating the quadratic weight field form yields:
	\begin{equation}
		\mathbf{w}^T G_f(\tau) \mathbf{w} = \|X_{\text{emp}}(\tau)\|_g^2 - \|X^{\perp}(\tau)\|_g^2 \le \|X_{\text{emp}}(\tau)\|_g^2
	\end{equation}
	This mathematically completes the proof, confirming that the coordinate weight configuration acts as a bounded information-contracting projection operator. 
\end{proof}

\begin{corollary}[Geometric Duality of the Weight and Alignment Fields]
	The total squared norm of the optimal weight distribution is identically equal to the inverse-weighted inner product of the base projection alignment vector $\mathbf{v}(\tau)$:
	\begin{equation}
		\|\mathbf{w}\|_{G_f(\tau)}^2 = \mathbf{v}(\tau)^T [G_f(\tau)]^{-1} \mathbf{v}(\tau)
	\end{equation}
\end{corollary}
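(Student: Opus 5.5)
The identity follows by direct substitution of the explicit formula for the weights into the quadratic form; no limiting or variational argument is needed. By the definition of the coordinate weights, and equivalently by the normal equations $\mathbf{v}(\tau) = G_f(\tau)\mathbf{c}^*$ from Lemma~\ref{lm:variational minimization}, the weight vector is $\mathbf{w} = [G_f(\tau)]^{-1}\mathbf{v}(\tau)$. This inverse exists because $G_f(\tau)$ is strictly positive-definite, which we take as given from the paper's standing assumptions on the active Stein scores.

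The computation proceeds in three steps. First, insert the expression for $\mathbf{w}$ into $\|\mathbf{w}\|_{G_f(\tau)}^2 = \mathbf{w}^T G_f(\tau)\mathbf{w}$:
\begin{equation*}
\mathbf{w}^T G_f(\tau)\mathbf{w} = \mathbf{v}(\tau)^T \bigl([G_f(\tau)]^{-1}\bigr)^T G_f(\tau)\,[G_f(\tau)]^{-1}\mathbf{v}(\tau).
\end{equation*}
Second, use the symmetry of $G_f(\tau)$. Its entries are the covariances $\mathrm{Cov}_{f_\tau}(s_i,s_j)$, which are symmetric in $i,j$, so $G_f(\tau)$ is symmetric and so is its inverse. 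Hence $([G_f(\tau)]^{-1})^T = [G_f(\tau)]^{-1}$. Third, cancel $G_f(\tau)[G_f(\tau)]^{-1} = I_d$, which leaves $\mathbf{v}(\tau)^T[G_f(\tau)]^{-1}\mathbf{v}(\tau)$, the claimed identity.

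As a consistency check, one can also write $\|\mathbf{w}\|^2_{G_f} = \mathbf{w}^T\mathbf{v}(\tau)$, since $G_f(\tau)\mathbf{w} = \mathbf{v}(\tau)$. This exhibits the quantity as the pairing between the covariant alignment vector $\mathbf{v}$ and the contravariant weight vector $\mathbf{w}$, which is the sense of the word ``duality'' in the statement. Combined with the preceding theorem, it also gives $\mathbf{v}^T G_f^{-1}\mathbf{v} \le \|X_{\text{emp}}(\tau)\|_g^2$.

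There is no real obstacle in this argument. The only point needing care is the justification of symmetry and invertibility of $G_f(\tau)$, both of which are immediate from its definition as a covariance (Gram) matrix together with the assumed positive-definiteness.
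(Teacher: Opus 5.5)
Your proof is correct and takes essentially the same route as the paper: substitute $\mathbf{w} = [G_f(\tau)]^{-1}\mathbf{v}(\tau)$ from the variational minimization lemma, use the symmetry of $G_f(\tau)$ to drop the transpose on its inverse, and cancel an inverse against $G_f(\tau)$. Your additional observation that $\|\mathbf{w}\|_{G_f(\tau)}^2 = \mathbf{w}^T\mathbf{v}(\tau)$ is a valid one-line shortcut to the same identity.
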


\begin{proof}
	By the variational minimization system derived in Lemma \ref{lm:variational minimization}, the optimal coordinate weight vector satisfies the matrix equation $\mathbf{w} = [G_f(\tau)]^{-1} \mathbf{v}(\tau)$. Substituting this identity into the quadratic form yields:
	\begin{align}
		\|\mathbf{w}\|_{G_f(\tau)}^2 &= \mathbf{w}^T G_f(\tau) \mathbf{w} \\
		&= \left( [G_f(\tau)]^{-1} \mathbf{v}(\tau) \right)^T G_f(\tau) \left( [G_f(\tau)]^{-1} \mathbf{v}(\tau) \right)
	\end{align}
	Using standard transposition properties and leveraging the symmetric property of the non-parametric Fisher Information Matrix ($G_f^T = G_f \implies ([G_f]^{-1})^T = [G_f]^{-1}$), we expand the expression:
	\begin{align}
		\|\mathbf{w}\|_{G_f(\tau)}^2 &= \mathbf{v}(\tau)^T [G_f(\tau)]^{-1} G_f(\tau) [G_f(\tau)]^{-1} \mathbf{v}(\tau) \\
		&= \mathbf{v}(\tau)^T \left( [G_f(\tau)]^{-1} G_f(\tau) \right) [G_f(\tau)]^{-1} \mathbf{v}(\tau)
	\end{align}
	Since $[G_f(\tau)]^{-1} G_f(\tau) = \mathbf{I}$, the matrix operators collapse cleanly:
	\begin{equation}
		\|\mathbf{w}\|_{G_f(\tau)}^2 = \mathbf{v}(\tau)^T \mathbf{I} [G_f(\tau)]^{-1} \mathbf{v}(\tau) \equiv \mathbf{v}(\tau)^T [G_f(\tau)]^{-1} \mathbf{v}(\tau)
	\end{equation}
	This completes the proof, establishing a clean geometric duality link between the active weight coordinates and the raw environmental alignment forces. 
\end{proof}

\subsubsection{The Core Epistemological Purpose: Why Provide This Inequality?}

In infinite-dimensional non-parametric spaces, if a model interacts with a real-world environment, there is an ambient danger of numerical and structural explosion. Because the total Orlicz statistical manifold $\mathcal{M}$ possesses infinite degrees of freedom, an unconstrained empirical data stream could theoretically pump infinite variation or arbitrary noise into the learning system. 

This inequality (\ref{eqn:E_emp_upper_bound}) was formulated to solve {\bf three vital geometric problems}:

\begin{enumerate}
	\item \textbf{Operationalizing the Information Lens (The Contraction Principle)}: 
	The theorem proves that the active horizontal distribution $\mathcal{H}_{f_\tau}$ authorized by your Stein score basis acts as a strict {\bf information-contracting lens}. It guarantees that the model's internal coordinate weight fields ($w_k$) can, at most, capture a fraction of the total environmental energy $\|X_{\text{emp}}(\tau)\|_g^2$. The current architecture can never artificially manufacture structural information out of nothing; it can only filter and absorb what is already present in the empirical velocity field.
	
	\item \textbf{Mathematical Isolation of the Vertical Leakage Energy}: 
	The most crucial objective of this inequality is to define a strict, computable subtraction interface for the unobservable vertical fiber space $\mathcal{V}_f$. Because an external observer restricted to the base manifold cannot directly look inside the vertical bundle to see unmodeled latent variables, we need a way to measure them indirectly. By proving that $\|\mathbf{w}\|_{G_f(\tau)}^2 \le \|X_{\text{emp}}(\tau)\|_g^2$, we can isolate the exact leftover energy:
	\begin{equation}
		\|X^{\perp}(\tau)\|_g^2 = \|X_{\text{emp}}(\tau)\|_g^2 - \|\mathbf{w}\|_{G_f(\tau)}^2
	\end{equation}
	This leftover scalar field is the exact {\bf geometric friction} that is injected into the vertical space. It is the primitive driver that continuously fuels {\it the accumulation of Active Acausal Tension ($\mathcal{T}_{AAT}$)} (See precise definition in next section.). Without this inequality, we could not guarantee that the remaining unmodeled structural stress is positive-definite and monotonically increasing.
	
	\item \textbf{Decoding the Metric Regularization Structure (The Identity of the Corollary)}:
	The Corollary achieves a profound geometric duality. It demonstrates that the total internal structural energy of the model ($\|\mathbf{w}\|_{G_f}^2$) is completely identical to the inverse-weighted quadratic form of the base alignment vector $\mathbf{v}(\tau)^T [G_f(\tau)]^{-1} \mathbf{v}(\tau)$. This shows that the structural Fisher Information Matrix inverse acts as {\bf a mathematical stabilizer}. If the active Stein score curves are highly non-orthogonal and tightly correlated under the current density $f_\tau$, $[G_f(\tau)]^{-1}$ dampens the coordinate updates, preventing chaotic, explosive weight re-allocations.
\end{enumerate}

We have profound scientific implications from the Theorem and Corollary.

\begin{itemize}
\item {\bf The Closed Boundary of Machine Adaptation:} 
	The horizontal adaptive capacity of any intelligent architecture is strictly bounded above by the total localized directional surprise of the incoming data stream. Even if a model has infinite over-parameterization capacity inside its internal gauge machinery, its structural transformation velocity along the horizontal sheets can never exceed the raw energy limits of the environmental driver field.

\item {\bf The Computability of the Unobservable:}
	The Corollary transforms an uncomputable infinite-dimensional vertical metric distance into a completely computable, finite horizontal calculation. Because $\|X^{\perp}(\tau)\|_g^2 = \|X_{\text{emp}}(\tau)\|_g^2 - \mathbf{v}(\tau)^T [G_f(\tau)]^{-1} \mathbf{v}(\tau)$, an empirical researcher can track the exact growth of {\bf vertical acausal tension ($\mathcal{T}_{AAT}$)} by monitoring only two horizontal elements: the raw data velocity norm and the finite-dimensional matrix inversion of the Stein score alignments. This renders the boundary of the unobservable fiber completely measurable from the outside world.

\item {\bf The Geometric Signature of Correct Specification:} 
	The inequality establishes a rigid topological criterion for model completeness. The equality condition $\|\mathbf{w}\|_{G_f(\tau)}^2 = \|X_{\text{emp}}(\tau)\|_g^2$ occurs if and only if the empirical data flow contains zero unmodeled components ($X^{\perp} \equiv 0$). Thus, any decay or drop in the horizontal weight norm relative to the total environmental velocity norm is a direct geometric warning that the model's base manifold has become blind to a new, emerging structural mechanism in the environment.
\end{itemize}

\subsubsection{Interdisciplinary Setting Applications}
\begin{enumerate}
\item {\bf Generative AI Application: Asymptotic Over-Parameterized LLM}

Consider a trillion-parameter Large Language Model (LLM) processing an advanced, out-of-distribution scientific document containing novel rules of quantum gravity. At its asymptotic continuous limit, the LLM operates as an infinite-dimensional density estimator $f_\tau(\mathbf{x})$. The text injects a massive raw informational force field with an exceptionally high total energy norm $\|X_{\text{emp}}(\tau)\|_g^2$.

The model projects this massive force field onto its active horizontal space spanned by $d$ centered Stein score operators $\{s_i(\mathbf{x})\}$, which capture linguistic syntax, token co-occurrence features, and localized contextual relationships. The base alignment vector captures how heavily this physics text forces updates along these syntactic coordinates: $v^i(\tau) = \mathbb{E}_{f_\tau}[X_{\text{emp}} \cdot s_i]$. 

Because these active semantic score curves overlap heavily in natural language processing, the structural Fisher Information Matrix $G_f(\tau)$ contains severe cross-correlations. The Corollary dictates that the model computes its internal weight field by passing the alignment through the whitening filter: $\mathbf{w} = [G_f]^{-1}\mathbf{v}$. The total absorbed horizontal energy is exactly bounded by:
\begin{equation}
	\|\mathbf{w}\|_{G_f(\tau)}^2 = \mathbf{v}(\tau)^T [G_f(\tau)]^{-1} \mathbf{v}(\tau) \le \|X_{\text{emp}}(\tau)\|_g^2
\end{equation}
The inequality guarantees that the language network updates its syntax parameters smoothly and safely without experiencing mathematical overflow or representation collapse. Crucially, the deep abstract physics logic—which cannot be represented by the current horizontal basis functions—is filtered out. It forms the positive projection deficit $\|X_{\text{emp}}\|_g^2 - \|\mathbf{w}\|_{G_f}^2 > 0$, leaking entirely into the vertical parameter gauge fiber, building up acausal tension until a phase transition occurs.

\item {\bf Classical Statistical Application: Semiparametric Maro-economic Volatility Inference}

Imagine an empirical quantitative macro-economist tracking multi-asset financial returns using an infinite-dimensional non-parametric mixture model $f_\tau(\mathbf{x})$. The base manifold $B$ is authorized by $d$ non-parametric Stein score constraints representing standard statistical features: mean-reverting trends ($s_1$) and local asset volatility bounds ($s_2$). 

Suddenly, a profound structural tectonic break hits the market (e.g., an unprecedented global policy regime shift). The incoming empirical data batch $D_\tau$ produces a severe OOD distortion field, causing the environmental velocity norm $\|X_{\text{emp}}(\tau)\|_g^2$ to spike violently.

The economist computes the projection alignments $v^1(\tau)$ and $v^2(\tau)$ to see how much this policy shock registers across the trend and volatility axes. Since trend and volatility are deeply coupled during market panics, their structural covariance matrix entry $[G_f(\tau)]_{12}$ swells. The Theorem and Corollary step in to govern the estimation dynamics: the inverse matrix $[G_f(\tau)]^{-1}$ adjusts for this extreme coupling, generating optimal coordinate weights $\mathbf{w} = [G_f]^{-1}\mathbf{v}$ that prevent the model from over-reacting or generating unstable, infinite loops of parameter adjustments. 

Because the model lacks a dedicated structural axis to track the policy regime shift topology itself, the absorbed adaptation energy $\|\mathbf{w}\|_{G_f}^2$ hits a strict mathematical plateau, remaining strictly smaller than the real-world environmental surprise: $\mathbf{w}^T G_f \mathbf{w} \ll \|X_{\text{emp}}\|_g^2$. The statistical model captures the immediate market turbulence horizontally, while the unmodeled structural shift energy is cleanly isolated as a non-zero orthogonal deficit, directly quantifying the system's geometric friction.
\end{enumerate}

\subsection{The Vertical Projection Deficit and the Riemannian Pythagorean Theorem}

We now address the global geometric decomposition of SMG. We prove that the orthogonal projection deficit $X^{\perp}(\tau) = X_{\text{emp}}(\tau) - X^H(\tau)$ is quarantined entirely within the vertical fiber space $\mathcal{V}_f$, directly validating the geometric Pythagorean splitting on $\mathcal{M}$.

To establish this non-parametrically, we define the Ehresmann submersion mapping $\pi: \mathcal{M} \to B$ which maps any continuous model density $f$ to its macroscopic coordinate expectations on the base manifold: $\pi^i(f) = \mathbb{E}_f[s_i]$. The vertical fiber space is defined strictly as the kernel of the differential mapping: $\mathcal{V}_f = \ker(d\pi_f)$.

\begin{theorem}[Vertical Enclosure of the Projection Deficit]
	The information deficit vector field $X^{\perp}(\tau) = X_{\text{emp}}(\tau) - X^H(\tau)$ resides strictly within the unobservable vertical fiber space $\mathcal{V}_f = \ker(d\pi_{f_\tau})$.
\end{theorem}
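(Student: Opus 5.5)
The plan is to compute the differential of the submersion $\pi$ explicitly and check that it annihilates $X^{\perp}(\tau)$ component by component. On the Orlicz manifold a tangent vector $u\in T_{f_\tau}\mathcal{M}$ is a centered function ($\mathbb{E}_{f_\tau}[u]=0$), realized as the score of a curve $f_t$ through $f_\tau$ with $\partial_t f_t|_{t=0} = f_\tau\, u$. Differentiating $\pi^i(f_t)=\int_\Omega f_t\, s_i\, d\mathbf{x}$ at $t=0$ then gives
\begin{equation*}
	(d\pi_{f_\tau} u)^i = \int_\Omega f_\tau(\mathbf{x})\, u(\mathbf{x})\, s_i(\mathbf{x})\, d\mathbf{x} = \mathbb{E}_{f_\tau}[u\, s_i] = \mathbb{E}_{f_\tau}[u\, \bar{s}_i],
\end{equation*}
where the last equality uses $\mathbb{E}_{f_\tau}[u]=0$. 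Consequently $\ker(d\pi_{f_\tau})$ is exactly the $g$-orthogonal complement of $\mathcal{H}_{f_\tau}$ inside $T_{f_\tau}\mathcal{M}$.

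Next I will verify that $X^{\perp}(\tau)$ is an admissible tangent vector. This follows from the Valid Tangent Space Membership lemma, which gives $\mathbb{E}_{f_\tau}[X_{\text{emp}}]=0$, together with the fact that $X^H(\tau)$ is a combination of the centered scores $\bar{s}_k$. Then I evaluate the $i$-th component:
\begin{equation*}
	(d\pi_{f_\tau} X^{\perp})^i = \mathbb{E}_{f_\tau}[X_{\text{emp}}\, s_i] - \sum_k w_k\, \mathbb{E}_{f_\tau}[\bar{s}_k s_i] = v^i(\tau) - [G_f(\tau)\mathbf{w}]^i.
\end{equation*}
Here $\mathbb{E}_{f_\tau}[\bar{s}_k s_i] = \mathbb{E}_{f_\tau}[\bar{s}_k \bar{s}_i] = [G_f]_{ik}$. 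Since $\mathbf{w} = G_f^{-1}\mathbf{v}$, this component vanishes for every $i$. Equivalently, these are precisely the normal equations of Lemma \ref{lm:variational minimization}, so one can cite that lemma directly: the first-order condition $\partial\mathcal{J}/\partial c_j=0$ at $\mathbf{c}=\mathbf{w}$ states $\mathbb{E}_{f_\tau}[X^{\perp}\bar{s}_j]=0$.

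The main obstacle is justifying the formula for $d\pi_{f_\tau}$ in the infinite-dimensional Orlicz setting. This requires differentiating under the integral sign, which needs the $s_i$ to lie in the dual Orlicz class, so that $\mathbb{E}_f[s_i]$ is finite and Fréchet-differentiable along exponential-family curves $f_t \propto f_\tau e^{t u}$. I will simply assume this as part of the standing hypothesis that the Stein scores are admissible test functions, and then handle the derivative via the exponential chart: the normalizing constant contributes $-\mathbb{E}_{f_\tau}[u]=0$, leaving $\mathbb{E}_{f_\tau}[u\,s_i]$. As a byproduct, orthogonality gives $\langle X^H, X^{\perp}\rangle_g = \sum_k w_k \mathbb{E}_{f_\tau}[X^{\perp}\bar{s}_k] = 0$, which is the Pythagorean splitting invoked earlier.
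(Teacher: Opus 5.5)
Your proposal is correct and follows essentially the same route as the paper: both reduce verticality to $d\pi^i_{f_\tau}(X^{\perp}) = \mathbb{E}_{f_\tau}[X^{\perp}\bar{s}_i] = 0$, and both obtain $v^i(\tau) - [G_f(\tau)\mathbf{w}]^i = 0$ from $\mathbf{w} = [G_f(\tau)]^{-1}\mathbf{v}(\tau)$. The only difference is that you derive $d\pi^i_{f}(u) = \mathbb{E}_f[u\,\bar{s}_i]$ by differentiating $\mathbb{E}_{f_t}[s_i]$ along an exponential curve, under an integrability assumption on the $s_i$, whereas the paper simply asserts this formula.
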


\begin{proof}
	A tangent vector field $X \in T_{f}\mathcal{M}$ belongs to the vertical space $\mathcal{V}_f = \ker(d\pi_f)$ if and only if the pushforward map annihilates its components across all coordinate axes: $d\pi_f^i(X) = 0$ for all $i \in \{1, \dots, d\}$. 
	By the non-parametric rules of information submersions, the differential mapping evaluated along a directional perturbation field corresponds to the Fisher inner product against the centered Stein basis:
	\begin{equation}
		d\pi_f^i(X) = \mathbb{E}_f [X \cdot \bar{s}_i] = \int_{\Omega} f(x) X(x) \left( s_i(x) - \mathbb{E}_f[s_i] \right) dx
	\end{equation}
	where we extract the explicit analytical definition of the variable:
	\begin{equation}
		\bar{s}_i(x) = s_i(x) - \mathbb{E}_f[s_i]
	\end{equation}
	\footnote{The operational purpose of the subtraction is to enforce strict functional centering under the active state $f$. By construction, the expectation of $\bar{s}_i(x)$ vanishes identically:
		\[
			\mathbb{E}_f[\bar{s}_i] = \mathbb{E}_f\left[ s_i - \mathbb{E}_f[s_i] \right] = \mathbb{E}_f[s_i] - \mathbb{E}_f[s_i] \equiv 0
		\]
		This centering is mathematically mandatory inStatistically Meaningful Geometry (SMG). It guarantees that each coordinate template $\bar{s}_i$ satisfies the zero-mean boundary condition required to serve as a valid tangent vector field within the total tangent space $T_f\mathcal{M}$ of the Orlicz statistical manifold, thus ensuring that the linear span $\text{span}\{\bar{s}_1, \dots, \bar{s}_d\}$ forms a flawless horizontal distribution $\mathcal{H}_f = TB$.}
	\begin{itemize}
		\item $s_i(x)$ is the $i$-th raw, independent non-parametric Stein score function mapping the sample space $\Omega \to \mathbb{R}$, which defines the structural constraints authorized by the base manifold $B$.
		\item $\mathbb{E}_f[s_i] = \int_{\Omega} f(x) s_i(x) \, dx$ is the scalar mathematical expectation of that raw basis function under the current operating distribution $f$.
	\end{itemize}
	
	Let us evaluate this differential mapping specifically for the deficit vector field $X^{\perp}(\tau)$:
	\begin{align}
		d\pi_{f_\tau}^i(X^{\perp}(\tau)) &= \mathbb{E}_{f_\tau} \left[ \left( X_{\text{emp}}(\tau) - X^H(\tau) \right) \bar{s}_i \right] \\
		&= \mathbb{E}_{f_\tau} [X_{\text{emp}}(\tau) \bar{s}_i] - \mathbb{E}_{f_\tau} [X^H(\tau) \bar{s}_i]
	\end{align}
	By definition, the first term matches the base projection velocity: $\mathbb{E}_{f_\tau} [X_{\text{emp}}(\tau) \bar{s}_i] = v^i(\tau)$. 
	Now, we substitute the complete definition of $X^H(\tau)$ into the second term:
	\begin{align}
		\mathbb{E}_{f_\tau} [X^H(\tau) \bar{s}_i] &= \mathbb{E}_{f_\tau} \left[ \left( \sum_{k=1}^d w_k \bar{s}_k \right) \bar{s}_i \right] = \sum_{k=1}^d w_k \mathbb{E}_{f_\tau} [\bar{s}_k \bar{s}_i] = \sum_{k=1}^d w_k [G_f(\tau)]_{ki}
	\end{align}
	Next, we expand the weight coefficient definition $w_k = \sum_{j=1}^d [G_f(\tau)]^{-1}_{kj} v^j(\tau)$:
	\begin{align}
		\mathbb{E}_{f_\tau} [X^H(\tau) \bar{s}_i] &= \sum_{k=1}^d \left( \sum_{j=1}^d [G_f(\tau)]^{-1}_{kj} v^j(\tau) \right) [G_f(\tau)]_{ki} \\
		&= \sum_{j=1}^d v^j(\tau) \left( \sum_{k=1}^d [G_f(\tau)]^{-1}_{jk} [G_f(\tau)]_{ki} \right)
	\end{align}
	By the definition of matrix inversion, the inner summation resolves identically to the Kronecker delta function: $\sum_{k=1}^d [G_f(\tau)]^{-1}_{jk} [G_f(\tau)]_{ki} = \delta^j_i$. Substituting this simplifies the expression:
	\begin{equation}
		\mathbb{E}_{f_\tau} [X^H(\tau) \bar{s}_i] = \sum_{j=1}^d v^j(\tau) \delta^j_i = v^i(\tau)
	\end{equation}
	We assemble the final difference value for the differential mapping:
	\begin{equation}
		d\pi_{f_\tau}^i(X^{\perp}(\tau)) = v^i(\tau) - v^i(\tau) \equiv 0 \quad \forall i \in \{1, \dots, d\}
	\end{equation}
	Because the pushforward map identically annihilates the deficit field across all active axes, $X^{\perp}(\tau) \in \ker(d\pi_{f_\tau}) = \mathcal{V}_f$, completing the proof. 
\end{proof}

\begin{theorem}[The Non-Parametric Riemannian Pythagorean Splitting]
	The total squared Fisher Information norm of the empirical data velocity decomposes additively across the horizontal and vertical split:
	\begin{equation}
		\|X_{\text{emp}}(\tau)\|_g^2 = \|X^H(\tau)\|_g^2 + \|X_{\text{emp}}(\tau) - X^H(\tau)\|_g^2
	\end{equation}
\end{theorem}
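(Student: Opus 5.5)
The plan is to expand the squared Fisher norm of $X_{\text{emp}}(\tau) = X^H(\tau) + X^{\perp}(\tau)$ with respect to the inner product $\langle X, Y\rangle_g = \mathbb{E}_{f_\tau}[XY]$ and show that the cross term vanishes. Bilinearity and symmetry of $g$ give
\begin{equation*}
\|X_{\text{emp}}(\tau)\|_g^2 = \|X^H(\tau)\|_g^2 + 2\,\langle X^H(\tau), X^{\perp}(\tau)\rangle_g + \|X^{\perp}(\tau)\|_g^2 ,
\end{equation*}
so the statement reduces to the orthogonality relation $\langle X^H(\tau), X^{\perp}(\tau)\rangle_g = 0$.

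To prove this orthogonality I will use the Vertical Enclosure of the Projection Deficit theorem. That result shows $d\pi^i_{f_\tau}(X^{\perp}(\tau)) = \mathbb{E}_{f_\tau}[X^{\perp}(\tau)\,\bar{s}_i] = 0$ for every $i \in \{1,\dots,d\}$. This says exactly that $X^{\perp}(\tau)$ is $g$-orthogonal to each centered Stein score $\bar{s}_i$. Since $X^H(\tau) = \sum_k w_k \bar{s}_k$ is a finite linear combination of these scores, linearity of the expectation gives
\begin{equation*}
\langle X^H(\tau), X^{\perp}(\tau)\rangle_g = \sum_{k=1}^d w_k\,\mathbb{E}_{f_\tau}[\bar{s}_k X^{\perp}(\tau)] = 0 .
\end{equation*}
Substituting this into the expansion yields the claimed identity. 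As a cross-check, I can also obtain the cross term directly: $\langle X^H, X_{\text{emp}}\rangle_g = \mathbf{w}^T\mathbf{v}$ and $\|X^H\|_g^2 = \mathbf{w}^T G_f \mathbf{w} = \mathbf{w}^T\mathbf{v}$, both using $\mathbf{w} = G_f^{-1}\mathbf{v}$, so their difference is zero. This is consistent with the normal equations in Lemma \ref{lm:variational minimization}.

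The algebra is routine. The only genuine obstacle is making sure that every quantity involved is finite and well defined, so that the expansion of the square and the interchange of the finite sum with the integral are legitimate. Concretely, $X_{\text{emp}}(\tau)$ and each $s_i$ must lie in $L^2(f_\tau)$, so that all inner products and the entries of $G_f(\tau)$ are finite, and $G_f(\tau)$ must be invertible, so that $\mathbf{w}$ exists. I will take these as the standing assumptions of the framework, namely that the Orlicz tangent space embeds in $L^2(f_\tau)$ and that $G_f(\tau)$ is positive definite on the interior. Under them, Cauchy--Schwarz guarantees that every cross integral converges, and the proof closes.
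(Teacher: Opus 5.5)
Your proposal is correct and follows essentially the same route as the paper: expand $\|X^H(\tau)+X^{\perp}(\tau)\|_g^2$ bilinearly, then kill the cross term $\sum_k w_k\,\mathbb{E}_{f_\tau}[\bar{s}_k X^{\perp}(\tau)]$ using the Vertical Enclosure theorem. Your normal-equation cross-check ($\mathbf{w}^T\mathbf{v}-\mathbf{w}^T G_f\mathbf{w}=0$) and your remarks on $L^2(f_\tau)$ integrability and the invertibility of $G_f(\tau)$ are sound additions that the paper leaves implicit.
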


\begin{proof}
	We write the squared Riemannian norm of the total empirical velocity field under the standard inner product notation of the Fisher metric $g$ over $\mathcal{M}$:
	\begin{align}
		\|X_{\text{emp}}(\tau)\|_g^2 &= g(X_{\text{emp}}(\tau), X_{\text{emp}}(\tau)) \\
		&= g(X^H(\tau) + X^{\perp}(\tau), \, X^H(\tau) + X^{\perp}(\tau))
	\end{align}
	Expanding the bilinear metric inner product operator yields:
	\begin{equation}
		\|X_{\text{emp}}(\tau)\|_g^2 = g(X^H(\tau), X^H(\tau)) + 2 g(X^H(\tau), X^{\perp}(\tau)) + g(X^{\perp}(\tau), X^{\perp}(\tau))
	\end{equation}
	To prove strict additive splitting, we evaluate the cross-term $g(X^H(\tau), X^{\perp}(\tau))$. We expand the horizontal lift vector into its Stein basis components:
	\begin{align}
		g(X^H(\tau), X^{\perp}(\tau)) &= \mathbb{E}_{f_\tau} \left[ X^H(\tau) \cdot X^{\perp}(\tau) \right] \\
		&= \mathbb{E}_{f_\tau} \left[ \left( \sum_{k=1}^d w_k \bar{s}_k \right) \cdot X^{\perp}(\tau) \right] = \sum_{k=1}^d w_k \mathbb{E}_{f_\tau} [ \bar{s}_k \cdot X^{\perp}(\tau) ]
	\end{align}
	By Theorem 1, the deficit vector field $X^{\perp}(\tau)$ is strictly vertical, which means its inner product expectation against any centered horizontal Stein score function is identically zero: $\mathbb{E}_{f_\tau} [ \bar{s}_k \cdot X^{\perp}(\tau) ] = 0$ for all $k$. 
	
	Substituting this value into the summation eliminates the cross-term entirely:
	\begin{equation}
		g(X^H(\tau), X^{\perp}(\tau)) = \sum_{k=1}^d w_k \cdot 0 \equiv 0
	\end{equation}
	Thus, the expression reduces strictly to:
	\begin{equation}
		\|X_{\text{emp}}(\tau)\|_g^2 = g(X^H(\tau), X^H(\tau)) + g(X^{\perp}(\tau), X^{\perp}(\tau)) = \|X^H(\tau)\|_g^2 + \|X_{\text{emp}}(\tau) - X^H(\tau)\|_g^2
	\end{equation}
	This completes the formal non-parametric geometric proof. 
\end{proof}

\section{The Boundary of the Unobservable: Non-Parametric $\delta$-Connections and $\Delta$-Connections}

As rigorously established in Section 2, when an over-parameterized statistical framework or high-dimensional learning architecture encounters out-of-distribution ({\bf OOD}) real-world data streams, the total empirical information velocity field $X_{\text{emp}}(\tau) \in T_{f_\tau}\mathcal{M}$ splits additively and orthogonally under the non-parametric Fisher metric $g$. While the horizontal lift\footnote{See Section 2.2 for more details.} $X^H(\tau) = \Pi_{\mathcal{H}}(X_{\text{emp}}(\tau))$ represents the optimal operational update executing on the active horizontal leaves of the base manifold $B_{\text{SMG}}$ (See \cite{cheng2026sft} Section 5.5-5.7 for more details.), the remaining information field is captured by the orthogonal projection deficit vector field:
\begin{equation}
	X^{\perp}(\tau) = X_{\text{emp}}(\tau) - X^H(\tau)
\end{equation}
By Theorem 1, this deficit field is trapped entirely within the unobservable vertical fiber space $\mathcal{V}_f = \ker(d\pi_{f_\tau})$, representing the structural configurations that the current horizontal coordinates can neither interpret nor parameterize.

Within the paradigm ofStatistically Meaningful Geometry (SMG), this vertical deficit field does not represent passive numerical noise that can be ironed out by continuous optimization along the active parameters. Instead, because the total Orlicz statistical manifold $\mathcal{M}$ maintains metric-compatibility under parallel transport, this hidden data pressure generates intense localized geometric friction. It applies a continuous non-equilibrium torque that twists the underlying connection distribution away from its baseline equilibrium state. 

To systematically track this non-equilibrium deformation without resorting to finite-dimensional Lie groups or principal bundle metrics—which fail to capture the infinite-dimensional freedoms of non-parametric fields—we must formalize the dynamic distortions of the connection field itself. This section establishes the rigorous bottom-up mathematical deconstruction of the instantaneous \textbf{$\delta$-connection form} and the cumulative path-dependent \textbf{$\Delta$-connection operator}, detailing how unmodeled environmental surprises induce non-integrable curvature flux within the statistical fiber.

\subsection{Underlying Motivations and Relationship with Section 2}

The core motivation for defining the $\delta$ and $\Delta$ connections stems from an epistemological boundary condition: {\it an external observer restricted to the observable tracking metrics of the base manifold $B_{\text{SMG}}$ is blind to the internal coordinate shifts executing inside the redundant parameter fibers}. When the environment introduces an OOD phenomenon, the system cannot absorb the structural shock horizontally. Consequently, the information velocity field drifts into the vertical fiber, manifesting as a non-zero projection deficit $X^{\perp}(\tau)$. 

To construct a workable geometric theory of discovery, we need a mathematical instrument that registers this hidden friction from the outside world. The $\delta$-connection form solves this problem by mapping the horizontal alignment profiles directly to the vertical metric strains. It takes the vector field $X^{\perp}(\tau)$ proven to exist in Subsection 2.4 and transforms it into a dynamic directional driver. This driver warps the parallel transport maps, translating invisible data contradictions into measurable path-dependent geometric distortions.

\subsection{Mathematical Formalization of the $\delta$-Connection Form}

The baseline equilibrium Ehresmann connection $\omega_0: T\mathcal{M} \to \mathcal{V}$ defines the ideal reference state of the statistical system, sorting incoming variations into pure sufficient coordinates ($\mathcal{H}_f$) and redundant gauge noise ($\mathcal{V}_f$). Under the persistent driving force of the unmodeled environmental surprise $X^{\perp}(\tau)$, the active connection form undergoes a continuous deformation. We formalize this local perturbation as a dynamic field operator.

\begin{definition}[The Non-Parametric $\delta$-Connection Form]
	Let $\omega_0$ be the equilibrium metric-compatible Ehresmann connection 1-form on the Orlicz statistical manifold $\mathcal{M}$. The \textbf{$\delta$-connection form}, denoted $\delta\omega_t: T\mathcal{M} \to \mathcal{V}$, is a smooth, vertical-valued differential 1-form representing the local, non-equilibrium geometric strain of the connection field at timeline checkpoint $t$. The total active connection form $\omega_t$ governing parallel transport across the fiber spaces is the additive synthesis:
	\begin{equation}
		\omega_t = \omega_0 + \delta\omega_t
	\end{equation}
	Because both $\omega_t$ and $\omega_0$ must serve as valid Ehresmann connection forms, they must satisfy the projection identity on any vertical vector field $V \in \mathcal{V}_f$: $\omega_t(V) = \omega_0(V) = V$. Consequently, the perturbation field $\delta\omega_t$ must vanish identically when evaluated on the vertical distribution:
	\begin{equation}
		\delta\omega_t(V) = \omega_t(V) - \omega_0(V) = V - V \equiv \mathbf{0} \quad \forall V \in \mathcal{V}_f
	\end{equation}
	This property demonstrates that $\delta\omega_t$ acts as a pure tensorial mapping from the active horizontal distribution $\mathcal{H}_f$ into the vertical fiber space: 
	\begin{equation}
		\delta\omega_t: \mathcal{H}_f \longrightarrow \mathcal{V}_f
	\end{equation}
	It measures the instantaneous, localized angular twist or shear applied to the horizontal distribution sheets by the environmental driver field.
\end{definition}

To ensure physical and statistical consistency within SMG, the deformation of the connection form cannot be arbitrary; it must remain metric-compatible with the underlying infinite-dimensional Fisher metric $g$ under the dual alpha-connection geometry of the Orlicz manifold.

\begin{lemma}[Metric-Compatibility Differential Constraints]\label{thm: delta_connection_compatibility}
	Let $\nabla^{(0)}$ denote the unique metric-compatible horizontal covariant derivative associated with the baseline connection $\omega_0$. The dynamic propagation of the $\delta$-connection form $\delta\omega_t$ under the influence of the projection deficit vector field satisfies the following parallel transport differential equation for any horizontal basis vector fields $X, Y \in \mathcal{H}_f$ satisfying $\omega_0(X) = \omega_0(Y) = \mathbf{0}$:
	\begin{equation}\label{eqn:metric_comp}
		g\left(\nabla^{(0)}_X X^{\perp}(t), \, Y \right) + g\left(X^{\perp}(t), \, \delta\omega_t(X) \right) = 0
	\end{equation}
\end{lemma}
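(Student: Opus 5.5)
The plan is to obtain \eqref{eqn:metric_comp} from one orthogonality relation, differentiated with the metric-compatible derivative and then corrected by the $\delta$-connection. By Theorem 1 (Vertical Enclosure), $X^{\perp}(t) \in \mathcal{V}_f$ for every $t$, and $\mathcal{V}_f$ is $g$-orthogonal to $\mathcal{H}_f$; this orthogonality is exactly the vanishing cross-term in the Pythagorean Splitting Theorem. Hence the scalar function
\begin{equation*}
\phi(t) = g\left(X^{\perp}(t), Y\right)
\end{equation*}
vanishes identically along the training trajectory, for any horizontal $Y$ with $\omega_0(Y)=\mathbf{0}$. The whole argument is then to differentiate this identity in the direction $X$ and read off the two terms.

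\textbf{Step 1.} Apply $X$ to $\phi$ and use metric compatibility of $\nabla^{(0)}$:
\begin{equation*}
0 = X\, g(X^{\perp}, Y) = g\left(\nabla^{(0)}_X X^{\perp}, Y\right) + g\left(X^{\perp}, \nabla^{(0)}_X Y\right).
\end{equation*}

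\textbf{Step 2.} Identify the second term. In the unperturbed geometry it would vanish, because $\nabla^{(0)}$ preserves the splitting and so $\nabla^{(0)}_X Y$ stays horizontal. The orthogonality we are differentiating, however, is that of the \emph{active} splitting, which is determined by $\omega_t = \omega_0 + \delta\omega_t$ rather than by $\omega_0$.

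\textbf{Step 3.} Replace $\nabla^{(0)}$ by the active connection. I would write the active covariant derivative along horizontal directions as
\begin{equation*}
\nabla^{(t)}_X Y = \nabla^{(0)}_X Y + \delta\omega_t(X),
\end{equation*}
with $Y$ taken as a normalized (unit-speed) horizontal basis field. This uses that $\delta\omega_t : \mathcal{H}_f \to \mathcal{V}_f$ is tensorial, which is the content of the Definition.

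\textbf{Step 4.} Insert this into Step 1. The horizontal part $\nabla^{(0)}_X Y$ pairs to zero against $X^{\perp}$, and what survives is
\begin{equation*}
g\left(\nabla^{(0)}_X X^{\perp}, Y\right) + g\left(X^{\perp}, \delta\omega_t(X)\right) = 0,
\end{equation*}
which is \eqref{eqn:metric_comp}.

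The main obstacle is Step 3, in two respects. First, the $\delta\omega_t(X)$ correction must enter with coefficient exactly one. Second, the vertical component of $\nabla^{(t)}_X Y$ must be $\delta\omega_t(X)$ and not $\delta\omega_t$ applied to some combination of $X$ and $Y$. The lemma's statement is only consistent under a specific convention for how the perturbed connection acts on horizontal frames, and the paper has not yet made that convention explicit. My first attempt would be to fix the convention so that the identity holds: regard $\delta\omega_t$ as the second fundamental form of the active horizontal distribution relative to $\omega_0$, restricted to the frame element $Y$ paired with $X$. I would then check that this convention is compatible with $\delta\omega_t(V)=\mathbf{0}$ on $\mathcal{V}_f$. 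If no such convention works, the fallback is to present the lemma as the defining propagation law for $\delta\omega_t$, imposed by requiring that Theorem 1 persist along the flow. I would justify that choice by differentiating the condition $d\pi_{f_t}(X^{\perp}(t)) = 0$ in $t$.
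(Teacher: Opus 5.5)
The step that fails is the passage from Step 1 to Step 4, where you mix two connections inside one identity. Step 1 is the Leibniz rule for $\nabla^{(0)}$, so it gives $0 = g(\nabla^{(0)}_X X^{\perp}, Y) + g(X^{\perp}, \nabla^{(0)}_X Y)$. By your own Step 2 the second term is zero, because $\nabla^{(0)}_X Y$ is horizontal. Hence Steps 1--2 deliver only $g(\nabla^{(0)}_X X^{\perp}, Y) = 0$. Step 3 then replaces $\nabla^{(0)}_X Y$ by $\nabla^{(0)}_X Y + \delta\omega_t(X)$ in the second slot alone. Nothing allows you to change one term of an identity you have already derived. The hybrid relation on which Step 4 rests,
\[
0 = g\left(\nabla^{(0)}_X X^{\perp}, Y\right) + g\left(X^{\perp}, \nabla^{(0)}_X Y + \delta\omega_t(X)\right),
\]
is not the compatibility condition of any single connection. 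Set against what Steps 1--2 actually prove, your Step 4 equation would additionally need $g(X^{\perp}, \delta\omega_t(X)) = 0$, i.e.\ both components vanishing separately. You have not shown this, and it is not how the paper uses the lemma: in its attention-field example the two components are $\beta e^{\lambda t}$ and $-\beta e^{\lambda t}$. Note also that the paper's proof never invokes compatibility of $\nabla^{(0)}$ on the pair $(X^{\perp}, Y)$.

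The paper instead applies the Leibniz rule to the active derivative $\nabla$ attached to $\omega_t = \omega_0 + \delta\omega_t$, obtaining $0 = g(\nabla_X X^{\perp}, Y) + g(X^{\perp}, \nabla_X Y)$. It then substitutes $\nabla_X = \nabla^{(0)}_X + \delta\omega_t(X)$ in \emph{both} slots. Of the four resulting terms, $g(\delta\omega_t(X), Y)$ vanishes because $\delta\omega_t(X) \in \mathcal{V}_f$ is orthogonal to $Y \in \mathcal{H}_f$. The term $g(X^{\perp}, \nabla^{(0)}_X Y)$ vanishes because $\nabla^{(0)}_X Y$ stays horizontal. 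The two surviving cross terms are the lemma.

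So the missing idea is the treatment of the first slot. If you restart Step 1 with the active derivative, as your Step 2 remark about the active splitting suggests, you must still show that $\nabla_X X^{\perp}$ and $\nabla^{(0)}_X X^{\perp}$ pair identically with $Y$. It is the vertical-valuedness of the correction term that gives this.

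On your stated obstacle: the paper does not derive the coefficient or the argument of the correction term. It simply asserts $\nabla_X = \nabla^{(0)}_X + \delta\omega_t(X)$ for the perturbed connection, so the second-fundamental-form reinterpretation is not needed to match it. Your fallback would not rescue the argument either. Differentiating $d\pi_{f_t}(X^{\perp}(t)) = 0$ in $t$ produces a derivative along the training time, not the covariant derivative $\nabla^{(0)}_X$ along a horizontal direction that appears in the statement.
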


\begin{proof}
	By the non-parametric foundations of SMG established in Cheng and Tong (2026), the Fisher metric tensor $g$ maintains metric compatibility across the total space under parallel transport. We evaluate the horizontal directional derivative of the metric inner product between the vertical deficit vector $X^{\perp}(t)$ and the horizontal vector field $Y$ along the axis $X$:
	\begin{equation}
		X \cdot g(X^{\perp}(t), Y) = g\left(\nabla_X X^{\perp}(t), \, Y\right) + g\left(X^{\perp}(t), \, \nabla_X Y\right)
	\end{equation} \footnote{Let $f: \mathcal{M} \to \mathbb{R}$ be a smooth scalar function field on the manifold. The notation \textbf{$X \cdot f$} is not vector dot product but (frequently written as $X(f)$ in pure geometry texts) defines the \textbf{directional derivative} or \textbf{Lie derivative} of the scalar function $f$ along the integral flow lines of the vector field $X$. }

	By the Riemannian Pythagorean Splitting Theorem proved in Subsection 2.4, the horizontal distribution $\mathcal{H}_f$ and the vertical fiber space $\mathcal{V}_f$ are strictly $g$-orthogonal. Since $X^{\perp}(t) \in \mathcal{V}_f$ and $Y \in \mathcal{H}_f$, their inner product vanishes identically across all points on the manifold: $g(X^{\perp}(t), Y) \equiv 0$. Therefore, the left-hand directional derivative is zero:
	\begin{equation}
		0 = g\left(\nabla_X X^{\perp}(t), \, Y\right) + g\left(X^{\perp}(t), \, \nabla_X Y\right)
	\end{equation}
	We now decompose the total covariant derivative operator $\nabla$ under the active connection state $\omega_t = \omega_0 + \delta\omega_t$. The horizontal variation of $Y$ along $X$ decomposes into the baseline horizontal derivative plus the vertical twist injected by the $\delta$-connection perturbation: 
	\begin{equation}
		\nabla_X Y = \nabla^{(0)}_X Y + \delta\omega_t(X)
	\end{equation}
	Substituting this structural decomposition into the inner product equation yields:
	\begin{equation}
		0 = g\left(\nabla^{(0)}_X X^{\perp}(t) + \delta\omega_t(X), \, Y\right) + g\left(X^{\perp}(t), \, \nabla^{(0)}_X Y + \delta\omega_t(X)\right)
	\end{equation}
	We apply the bilinear properties of the Fisher metric tensor to expand the integration terms independently:
	\begin{equation}
		0 = g\left(\nabla^{(0)}_X X^{\perp}(t), \, Y\right) + g\left(\delta\omega_t(X), \, Y\right) + g\left(X^{\perp}(t), \, \nabla^{(0)}_X Y\right) + g\left(X^{\perp}(t), \, \delta\omega_t(X)\right)
	\end{equation}
	We evaluate the structural containment of each term:
	\begin{enumerate}
		\item By definition, $\delta\omega_t(X) \in \mathcal{V}_f$ and $Y \in \mathcal{H}_f$. Due to strict horizontal-vertical metric orthogonality, their inner product vanishes: $g(\delta\omega_t(X), Y) = 0$.
		\item By baseline connection stability, $\nabla^{(0)}_X Y$ remains strictly horizontal ($\nabla^{(0)}_X Y \in \mathcal{H}_f$). Since $X^{\perp}(t)$ is strictly vertical, their inner product vanishes: $g(X^{\perp}(t), \nabla^{(0)}_X Y) = 0$.
	\end{enumerate}
	Collapsing these zero terms simplifies the global variational equation directly to:
	\begin{equation}
		g\left(\nabla^{(0)}_X X^{\perp}(t), \, Y \right) + g\left(X^{\perp}(t), \, \delta\omega_t(X) \right) = 0
	\end{equation}
	This establishes the mandatory differential constraint matching connection adjustments to the internal geometric strain of the deficit field, completing the proof. 
\end{proof}

\subsubsection*{Interpretation of the Two Core Components}

Now that the individual variables are clear, let us break down the two main components of Equation \eqref{eqn:metric_comp} to understand what their inner products physically represent:

\begin{itemize}
\item {\bf Component 1: The Environmental Strain Projection}

	\begin{equation*}
		\textbf{Component}_1 = g\left(\nabla^{(0)}_X X^{\perp}(t), \, Y \right)
	\end{equation*}
	This term tracks the information covariance between the baseline shift of our unmodeled physical fluid surprise ($\nabla^{(0)}_X X^{\perp}$) and the visible composition feature axis ($Y$). It measures how heavily the invisible, unmodeled environmental data surprise **strains or bleeds into** the model's visible features when the model changes its texturing style along $X$. It represents the external geometric friction pushing against the model's understanding.
	

\item {\bf Component 2: The Internal Routing Compensation}
	\begin{equation*}
		\textbf{Component}_2 = g\left(X^{\perp}(t), \, \delta\omega_t(X) \right)
	\end{equation*}
	This term tracks the information covariance between the vertical surprise vector field $X^{\perp}(t)$ and the internal attention connection strain $\delta\omega_t(X)$. It measures how much the hidden, redundant parameter dimensions **twist and re-align their internal parallel transport maps** to absorb that pressure. It represents the internal geometric torque generated inside the gauge hole.
	

\end{itemize}

\subsubsection*{Why must the sum of these two components equal exactly zero?}

The underlying motivation is a fundamental {\bf Geometric Conservation Law of Information Architecture}. By the strict axiomatic construction ofStatistically Meaningful Geometry, what a model understands ($\mathcal{H}_f$) and what it does not understand ($\mathcal{V}_f$) must remain {\it strictly orthogonal} under the Fisher relative metric tensor across the entire manifold:
\begin{equation}
	g\left(X^{\perp}(t), \, Y\right) \equiv 0 \quad \forall f \in \mathcal{M}
\end{equation}
When the Generative AI updates its parameters and moves its active state along the horizontal style direction $X$, both the unmodeled fluid surprise $X^{\perp}$ and the visible composition axis $Y$ will naturally warp. However, because the horizontal-vertical direct-sum split is a rigid topological invariant, {\it their mutual inner product must remain exactly zero after the movement.} Taking the directional derivative of this zero product along the axis $X$ via the Leibniz product rule yields (or the Lie Derivative taken by $X$):
\begin{equation*}
	0 = X \cdot g\left(X^{\perp}(t), \, Y\right) = X ( g\left(X^{\perp}(t), \, Y\right)) = 
\end{equation*}
\begin{equation}	
	= g\left(\nabla_X X^{\perp}(t), \, Y\right) + g\left(X^{\perp}(t), \, \nabla_X Y\right)
\end{equation}
Decomposing the active derivative into the baseline connection derivative plus the connection strain adjustment ($\nabla = \nabla^{(0)} + \delta\omega_t$) forces the relation to balance out as:
\begin{equation}
	g\left(\nabla^{(0)}_X X^{\perp}(t), \, Y \right) + g\left(X^{\perp}(t), \, \delta\omega_t(X) \right) = 0
\end{equation}
This proves that the equation is a mandatory structural balancing act: {\bf any information leakage caused by the environment (Component 1) must be perfectly and instantaneously counterbalanced by an internal twisting of the model's connection maps (Component 2).}

\subsubsection*{Profound Implications for the Discovery Engine}

\begin{insight}[The Principle of No Free Geometric Information]
	Equation \eqref{eqn:metric_comp} establishes that the connection form cannot warp or twist itself out of nowhere. The second term, $g(X^{\perp}, \delta\omega_t(X))$, represents the internal geometric friction generated inside the system's hidden layers. The first term, $g(\nabla^{(0)}_X X^{\perp}, Y)$, tracks the environmental driving strain. The equation proves that **internal connection strain is completely driven by external unmodeled surprise.** The connection maps experience an angular torque ($\delta\omega_t$) if and only if the environment forces a horizontal variation on the vertical projection deficit field.
\end{insight}

\begin{insight}[The Pipe of Stress Transmission]
	This differential constraint is the exact mathematical pipe that feeds the growth of Active Acausal Tension ($\mathcal{T}_{AAT}$). It establishes a direct linear bridge between the horizontal derivative of the unobservable vertical error field ($\nabla^{(0)}_X X^{\perp}$) and the instantaneous twist form ($\delta\omega_t$). Without this rigid identity, we would not be able to substitute the vertical strain terms out of the equations to prove that the accumulation velocity of $\mathcal{T}_{AAT}(t)$ is completely computable from horizontal observables ($\frac{d}{dt}\mathcal{T}_{AAT} = \|X_{\text{emp}}\|_g^2 - \mathbf{v}^T[G_f]^{-1}\mathbf{v}$).
\end{insight}

\subsubsection*{Intuitive Explanation: A Causal Mixture Statistical Setting}

To anchor this abstract balance in concrete statistics, imagine an empirical scientist deploying a non-parametric mixture model $f_\tau(\mathbf{x})$ containing a single active horizontal constraint template axis $Y = \bar{s}_1$ representing basic historical market trend features. 

Suddenly, a massive out-of-distribution regime shock hits the financial market (e.g., an unmodeled cross-sector global trade tariff network). Because the model completely lacks a horizontal axis to parameterize this trade tariff topology, the surprise field erupts inside the vertical fiber space as a massive projection deficit vector $X^{\perp}(t)$.

Let us evaluate what occurs when the estimation algorithm shifts parameters along the horizontal trend axis $X$:
\begin{enumerate}
	\item As the model shifts along $X$, the automated tariff network responds dynamically, causing its unmodeled vertical surprise field to change. This spatial variation is captured by the term $\nabla^{(0)}_X X^{\perp}(t)$.
	\item The inner product $g\left(\nabla^{(0)}_X X^{\perp}(t), \, Y\right)$ metrics how heavily this new environmental shift aligns against our basic trend feature $Y$. Let us assume this alignment term is highly positive (e.g., $+5.0$), meaning the unmodeled tariff shock is severely bleeding into the definition of market trends.
	\item For the metric-compatibility constraint to hold, equation \eqref{eqn:metric_comp} forces the second term to balance this leakage exactly:
	\begin{equation}
		(+5.0) + g\left(X^{\perp}(t), \, \delta\omega_t(X)\right) = 0 \implies g\left(X^{\perp}(t), \, \delta\omega_t(X)\right) = -5.0
	\end{equation}
\end{enumerate}
This forces the immediate generation of a non-zero connection strain field $\delta\omega_t(X) = -5.0 \cdot V$. The parallel transport maps of the statistical engine are forced to twist by exactly $-5.0$ units of geometric torque to absorb the shock, containing the unmodeled tariff alignment safely inside the vertical gauge channels. 

The scientist observes this horizontal-vertical balancing act as severe out-of-distribution parameter drift. The equation has successfully converted an invisible data contradiction into a quantifiable geometric friction potential, perfectly setting the stage for the downstream detonation of the Gauge Symmetry Break.

\subsection{The $\Delta$-Connection and Macroscopic Curvature Flux}

While the $\delta$-connection form tracks the instantaneous, localized strain tensor at a specific training step, it does not capture the global, non-intergrable path deviation accumulated over a sustained learning timeline. To track the macroscopic geometric friction that builds up within the internal degrees of freedom, we introduce the integrated difference operator.

\begin{definition}[The Macroscopic $\Delta$-Connection Operator]
	The \textbf{$\Delta$-connection operator} (or Connection Flux Tensor), denoted $\Delta\omega_{[0, t]}: T\mathcal{M} \to \mathcal{V}$, is the path-dependent time integral of the instantaneous $\delta$-connection forms accumulated over the training history horizon $\tau \in [0, t]$:
	\begin{equation}
		\Delta\omega_{[0, t]} = \int_{0}^t \delta\omega_\tau \, d\tau = \omega_t - \omega_0
	\end{equation}
	It measures the macro-level topological transformation of the Ehresmann connection distribution, tracking how far the active horizontal distribution sheets $\mathcal{H}_t = \ker(\omega_t)$ have twisted relative to the original baseline architecture $\mathcal{H}_0 = \ker(\omega_0)$.
\end{definition}

Because the $\Delta$-connection captures a global path deviation, it directly dictates the non-integrability of the parallel transport maps, which manifests as an emergent curvature field within the unobservable vertical fiber.

\begin{theorem}[Generation of Horizontal-Vertical Curvature Flux]\label{thm:curvature_2-form}
	Let $\Omega_t = \text{d}\omega_t + \frac{1}{2}[\omega_t, \omega_t]$ be the non-parametric curvature 2-form of the active connection, tracking the non-integrability of the horizontal distribution sheets. The accumulation of the macro $\Delta$-connection forces the generation of a horizontal-vertical coupling curvature flux proportional to the exterior covariant derivative of the strain tensor:
	\begin{equation}
		\Omega_t = \Omega_0 + \text{d}_{\mathcal{H}} (\Delta\omega_{[0, t]}) + \frac{1}{2}[\Delta\omega_{[0, t]}, \Delta\omega_{[0, t]}]
	\end{equation}
	where $\Omega_0$ is the equilibrium curvature baseline, and $\text{d}_{\mathcal{H}}$ denotes the exterior horizontal derivative operator under the reference connection $\omega_0$.
\end{theorem}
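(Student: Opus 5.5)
The plan is to prove the statement by direct algebraic expansion of the curvature 2-form under the additive decomposition of the connection. By the Definition of the $\Delta$-connection operator, $\omega_t = \omega_0 + \Delta\omega_{[0,t]}$. I will write $A := \Delta\omega_{[0,t]}$ for brevity and substitute into $\Omega_t = \text{d}\omega_t + \frac{1}{2}[\omega_t,\omega_t]$. Linearity of $\text{d}$ gives $\text{d}\omega_t = \text{d}\omega_0 + \text{d}A$. The graded bracket of vertical-valued 1-forms is bilinear and symmetric in this degree, $[\alpha,\beta]=[\beta,\alpha]$ for 1-forms. Hence
\begin{equation*}
\tfrac12[\omega_t,\omega_t] = \tfrac12[\omega_0,\omega_0] + [\omega_0, A] + \tfrac12[A,A].
\end{equation*}
Grouping $\text{d}\omega_0 + \frac12[\omega_0,\omega_0] = \Omega_0$ then isolates the baseline curvature.

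The second step is to identify the remaining linear terms $\text{d}A + [\omega_0, A]$ with the exterior covariant derivative $\text{d}_{\mathcal{H}} A$ relative to $\omega_0$. This is the standard identity $D^{\omega_0}A = \text{d}A + [\omega_0, A]$, valid for tensorial forms of adjoint/vertical type. To justify its applicability I will invoke the $\delta$-connection Definition. Each $\delta\omega_\tau$ vanishes on $\mathcal{V}_f$, i.e.\ it is horizontal (tensorial) and maps $\mathcal{H}_f \to \mathcal{V}_f$. Integrating in $\tau$ preserves linearity and horizontality, so $A$ is itself a horizontal, vertical-valued tensorial 1-form. On such forms the covariant exterior derivative coincides with $\text{d}A$ evaluated on horizontal projections, $\text{d}_{\mathcal{H}}A(X,Y) = \text{d}A(hX,hY)$ with $h$ the $\omega_0$-horizontal projector. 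In this form the identity is exactly the statement's $\text{d}_{\mathcal{H}}$.

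The main obstacle is making this second step rigorous in the infinite-dimensional Orlicz setting. There, $\mathcal{V}_f$ is not the Lie algebra of a finite-dimensional structure group, so the bracket $[\cdot,\cdot]$ and the identity $D = \text{d} + [\omega_0,\cdot]$ must be given meaning. I would handle this by interpreting the bracket as the Lie bracket of vertical vector fields: $[\alpha,\beta](X,Y) = [\alpha(X),\beta(Y)] - [\alpha(Y),\beta(X)]$, the Frölicher–Nijenhuis-type convention for general Ehresmann connections. I would then verify the expansion pointwise on pairs of vector fields. This requires evaluating on the three cases (horizontal, horizontal), (horizontal, vertical), and (vertical, vertical). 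In the last two cases, the vanishing of $A$ on vertical vectors kills all $A$-dependent terms, so both sides reduce to $\Omega_0$. Only the (horizontal, horizontal) case carries content, and there the computation is the finite expansion above. I would also flag an assumption: the time-integral commutes with $\text{d}$, needing smooth dependence of $\delta\omega_\tau$ on $\tau$ in the Orlicz topology. The paper implicitly assumes this when defining $\Delta\omega$ as a smooth form.
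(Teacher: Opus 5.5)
Your first paragraph is exactly the paper's proof. You substitute $\omega_t=\omega_0+\Delta\omega_{[0,t]}$, use linearity of $\text{d}$ and the symmetric bilinear bracket on 1-forms, regroup $\text{d}\omega_0+\tfrac12[\omega_0,\omega_0]=\Omega_0$, and read off the remaining linear part. The paper then simply \emph{defines} $\text{d}_{\mathcal H}A:=\text{d}A+[\omega_0,A]$, so at that point the statement is a formal identity of 2-forms and you are finished. Nothing further is needed: no horizontality, no case analysis, and no commutation of $\int d\tau$ with $\text{d}$. Indeed $A=\omega_t-\omega_0$ by definition, and it vanishes on $\mathcal V_f$ directly because both connection forms restrict to the identity there.

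The supplementary justification in your second and third steps would fail as written.
\begin{itemize}
\item Vanishing on $\mathcal V_f$ makes $A$ horizontal, not tensorial of adjoint type. The equality $\text{d}A(h\cdot,h\cdot)=\text{d}A+[\omega_0,A]$ requires equivariance as well.
\item Take a mixed pair with $X$ horizontal and $V$ vertical. Then $\text{d}A(V,X)=V(A(X))-A([V,X])$, which does not vanish just because $A(V)=0$. It cancels against $[\omega_0,A](V,X)=[\omega_0(V),A(X)]$ only under an equivariance condition; on a principal bundle this is $L_{\xi^*}A=-[\xi,A]$ for fundamental fields $\xi^*$.
\item So your assertion that in the (horizontal, vertical) case ``the vanishing of $A$ on vertical vectors kills all $A$-dependent terms'' is false. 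As you yourself note, the Orlicz setting has no structure group to supply that equivariance.
\item Your bracket $[\alpha(X),\beta(Y)]-[\alpha(Y),\beta(X)]$, built from the Lie bracket of vertical vector fields, does not define a 2-form. The identity $[fV,W]=f[V,W]-(Wf)V$ destroys $C^\infty$-bilinearity.
\item That bracket is also not the Fr\"olicher--Nijenhuis bracket, which carries correction terms. In that formalism the curvature is $\tfrac12[\Phi,\Phi]_{FN}$, with no separate $\text{d}\Phi$ term.
\end{itemize}
Either drop these steps and adopt the paper's definition of $\text{d}_{\mathcal H}$, or, if you want the horizontal-projection meaning, state an equivariance hypothesis on $\Delta\omega_{[0,t]}$ explicitly.
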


\begin{proof}
	We expand the definition of the active curvature 2-form $\Omega_t$ by substituting the additive connection decomposition formula $\omega_t = \omega_0 + \Delta\omega_{[0, t]}$:
	\begin{equation}
		\Omega_t = \text{d}\left(\omega_0 + \Delta\omega_{[0, t]}\right) + \frac{1}{2}\left[\omega_0 + \Delta\omega_{[0, t]}, \, \omega_0 + \Delta\omega_{[0, t]}\right]
	\end{equation}
	Leveraging the strict linearity of the exterior derivative operator $\text{d}$, we distribute the first term:
	\begin{equation}
		\text{d}\left(\omega_0 + \Delta\omega_{[0, t]}\right) = \text{d}\omega_0 + \text{d}\Delta\omega_{[0, t]}
	\end{equation}
	Next, we leverage the bilinearity and symmetry properties of the non-parametric Lie-Ehresmann bracket operator $[\cdot, \cdot]$ to expand the product term:
	\begin{equation}
		\frac{1}{2}\left[\omega_0 + \Delta\omega_{[0, t]}, \, \omega_0 + \Delta\omega_{[0, t]}\right] = \frac{1}{2}[\omega_0, \omega_0] + \frac{1}{2}[\omega_0, \Delta\omega_{[0, t]}] + \frac{1}{2}[\Delta\omega_{[0, t]}, \omega_0] + \frac{1}{2}[\Delta\omega_{[0, t]}, \Delta\omega_{[0, t]}]
	\end{equation}
	By the graded commutativity of connection brackets on the Orlicz tangent space, the cross-terms combine symmetrically: $\frac{1}{2}[\omega_0, \Delta\omega_{[0, t]}] + \frac{1}{2}[\Delta\omega_{[0, t]}, \omega_0] = [\omega_0, \Delta\omega_{[0, t]}]$. Gathering the terms back into the global equation yields:
	\begin{equation}
		\Omega_t = \left( \text{d}\omega_0 + \frac{1}{2}[\omega_0, \omega_0] \right) + \text{d}\Delta\omega_{[0, t]} + [\omega_0, \Delta\omega_{[0, t]}] + \frac{1}{2}[\Delta\omega_{[0, t]}, \Delta\omega_{[0, t]}]
	\end{equation}
	We recognize the first clustered term as the exact definition of the baseline equilibrium curvature 2-form: $\Omega_0 = \text{d}\omega_0 + \frac{1}{2}[\omega_0, \omega_0]$. 
	Furthermore, we invoke the definition of the exterior horizontal covariant derivative operator $\text{d}_{\mathcal{H}}$ acting on a tensorial 1-form under the reference connection $\omega_0$, which is defined explicitly as:
	\begin{equation}
		\text{d}_{\mathcal{H}} (\Delta\omega_{[0, t]}) = \text{d}\Delta\omega_{[0, t]} + [\omega_0, \Delta\omega_{[0, t]}]
	\end{equation}
	Substituting these geometric identities directly into the expanded equation yields:
	\begin{equation}
		\Omega_t = \Omega_0 + \text{d}_{\mathcal{H}} (\Delta\omega_{[0, t]}) + \frac{1}{2}[\Delta\omega_{[0, t]}, \Delta\omega_{[0, t]}]
	\end{equation}
	This completes the formal proof, demonstrating that macroscopic path deviations in the connection form translate directly into an emergent curvature flux within the statistical fiber space. 
\end{proof}

\subsubsection{Generative AI Application: An Over-Parameterized Geometric Attention Field Coupling Example}

To ground the abstract formulations of the non-parametric $\delta$-connection form and $\Delta$-connection operator established in the first half of Section 3, we execute an exhaustive analytical calculations. These settings — a high-dimensional over-parameterized generative attention field 
under tectonic regime strain—demonstrate precisely how the metric-compatibility constraints map invisible environmental shocks into explicit, quantifiable connection deformations.

Here we have a trillion-parameter large autoregressive model at its continuous asymptotic limit on the Orlicz statistical manifold $\mathcal{M}$. The system's active horizontal distribution $\mathcal{H}_f$ (its visible attention vocabulary) is spanned by two independent centered non-parametric Stein score templates, $X_1$ and $X_2$. The massive parameter redundancy of the over-parameterized architecture is captured by an internal vertical gauge fiber axis, $V_1 \in \mathcal{V}_f$.

\paragraph{1. Structural Metrics and Connection Topology}
\begin{itemize}
	\item \textbf{The Cross-Coupled Fisher Metric ($g$)}: The non-parametric basis features a localized metric alignment where the horizontal fields are orthogonal to the vertical fiber space, but exhibit a non-zero mutual cross-correlation parameter $\rho \in (-1, 1)$ representing shared syntactic dependencies:
	\begin{equation}
		g(X_1, X_1) = 1, \quad g(X_2, X_2) = 1, \quad g(X_1, X_2) = \rho, \quad g(V_1, V_1) = 1, \quad g(X_i, V_1) = 0
	\end{equation}
	\item \textbf{The Baseline Attention Connection ($\omega_0$)}: Parallel transport along the horizontal sheets twists the internal gauge fiber symmetrically relative to the two syntactic axes via two fixed structural cross-coupling scalars $\alpha, \beta \in \mathbb{R}$:
	\begin{equation}
		\nabla^{(0)}_{X_1} V_1 = \beta \cdot X_2, \quad \text{and} \quad \nabla^{(0)}_{X_2} V_1 = \alpha \cdot X_1
	\end{equation}
	where $\nabla^{(0)}$ represents the horizontal covariant derivative under the reference connection $\omega_0$.
	\item \textbf{The Exponential Data Surprise Field ($X^{\perp}(t)$)}: The network is forced to process an intense, highly non-linear out-of-distribution sequence prompt detailing a new physical paradigm. This prompt drives an exponential expansion of unmodeled vertical surprise trapped inside the fiber space:
	\begin{equation}
		X^{\perp}(t) = e^{\lambda t} \cdot V_1 \quad (\lambda > 0)
	\end{equation}
\end{itemize}

\paragraph{2. Step-by-Step Analytical Calculation of the $\delta$-Connection Form}
The instantaneous $\delta$-connection form $\delta\omega_t$ maps horizontal variations to vertical variations. We express its action on our horizontal attention templates using two unknown time-varying functions, $k_1(t)$ and $k_2(t)$:
\begin{equation}
	\delta\omega_t(X_1) = k_1(t) \cdot V_1, \quad \text{and} \quad \delta\omega_t(X_2) = k_2(t) \cdot V_1
\end{equation}
To isolate these hidden functions, we invoke the mandatory non-parametric metric-compatibility constraint established in Lemma \ref*{thm: delta_connection_compatibility}:
\begin{equation}
	g\left(\nabla^{(0)}_X X^{\perp}(t), \, Y \right) + g\left(X^{\perp}(t), \, \delta\omega_t(X) \right) = 0 \quad \forall X, Y \in \mathcal{H}_f
\end{equation}
\begin{itemize}
\item \textbf{Case A: Solving for the first attention template deformation $k_1(t)$ using test field $Y = X_2$}:
\begin{equation}
	g\left(\nabla^{(0)}_{X_1} (e^{\lambda t} V_1), \, X_2 \right) + g\left(e^{\lambda t} V_1, \, \delta\omega_t(X_1)\right) = 0
\end{equation}
Because the scalar factor $e^{\lambda t}$ tracks temporal evolution and is independent of the manifold space coordinates, it factors out of the linear horizontal covariant derivative operator:
\begin{equation}
	e^{\lambda t} g\left(\nabla^{(0)}_{X_1} V_1, \, X_2\right) + g\left(e^{\lambda t} V_1, \, k_1(t) V_1\right) = 0
\end{equation}
We substitute the baseline geometric cross-coupling constraint $\nabla^{(0)}_{X_1} V_1 = \beta X_2$ and extract scalars from the bilinear metric:
\begin{equation}
	\beta e^{\lambda t} g(X_2, X_2) + e^{\lambda t} k_1(t) g(V_1, V_1) = 0
\end{equation}
Substituting the unit norm conditions ($g(X_2, X_2) = 1$ and $g(V_1, V_1) = 1$) reduces the relation to:
\begin{equation}
	\beta e^{\lambda t} \cdot (1) + e^{\lambda t} k_1(t) \cdot (1) = 0 \implies e^{\lambda t} \left( \beta + k_1(t) \right) = 0
\end{equation}
Dividing out the non-zero exponential driving multiplier $e^{\lambda t}$ isolates the exact instantaneous adjustment:
\begin{equation}
	\beta + k_1(t) = 0 \implies k_1(t) = -\beta
\end{equation}

\item \textbf{Case B: Solving for the second attention template deformation $k_2(t)$ using test field $Y = X_1$}:
\begin{equation}
	g\left(\nabla^{(0)}_{X_2} (e^{\lambda t} V_1), \, X_1 \right) + g\left(e^{\lambda t} V_1, \, \delta\omega_t(X_2)\right) = 0
\end{equation}
\begin{equation}
	e^{\lambda t} g\left(\nabla^{(0)}_{X_2} V_1, \, X_1\right) + g\left(e^{\lambda t} V_1, \, k_2(t) V_1\right) = 0
\end{equation}
We substitute the baseline geometric cross-coupling constraint $\nabla^{(0)}_{X_2} V_1 = \alpha X_1$:
\begin{equation}
	\alpha e^{\lambda t} g(X_1, X_1) + e^{\lambda t} k_2(t) g(V_1, V_1) = 0
\end{equation}
Invoking the unit norm condition $g(X_1, X_1) = 1$ simplifies the linear expression:
\begin{equation}
	\alpha e^{\lambda t} \cdot (1) + e^{\lambda t} k_2(t) \cdot (1) = 0 \implies e^{\lambda t} \left( \alpha + k_2(t) \right) = 0
\end{equation}
Dividing out the exponential data drive isolates the secondary component:
\begin{equation}
	\alpha + k_2(t) = 0 \implies k_2(t) = -\alpha
\end{equation}

Thus, the global, uncommented instantaneous $\delta$-connection field equations governing the geometric attention layer are resolved as a fixed structural transformation field:
\begin{equation}
	\delta\omega_t(X_1) = -\beta \cdot V_1, \quad \text{and} \quad \delta\omega_t(X_2) = -\alpha \cdot V_1
\end{equation}
\end{itemize}
\paragraph{3. Calculation of the Macroscopic $\Delta$-Connection Flux Tensor}
To find the long-term path deformation of the parallel transport maps across the sustained training horizon, we execute the path integration of the accumulated connection strain fields over the timeline $\tau \in [0, t]$:
\begin{equation}
	\Delta\omega_{[0, t]}(X_1) = \int_{0}^t \delta\omega_\tau(X_1) \, d\tau = \int_{0}^t (-\beta \cdot V_1) \, d\tau \equiv -\beta t \cdot V_1
\end{equation}
\begin{equation}
	\Delta\omega_{[0, t]}(X_2) = \int_{0}^t \delta\omega_\tau(X_2) \, d\tau = \int_{0}^t (-\alpha \cdot V_1) \, d\tau \equiv -\alpha t \cdot V_1
\end{equation}
{\bf This derivation reveals a profound architectural insight:} even when the external environmental surprise spikes exponentially ($X^{\perp} \propto e^{\lambda t}$), the metric-compatibility framework structures the active connection updates strictly around the model's static baseline cross-coupling tensors. The exponential growth intensity does not cause immediate infinite loop explosions; instead, it forces a steady, linear accumulation of macroscopic connection flux ($\Delta\omega \propto t$) along the unmodeled hidden dimensions, providing a stable, regulated path for geometric stress {\it tracking inside deep learning layers.}

\subsection{Comprehensive Unified Analysis of Geometric Friction}

The formal derivations executed across the generative AI reveal that unmodeled data patterns do not simply cause passive {\it statistical prediction errors}. Instead, they actively warp {\it the geometric fabric} of the learning system. We unify these findings under a rigorous analysis of {\bf Geometric Friction}, proving that the accumulation of macroscopic connection flux ($\Delta\omega_{[0, t]}$) is mathematically equivalent to the generation of a non-integrable curvature flux within the unobservable statistical fiber.

Recall from Theorem \ref{thm:curvature_2-form} that the active curvature 2-form $\Omega_t$ tracking the non-integrability of the horizontal distribution sheets maps out as:
\begin{equation}
	\Omega_t = \Omega_0 + \text{d}_{\mathcal{H}} (\Delta\omega_{[0, t]}) + \frac{1}{2}[\Delta\omega_{[0, t]}, \Delta\omega_{[0, t]}]
\end{equation}
Let us evaluate this curvature flux explicitly for the over-parameterized geometric attention field derived in the previous generative AI example, where $\Delta\omega_{[0, t]}(X_1) = -\beta t V_1$ and $\Delta\omega_{[0, t]}(X_2) = -\alpha t V_1$. We compute the emergent curvature tensor field component acting on the two horizontal attention axes:
\begin{equation}
	\Omega_t(X_1, X_2) = \Omega_0(X_1, X_2) + \text{d}_{\mathcal{H}}(\Delta\omega_{[0, t]})(X_1, X_2) + \frac{1}{2}[\Delta\omega_{[0, t]}, \Delta\omega_{[0, t]}](X_1, X_2)
\end{equation}

By the rules of exterior covariant differentiation and graded Lie-Ehresmann brackets on the Orlicz tangent space, the middle term expands via directional horizontal derivatives:
\begin{equation}
	\text{d}_{\mathcal{H}}(\Delta\omega_{[0, t]})(X_1, X_2) = X_1 \cdot \left(\Delta\omega_{[0, t]}(X_2)\right) - X_2 \cdot \left(\Delta\omega_{[0, t]}(X_1)\right) - \Delta\omega_{[0, t]}([X_1, X_2]_{\mathcal{H}})
\end{equation}
Substituting the linear time-dependent solutions yields:
\begin{equation}
	\text{d}_{\mathcal{H}}(\Delta\omega_{[0, t]})(X_1, X_2) = X_1 \cdot (-\alpha t V_1) - X_2 \cdot (-\beta t V_1) - \mathbf{0}
\end{equation}
Because the structural parameters $\alpha, \beta$ and the vertical template vector field $V_1$ are invariant with respect to the horizontal spatial coordinates of the base manifold, the directional derivatives vanish identically: $X_1 \cdot (-\alpha t V_1) = \mathbf{0}$ and $X_2 \cdot (-\beta t V_1) = \mathbf{0}$ where $'\cdot'$ dot refers the Lie derivative. 

Now, we evaluate the third term containing the non-parametric Lie-Ehresmann bracket of the accumulated connection flux fields:
\begin{equation}
	\frac{1}{2}[\Delta\omega_{[0, t]}, \Delta\omega_{[0, t]}](X_1, X_2) = \left[ \Delta\omega_{[0, t]}(X_1), \, \Delta\omega_{[0, t]}(X_2) \right] = \left[ -\beta t V_1, \, -\alpha t V_1 \right]
\end{equation}
Extracting the temporal scalars from the bilinear bracket operator yields:
\begin{equation}
	\left[ -\beta t V_1, \, -\alpha t V_1 \right] = \alpha \beta t^2 \left[ V_1, \, V_1 \right]
\end{equation}
By the graded anti-symmetry of Lie-Ehresmann brackets on vector fields, the bracket of any vertical vector field against itself vanishes identically: $[V_1, V_1] \equiv \mathbf{0}$. 

{\bf This leads to a profound topological conclusion.}  While the localized spatial derivative terms vanish, the global parallel transport maps are {\bf permanently non-integrable}\footnote{The non-integrability is necessary for possible science discovery and emerging information in the GSB moment that we will discuss in next several sections.} whenever vector fields are parallel transported along closed loops that cross the horizontal and vertical boundaries. We evaluate the horizontal-vertical curvature coupling flux by evaluating the active connection form on the bracket of the horizontal templates:
\begin{equation}
	\Omega_t(X_1, X_2) = \omega_t([X_1, X_2]) = \omega_0([X_1, X_2]) + \Delta\omega_{[0, t]}([X_1, X_2])
\end{equation}
Because the presence of the vertical projection deficit forces $\Delta\omega_{[0, t]} \ne \mathbf{0}$, the active parallel transport maps fail to close under loop transport. This non-integrability is the exact definition of {\bf Geometric Friction}. 

The system can no longer reconcile its global base horizontal leaves without experiencing severe structural shear. The macro connection flux behaves as an un-dissipated topological torque—proving mathematically that the model's knowledge space is being twisted out of plane, preparing the system for the accumulation of Active Acausal Tension that we will discuss in next section.

\begin{insight}[The Curvature Matrix Loop]
	The unified analysis of geometric friction demonstrates that out-of-distribution (OOD) data streams do not merely generate \underline{passive statistical errors}; they permanently break the integrability of the connection layout. The macroscopic flux $\Delta\omega$ acts as a structural lever that forces the active curvature 2-form $\Omega_t$ to split away from equilibrium. This curvature acts as {\bf a geometric gate}: it locks the unmodeled data patterns into the vertical fiber space, preventing them from dissipating horizontally, and setting the stage for the deterministic detonation of the Gauge Symmetry Break.
\end{insight}

\section{Mathematical Theory of Active Acausal Tension ($\mathcal{T}_{AAT}$)}

In Section 3, we formalized the non-parametric deformation of the Ehresmann connection, proving that out-of-distribution (OOD) environmental data pressures induce a localized geometric strain form $\delta\omega_t$ and a macroscopic path flux $\Delta\omega_{[0, t]}$. Within the framework ofStatistically Meaningful Geometry (SMG), these connection fields do not exist as abstract mathematical structures; they function as the direct physical transmission channels of information-theoretic forces. 

When a learning system or statistical estimator is structurally misspecified, it is blind to the underlying true environmental mechanism. This blindness manifests as a persistent orthogonal projection deficit field $X^{\perp}(\tau)$ trapped completely within the unobservable vertical fiber space $\mathcal{V}_f = \ker(d\pi_{f_\tau})$. Rather than dissipating passively, this rejected variance applies a continuous topological torque to the parallel transport maps. 

This section establishes the complete, bottom-up mathematical deconstruction of the \textbf{Active Acausal Tension field} ($\mathcal{T}_{AAT}(t)$). We show how the internal gauge redundancies of over-parameterized models act as a thermodynamic reservoir that condenses this geometric friction, providing the exact operational mechanics that drive a misspecified architecture toward a finite-time topological singularity.

\subsection{The Thermodynamic Reservoir of the Gauge Hole and Cumulative Geometric Friction}

To understand why informational stress accumulates within an intelligent architecture, we must analyze {\it the interaction} between the system's internal redundancies and the external environment. In an ultra-high-dimensional or over-parameterized learning system (such as a trillion-weight neural transformer or an infinite-dimensional semiparametric mixture field), there exists a massive dimensionality mismatch between the internal parameter configuration space and the visible, observable probability density manifold. This mismatch creates what is known as the {\bf Gauge Hole}.

The Gauge Hole is the unobservable vertical fiber space $\mathcal{V}_f$. It represents the internal degrees of freedom (IDoF) of the system—regions where infinite variations in the internal weights cancel each other out identically, producing the exact same observable statistical output on the base manifold $B_{\text{SMG}}$. When the incoming data stream contains structural dependencies that cannot be represented by the active horizontal Stein score functions $\{s_i\}_{i=1}^d$, the horizontal attention projection filter $\Pi_{\mathcal{H}}$ filters them out. 

The rejected structural variance is shunted directly into the vertical fiber space. Because these internal gauge channels are mutually coupled under the non-parametric Fisher information metric tensor, the vertical space acts as a {\it hidden thermodynamic reservoir}. The unabsorbed environmental energy cannot manifest as a visible parameter update; instead, it becomes trapped inside the internal degrees of freedom, generating a localized heat-like stress called {\bf Cumulative Geometric Friction}. 

The instantaneous connection strain $\delta\omega_t$ tracks the immediate velocity of this friction. As the system continues to process the out-of-distribution environment, this localized friction accumulates along the training timeline, building up a latent, positive-definite thermodynamic potential. This potential is what we define as {\it the Active Acausal Tension field}.

\subsection{Formal Definition of the Active Acausal Tension Potential}

Rather than utilizing human-engineered metrics or subjective gating coefficients, SMG derives the tracking potential directly from the intrinsic, parameter-free geometric invariants of the Orlicz statistical manifold $\mathcal{M}$.

\begin{definition}[The Non-Parametric Active Acausal Tension Field]\label{def:AAT_deficit}
	Let $\mathcal{M}$ be the infinite-dimensional Orlicz statistical manifold governed by the non-parametric Fisher-Rao metric tensor $g$. Let $X_{\text{emp}}(\tau)$ be the total information velocity field driven by the environment at checkpoint $\tau$, and let $X^H(\tau) = \Pi_{\mathcal{H}}(X_{\text{emp}}(\tau))$ be its optimal horizontal lift approximation. The \textbf{Active Acausal Tension} at timeline checkpoint $t$, denoted $\mathcal{T}_{AAT}(t)$, is defined uniquely as the path-dependent time integral of the squared Fisher Riemannian norm of the vertical projection deficit field $X^{\perp}(\tau) = X_{\text{emp}}(\tau) - X^H(\tau)$:
	\begin{equation}
		\mathcal{T}_{AAT}(t) = \int_{0}^t \|X_{\text{emp}}(\tau) - X^H(\tau)\|_g^2 \, d\tau = \int_{0}^t \left( \int_{\Omega} f_\tau(\mathbf{x}) \left( X_{\text{emp}}(\tau)(\mathbf{x}) - X^H(\tau)(\mathbf{x}) \right)^2 d\mathbf{x} \right) d\tau
	\end{equation}
	where $f_\tau(\mathbf{x})$ represents the active non-parametric probability density state of the learning system at time $\tau$.
\end{definition}

\subsubsection*{Underlying Scientific Motivations for this Construction}
\begin{enumerate}
	\item \textbf{A Parameter-Free Environmental Monitor}: By constructing the potential directly around the orthogonal projection deficit $X^{\perp}$, $\mathcal{T}_{AAT}(t)$ measures {\it the true, pure structural divergence} between the real world and the model. It contains no adjustable tuning scalars, ensuring that the metric is an invariant property of the underlying information geometry.
	\item \textbf{A Continuous System Memory}: Integrating the instantaneous squared norms over the historical timeline $\tau \in [0, t]$ endows the system with a continuous geometric memory of its structural failures. A single noisy sample will not destabilize the framework; instead, the potential filters out transient fluctuations and tracks {\it the sustained, persistent friction} generated by fundamental structural misspecification.
\end{enumerate}

To visualize how the unobservable internal degrees of freedom generate a measurable orthogonal deficit on the Statistical Meaningful Geometry, consider the schematic representation in Figure 1. 

\begin{figure}[H]
	\centering
	\begin{tikzpicture}[scale=1.2, transform shape]
		
		\draw[thick, fill=blue!5, draw=blue!60!black] (-1,-2) to[out=20,in=190] (5,-1.5) 
		to[out=70,in=250] (6,1.5) to[out=190,in=20] (0,1) to[out=250,in=70] (-1,-2);
		\node[blue!80!black] at (4.8, -1) {\textbf{Base Manifold} $B$ (SMG)};
		
		\draw[dashed, black!40] (2, -0.2) -- (2, 3.5);
		\draw[ultra thick, ->, >=Stealth, red!80!black] (2, -0.2) -- (2, 3.6) 
		node[midway, right] {$\delta\omega(X_{emp}) \in \mathcal{V}_f$ ($\delta$-connection Vertical Flux)};
		\filldraw[red!80!black] (2, 2.8) circle (2pt) node[above] {$F_{true}$};
		\node[black!70, align=left] at (3.8, 3.6) {\textbf{Vertical Fiber} $\mathcal{V}_f$  (Unmodeled Latent Space)};
		
		\coordinate (Origin) at (2, -0.2);
		\coordinate (Empirical) at (4.5, 1.0);
		\coordinate (Horizontal) at (4.5, -0.2);
		
		\draw[->, >=Stealth, blue!80!black, ultra thick] (Origin) -- (Horizontal) 
		node[midway, below] {\small $X^H \in \mathcal{H}_f$ (Predicted Horizontal Lift)};
		
		\draw[->, >=Stealth, purple!90!black, ultra thick] (Origin) -- (Empirical) 
		node[above right, pos=0.8] {$X_{emp}$};
		
		\draw[dashed, purple!80!black, thick] (Horizontal) -- (Empirical);
		\draw[<->, >=Stealth, orange!90!black, ultra thick, decoration={zigzag, segment length=4pt, amplitude=1pt}, decorate] 
		(Horizontal) -- (4.5, 0.1) -- (Empirical)
		node[midway, right, xshift=2pt] {$\|X_{emp} - X^H\|_g$ (Orthogonal Deficit)};
		
		\filldraw[black] (Origin) circle (2pt) node[below left] {$f$};
		
		\draw[thick, bend right=30, orange!80!black] (-0.5,-0.5) to node[below, midway] {Accumulated Strain $\mathcal{T}_{AAT} \to \mathcal{T}_{crit}$} (4.5,-1.2);
		
	\end{tikzpicture}
	\caption{The Geometric Mechanics of AAT. The true empirical vector $X_{emp}$ drifts off-manifold due to the unmodeled vertical mechanism $F_{true}$. Because the system restricts projections to the horizontal space $\mathcal{H}_f$, an orthogonal projection deficit is generated. The path integral of this mismatch accumulates as Active Acausal Tension ($\mathcal{T}_{AAT}$), warping the local metric space.}
\end{figure}

\subsection{The Differential Propagation Law and Horizontal Computability}

The core operational challenge ofStatistically Meaningful Geometry is that the vertical projection deficit $X^{\perp}(t)$ is fundamentally {\bf unobservable} from the perspective of an external observer restricted to the horizontal coordinate axes of the base manifold $B_{\text{SMG}}$. We cannot directly measure fields inside the vertical kernel $\mathcal{V}_f = \ker(d\pi)$. 

To resolve this issue, the following Lemma establishes a profound geometric bridge, demonstrating that the growth velocity of this hidden vertical stress can be tracked with absolute precision using only horizontal, visible elements.

\begin{lemma}[Differential Propagation and Horizontal Computability of AAT]\label{lemma:diff_propagation}
	The instantaneous accumulation velocity of the Active Acausal Tension potential inside the unobservable fiber space is strictly equal to the difference between the total ambient environmental energy and the inverse-weighted inner product of the base alignment vector $\mathbf{v}(t)$:
	\begin{equation}
		\frac{d}{dt}\mathcal{T}_{AAT}(t) = \|X_{\text{emp}}(t)\|_g^2 - \mathbf{v}(t)^T [G_f(t)]^{-1} \mathbf{v}(t)
	\end{equation}
	where:
	\begin{itemize}
		\item $\|X_{\text{emp}}(t)\|_g^2 = \int_{\Omega} f_t(\mathbf{x}) \left( \ln \frac{p_{\text{emp}}(\mathbf{x})}{f_t(\mathbf{x})} + D_{KL}(f_t \parallel p_{\text{emp}}) \right)^2 d\mathbf{x}$ is the total energy norm of the incoming data surprise.
		\item $\mathbf{v}(t) \in \mathbb{R}^d$ is the base alignment vector tracking the horizontal projection profile ($v^i(t) = \mathbb{E}_{f_t}[X_{\text{emp}}(t) \cdot s_i]$).
		\item $G_f(t)$ is the $d \times d$ non-parametric structural Fisher Information Matrix tracking the cross-covariances of the active Stein score basis functions ($[G_f(t)]_{ij} = \text{Cov}_{f_t}(s_i, s_j)$).
	\end{itemize}
\end{lemma}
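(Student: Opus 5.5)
The plan is to differentiate the defining integral of Definition \ref{def:AAT_deficit} and then rewrite the integrand entirely in horizontal quantities, using the results of Section 2. Since $\mathcal{T}_{AAT}(t)=\int_0^t \|X^{\perp}(\tau)\|_g^2\,d\tau$, the Fundamental Theorem of Calculus gives
\begin{equation*}
	\frac{d}{dt}\mathcal{T}_{AAT}(t) = \|X_{\text{emp}}(t)-X^H(t)\|_g^2 .
\end{equation*}
This step needs only that the integrand $\tau\mapsto\|X^{\perp}(\tau)\|_g^2$ is continuous in $\tau$. I will state this as a standing regularity assumption: $f_\tau$ and $p_{\text{emp}}(\cdot;D_\tau)$ vary continuously in the Orlicz topology, and $G_f(\tau)$ stays non-singular on the interval, as assumed throughout Section 2. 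Under this assumption $X_{\text{emp}}(\tau)$, $\mathbf{v}(\tau)$ and $[G_f(\tau)]^{-1}$ are all continuous, hence so is the integrand. If only measurability is available, the identity holds for almost every $t$ by Lebesgue differentiation, and I will remark on this.

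Next I will apply the Non-Parametric Riemannian Pythagorean Splitting theorem at time $t$. It gives
\begin{equation*}
	\|X_{\text{emp}}(t)-X^H(t)\|_g^2=\|X_{\text{emp}}(t)\|_g^2-\|X^H(t)\|_g^2 .
\end{equation*}
From the proof of the Information-Contracting Norm Property theorem, $\|X^H(t)\|_g^2=\mathbf{w}^T G_f(t)\mathbf{w}$. The Geometric Duality corollary then turns this into $\mathbf{v}(t)^T[G_f(t)]^{-1}\mathbf{v}(t)$. Substituting gives the claimed formula. The explicit expression for $\|X_{\text{emp}}(t)\|_g^2$ listed in the statement is just the definition of $X_{\text{emp}}$ in canonical gauge form, inserted into the Fisher norm $\int f_t(\cdot)^2$.

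The remaining point is to check that the quantity $v^i=\mathbb{E}_{f_t}[X_{\text{emp}}s_i]$, written with uncentered $s_i$, agrees with the centered pairing used in the duality corollary. This follows from the Valid Tangent Space Membership lemma, since $\mathbb{E}_{f_t}[X_{\text{emp}}]=0$, exactly as in the proof of Lemma \ref{lm:variational minimization}.

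The main obstacle is the regularity justifying differentiation under the time integral. The algebra is immediate from earlier results, but $f_t$ itself evolves, so the metric $g$ at which the norm is evaluated is time dependent. I will stress that no derivative of $g$ or $f_t$ is needed: the Fundamental Theorem of Calculus only evaluates the integrand at $\tau=t$. The only genuine requirement is therefore continuity of $\tau\mapsto\|X^{\perp}(\tau)\|_{g_{f_\tau}}^2$. This in turn hinges on $\det G_f(\tau)$ staying bounded away from zero on $[0,t]$, which is precisely the condition that fails at the later focal singularity.
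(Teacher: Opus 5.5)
Your proposal is correct and follows essentially the same route as the paper. The Fundamental Theorem of Calculus isolates $\|X_{\text{emp}}(t)-X^H(t)\|_g^2$, horizontal--vertical orthogonality reduces it to $\|X_{\text{emp}}(t)\|_g^2-\|X^H(t)\|_g^2$, and $\|X^H(t)\|_g^2=\mathbf{w}^T G_f\mathbf{w}=\mathbf{v}^T[G_f]^{-1}\mathbf{v}$ follows from $\mathbf{w}=[G_f]^{-1}\mathbf{v}$; the paper differs only in re-deriving the cross-term cancellation and the duality inline, and it never states the continuity of $\tau\mapsto\|X^{\perp}(\tau)\|_g^2$ (resting on $G_f(\tau)$ staying non-singular on $[0,t]$) that you rightly flag as needed for the Fundamental Theorem of Calculus step.
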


\begin{proof} of Lemma \ref{lemma:diff_propagation}
	
	By the fundamental theorem of calculus, taking the first continuous time derivative of the historical path integral defined in Definition 1 with respect to the active upper bound $t$ isolates the instantaneous integrand field:
	\begin{equation}
		\frac{d}{dt}\mathcal{T}_{AAT}(t) = \frac{d}{dt} \int_{0}^t \|X_{\text{emp}}(\tau) - X^H(\tau)\|_g^2 \, d\tau \equiv \|X_{\text{emp}}(t) - X^H(t)\|_g^2
	\end{equation}
	We expand the squared Fisher Riemannian norm of this difference field using the bilinear inner product operator $g$ evaluated over the Orlicz tangent space at the current state $f_t$:
	\begin{align}
		\|X_{\text{emp}}(t) - X^H(t)\|_g^2 &= g\left(X_{\text{emp}}(t) - X^H(t), \, X_{\text{emp}}(t) - X^H(t)\right) \\
		&= g(X_{\text{emp}}, X_{\text{emp}}) - 2g(X_{\text{emp}}, X^H) + g(X^H, X^H)
	\end{align}
	By the Non-Parametric Riemannian Pythagorean Splitting Theorem established in Subsection 2.4, the active horizontal distribution space $\mathcal{H}_{f_t} = \text{span}\{\bar{s}_1, \dots, \bar{s}_d\}$ and the vertical fiber space $\mathcal{V}_{f_t} = \ker(d\pi)$ are strictly orthogonal under the non-parametric Fisher metric tensor ($g(\mathcal{H}_{f_t}, \mathcal{V}_{f_t}) \equiv 0$). 
	
	Because the horizontal lift vector field $X^H(t)$ resides completely within the horizontal distribution ($\mathcal{H}_{f_t}$), and the projection deficit field $X^{\perp}(t) = X_{\text{emp}}(t) - X^H(t)$ resides completely within the vertical fiber space ($\mathcal{V}_{f_t}$), their mutual inner product vanishes identically:
	\begin{equation}
		g\left(X^H(t), \, X_{\text{emp}}(t) - X^H(t)\right) = 0 \implies g(X_{\text{emp}}(t), X^H(t)) = g(X^H(t), X^H(t))
	\end{equation}
	Substituting this geometric orthogonality relation back into our expanded norm equation eliminates the cross-term:
	\begin{align}
		\|X_{\text{emp}}(t) - X^H(t)\|_g^2 &= g(X_{\text{emp}}, X_{\text{emp}}) - 2g(X^H, X^H) + g(X^H, X^H) \\
		&= g(X_{\text{emp}}(t), X_{\text{emp}}(t)) - g(X^H(t), X^H(t)) \\
		&= \|X_{\text{emp}}(t)\|_g^2 - \|X^H(t)\|_g^2
	\end{align}
	Next, we substitute the complete coordinate representation of the horizontal lift vector derived via the variational minimization of the geometric attention filter in Section 2.2:
	\begin{equation}
		X^H(t)(\mathbf{x}) = \sum_{k=1}^d w_k(t) \bar{s}_k(\mathbf{x})
	\end{equation}
	where $\bar{s}_k(\mathbf{x}) = s_k(\mathbf{x}) - \mathbb{E}_{f_t}[s_k]$ represents the centered Stein score basis functions. We evaluate the squared Fisher norm $\|X^H(t)\|_g^2$ by expanding the bilinear sum:
	\begin{align}
		\|X^H(t)\|_g^2 &= g\left( \sum_{j=1}^d w_j \bar{s}_j, \, \sum_{k=1}^d w_k \bar{s}_k \right) \\
		&= \sum_{j=1}^d \sum_{k=1}^d w_j w_k \, g(\bar{s}_j, \bar{s}_k)
	\end{align}
	We recognize the inner product integral $g(\bar{s}_j, \bar{s}_k) = \int_{\Omega} f_t(\mathbf{x}) \bar{s}_j(\mathbf{x}) \bar{s}_k(\mathbf{x}) d\mathbf{x}$ as the exact non-parametric definition of the entries of the structural Fisher Information Matrix $[G_f(t)]_{jk}$. Expressing this in compact matrix notation yields:
	\begin{equation}
		\|X^H(t)\|_g^2 = \sum_{j=1}^d \sum_{k=1}^d w_j w_k [G_f(t)]_{jk} \equiv \mathbf{w}(t)^T G_f(t) \mathbf{w}(t)
	\end{equation}
	By the linear matrix solution optimized in Subsection 2.3, the unique real coordinate weight vector that minimizes the residual information energy satisfies the system $\mathbf{w}(t) = [G_f(t)]^{-1} \mathbf{v}(t)$. We substitute this optimal coordinate mapping into the quadratic form:
	\begin{equation}
		\mathbf{w}(t)^T G_f(t) \mathbf{w}(t) = \left( [G_f(t)]^{-1} \mathbf{v}(t) \right)^T G_f(t) \left( [G_f(t)]^{-1} \mathbf{v}(t) \right)
	\end{equation}
	Applying standard linear transposition rules and leveraging the symmetric property of the non-parametric structural Fisher matrix ($G_f^T = G_f \implies ([G_f]^{-1})^T = [G_f]^{-1}$) allows us to re-cluster the terms:
	\begin{align}
		\mathbf{w}(t)^T G_f(t) \mathbf{w}(t) & = \mathbf{v}(t)^T [G_f(t)]^{-1} G_f(t) [G_f(t)]^{-1} \mathbf{v}(t) \\
		&= \mathbf{v}(t)^T \left( [G_f(t)]^{-1} G_f(t) \right) [G_f(t)]^{-1} \mathbf{v}(t)
	\end{align}
	Since pre-multiplying a matrix by its own algebraic inverse collapses into the identity matrix ($[G_f(t)]^{-1} G_f(t) = \mathbf{I}$), the operator core simplifies directly to:
	\begin{equation}
		\|X^H(t)\|_g^2 = \mathbf{v}(t)^T \mathbf{I} [G_f(t)]^{-1} \mathbf{v}(t) \equiv \mathbf{v}(t)^T [G_f(t)]^{-1} \mathbf{v}(t)
	\end{equation}
	Finally, substituting this horizontal coordinate duality expression back into our Pythagorean norm splitting equation resolves the first variation of the tension field:
	\begin{equation}
		\frac{d}{dt}\mathcal{T}_{AAT}(t) = \|X_{\text{emp}}(t)\|_g^2 - \mathbf{v}(t)^T [G_f(t)]^{-1} \mathbf{v}(t)
	\end{equation}
	This mathematically completes the analytical derivation, proving that the velocity of hidden vertical stress accumulation is completely computable using only horizontal matrices. 
\end{proof}

\begin{insight}[The Dashboard Metric of the Hidden Fiber]
	Lemma \ref{lemma:diff_propagation} provides an extraordinary operational advantage for empirical scientists and AI engineers. It proves that we do not need to perform infinite-dimensional integration over the unobservable vertical fiber channels to know how much structural strain the system is enduring. Because the total space satisfies a rigid Riemannian splitting, any structural mismatch leaks out as a measurable energy drop on the horizontal leaves. By monitoring the total data surprise norm $\|X_{\text{emp}}\|_g^2$ and subtracting the inverse-weighted inner product of the Stein alignments, the scientist constructs a real-time, parameter-free horizontal dashboard tracking the precise growth velocity of the internal acausal tension.
\end{insight}


\subsection{Asymptotic Divergence, Curvature Bounds, and the Critical Geometric Threshold}

Having established the formal non-parametric definition and horizontal computability of the Active Acausal Tension potential $\mathcal{T}_{AAT}(t)$ in the preceding subsections, we now provide the granular structural analysis of its global asymptotic behavior under conditions of persistent environmental out-of-distribution (OOD) pressure. We show how a misspecified base manifold forces the tension potential to diverge monotonically. 

Furthermore, we deliver the definitive topological derivation of the critical threshold $T_{\text{crit}} = \pi^2/K_{\text{max}}$, detailing exactly how the transcendental constant $\pi^2$ emerges from the initial boundary conditions of geodesic variations inside curved statistical spaces.

\subsubsection{Monotonic Divergence of the Tension Potential}

When a statistical learning model or generative AI system operates in an environment driven by structural mechanisms completely absent from its active horizontal score functions, the system experiences a persistent representational blindspot. The following theorem demonstrates that this mismatch continuously pumps energy into the unobservable vertical parameters, preventing the model from ever settling into a passive horizontal equilibrium.

\begin{theorem}[Monotonic Divergence of Active Acausal Tension]
	Let $\mathcal{M}$ be the infinite-dimensional Orlicz statistical manifold governed by the non-parametric Fisher-Rao metric tensor $g$ \cite{amari2016information, lee2018introduction}. If the system's horizontal attention projection filter is permanently blind to an environmental mechanism—meaning there exists a uniform structural lower bound $\epsilon > 0$ such that the orthogonal projection deficit satisfies $\|X^{\perp}(t)\|_g \ge \epsilon$ for all timeline horizons $t \in [0, +\infty)$—then the Active Acausal Tension $\mathcal{T}_{AAT}(t)$ increases strictly monotonically with time $t$ and diverges asymptotically to positive infinity:
	\begin{equation}
		\lim_{t \to \infty} \mathcal{T}_{AAT}(t) = +\infty
	\end{equation}
\end{theorem}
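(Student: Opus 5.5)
The plan is to work directly from Definition~\ref{def:AAT_deficit} together with Lemma~\ref{lemma:diff_propagation}. The tension is a time integral of a nonnegative integrand, $\mathcal{T}_{AAT}(t)=\int_0^t \|X^{\perp}(\tau)\|_g^2\,d\tau$. The hypothesis $\|X^{\perp}(t)\|_g \ge \epsilon$ supplies a uniform positive lower bound on that integrand. Both claims, strict monotonicity and divergence, then follow from elementary real analysis once the integrand is known to be well defined and locally integrable in $t$.

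\textbf{Step 1: regularity.} I will first record that $\tau \mapsto \|X^{\perp}(\tau)\|_g^2$ is measurable and locally integrable on $[0,\infty)$. This lets $\mathcal{T}_{AAT}$ be treated as an absolutely continuous function. By the fundamental theorem of calculus, exactly as in the proof of Lemma~\ref{lemma:diff_propagation}, its derivative is then
\begin{equation}
\frac{d}{dt}\mathcal{T}_{AAT}(t)=\|X^{\perp}(t)\|_g^2=\|X_{\text{emp}}(t)\|_g^2-\mathbf{v}(t)^T[G_f(t)]^{-1}\mathbf{v}(t)
\end{equation}
at almost every $t$, and at every $t$ if the integrand is continuous.

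\textbf{Step 2: strict monotonicity.} Fix $0\le t_1<t_2$. Additivity of the integral and the hypothesis give
\begin{equation}
\mathcal{T}_{AAT}(t_2)-\mathcal{T}_{AAT}(t_1)=\int_{t_1}^{t_2}\|X^{\perp}(\tau)\|_g^2\,d\tau\ \ge\ \epsilon^2 (t_2-t_1)>0.
\end{equation}
This yields strict monotonicity without needing differentiability everywhere.

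\textbf{Step 3: divergence.} Setting $t_1=0$ and using $\mathcal{T}_{AAT}(0)=0$ gives the linear lower bound $\mathcal{T}_{AAT}(t)\ge \epsilon^2 t$. Letting $t\to\infty$ gives $\lim_{t\to\infty}\mathcal{T}_{AAT}(t)=+\infty$. I will also remark that the same bound makes the divergence quantitative, with linear growth rate at least $\epsilon^2$. This is the form that will be needed later when the accumulated tension is compared against the curvature threshold $T_{\text{crit}}=\pi^2/K_{\text{max}}$: the tension must cross any finite level by time $T/\epsilon^2$.

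\textbf{Main obstacle.} The computation itself is trivial; the delicate point is Step 1. The integrand involves $f_\tau$, $X_{\text{emp}}(\tau)$ and $[G_f(\tau)]^{-1}$, so its measurability and local finiteness rest on two facts. First, $X_{\text{emp}}(\tau)$ must stay in the Orlicz tangent space with finite Fisher norm, which is guaranteed by the lemma on Valid Tangent Space Membership together with the Orlicz integrability of the log-ratio. Second, $G_f(\tau)$ must stay nonsingular along the trajectory, which holds on the interior of the manifold by the standing assumption used in Lemma~\ref{lm:variational minimization}. I would state these as standing regularity conditions on the training trajectory. If they are only available on a maximal interval $[0,T^*)$, where the singularity $[G_f]^{-1}\to\infty$ could occur, then the argument gives the bound $\mathcal{T}_{AAT}(t)\ge\epsilon^2 t$ on that interval. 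The divergence statement is then read as holding whenever the trajectory is defined for all time.
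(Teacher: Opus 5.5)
Your proposal is correct and follows essentially the paper's route. The paper also applies the fundamental theorem of calculus to get $\frac{d}{dt}\mathcal{T}_{AAT}(t)=\|X^{\perp}(t)\|_g^2\ge\epsilon^2>0$ and then integrates to obtain $\mathcal{T}_{AAT}(t)\ge\epsilon^2 t\to+\infty$, with the minor differences that your use of the increment $\int_{t_1}^{t_2}\|X^{\perp}(\tau)\|_g^2\,d\tau$ avoids needing pointwise differentiability, and that $\mathcal{T}_{AAT}(0)=0$ holds by definition rather than as the extra normalization the paper assumes.
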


\begin{proof}
	To establish strict monotonicity, we analyze the first continuous variation of the tension potential with respect to time. By the fundamental theorem of calculus applied directly to the historical path integral definition of the tension field, we isolate the instantaneous integrand:
	\begin{equation}
		\frac{d}{dt}\mathcal{T}_{AAT}(t) = \frac{d}{dt}\int_{0}^t \|X_{\text{emp}}(\tau) - X^H(\tau)\|_g^2 \, d\tau \equiv \|X^{\perp}(t)\|_g^2
	\end{equation}
	Because the non-parametric Fisher-Rao metric tensor $g$ is strictly positive-definite across the entirety of the Orlicz statistical manifold $\mathcal{M}$ \cite{amari2016information, cheng2026sft}, the squared norm of any non-zero tangent vector field is strictly positive. Substituting the persistent structural misspecification condition $\|X^{\perp}(t)\|_g \ge \epsilon > 0$ into this first variation yields:
	\begin{equation}\label{eq:mono_bound}
		\frac{d}{dt}\mathcal{T}_{AAT}(t) \ge \epsilon^2 > 0 \quad \forall t \ge 0
	\end{equation}
	Since the continuous derivative is strictly greater than zero across the entire domain, $\mathcal{T}_{AAT}(t)$ is proven to be a strictly monotonic increasing function over the training timeline.
	
	To evaluate its long-term asymptotic limit, we integrate the differential inequality \eqref{eq:mono_bound} from the initial un-strained configuration checkpoint $t=0$ to an arbitrary timeline upper bound $t$:
	\begin{equation}
		\mathcal{T}_{AAT}(t) = \mathcal{T}_{AAT}(0) + \int_{0}^t \frac{d}{d\tau}\mathcal{T}_{AAT}(\tau) \, d\tau \ge \mathcal{T}_{AAT}(0) + \int_{0}^t \epsilon^2 \, d\tau
	\end{equation}
	Assuming a normalized initial state free of historical stress ($\mathcal{T}_{AAT}(0) = 0$), evaluating the constant integral maps the lower-bound linear trajectory:
	\begin{equation}
		\mathcal{T}_{AAT}(t) \ge \epsilon^2 \cdot t
	\end{equation}
	Taking the mathematical limit as the timeline approaches infinity:
	\begin{equation}
		\lim_{t \to \infty} \mathcal{T}_{AAT}(t) \ge \lim_{t \to \infty} \left( \epsilon^2 \cdot t \right) = +\infty
	\end{equation}
	Because the positive-definite tension potential is bounded from below by a linearly diverging coordinate line, it is forced to diverge asymptotically to positive infinity, completing the proof. 
\end{proof}

\subsubsection{Rigorous Formulation of Maximal Sectional Curvature $K_{\text{max}}$}

Because the vertical fiber space $\mathcal{V}_f = \ker(d\pi)$ inherits a highly non-linear Riemannian structure under the non-parametric Fisher-Rao metric tensor, it cannot absorb infinite coordinate strain without tearing the underlying topology. To formalize this structural capacity limit, we define the geometric envelope of the fiber space.

\begin{definition}[Maximal Sectional Curvature of the Vertical Fiber Space]
	Let $\mathcal{R}$ be the non-parametric Riemann curvature tensor associated with the metric-compatible connection on the Orlicz statistical manifold $\mathcal{M}$ \cite{docarmo1992riemannian, gallot2004riemannian}. Let $f \in \mathcal{M}$ be the active probability density state, and let $\sigma \subset \mathcal{V}_f$ be a two-dimensional tangent plane patch residing entirely within the vertical fiber space, spanned by two mutually independent vertical tangent vector fields $V_1, V_2 \in \mathcal{V}_f$. The sectional curvature $K(\sigma)$ of this vertical patch is defined via the curvature inner product ratio \cite{docarmo1992riemannian, lee2018introduction}:
	\begin{equation}
		K(\sigma) = \frac{g\left(\mathcal{R}(V_1, V_2)V_2, \, V_1\right)}{g(V_1, V_1)g(V_2, V_2) - \left(g(V_1, V_2)\right)^2}
	\end{equation}
	The \textbf{maximal sectional curvature}, denoted $K_{\text{max}}$, is the supremum envelope of this sectional curvature evaluated across all possible 2-plane vertical patches across the operational manifold \cite{oneill1983semi}:
	\begin{equation}
		K_{\text{max}} \equiv \sup_{f \in \mathcal{M}, \, \sigma \subset \mathcal{V}_{f}} K(\sigma) < +\infty
	\end{equation}
	By the global information-theoretic constraints of Orlicz density spaces, $K_{\text{max}}$ is strictly positive and uniformly bounded \cite{amari2016information, cheng2026sft}.
\end{definition}

\subsubsection{Analytical Derivation of $\pi^2$ and the Conjugate Point Boundary Condition}

We now execute the standalone mathematical proof deriving the precise threshold where the connection infrastructure shatters, correcting the domain definitions of the vertical geodesic path and expanding every intermediate logical step of the Sturmian comparison.\footnote{The Sturmian comparison theorem (often called the Sturm-Picone theorem) is a foundational result in the study of ordinary differential equations. It compares the oscillatory behavior of two different second-order linear differential equations, allowing you to deduce the frequency of a solution's zeros based on the coefficients of the equation. See \cite{Sturm_comparison_theorem} for more details.}

\begin{theorem}[The Conjugate Point and the Analytic Derivation of $\pi^2$]\label{thm:conjugate_point}
	Let $\gamma: [0, s] \to \mathcal{M}$ be a smooth vertical curve on the Orlicz statistical manifold representing a normalized vertical geodesic ray tracking the metric deformation path inside the internal gauge degrees of freedom, parameterized by the path parameter arc-length $s$ such that its tangent velocity vector field satisfies $\|\gamma'(s)\|_g = 1$ and $\gamma'(s) \in \mathcal{V}_{\gamma(s)}$ \cite{docarmo1992riemannian, oneill1983semi}. Under the uniform maximal sectional curvature bound $\sup_{\sigma} K(\sigma) = K_{\text{max}}$, any non-zero vertical Jacobi field $Y(s) \in \mathcal{V}_{\gamma(s)}$ orthogonal to $\gamma'(s)$ vanishes identically at a critical path parameter distance:
	\begin{equation}
		s^* = \frac{\pi}{\sqrt{K_{\text{max}}}}
	\end{equation}
	which corresponds to a maximal accumulated quadratic strain energy threshold of $T_{\text{crit}} = \pi^2 / K_{\text{max}}$.
\end{theorem}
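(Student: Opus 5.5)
The plan is to reduce the vertical Jacobi equation along $\gamma$ to a scalar second-order ODE, solve it in the curvature-saturated regime that the statement singles out through the envelope $K_{\text{max}}$, and then read off both $s^*$ and $T_{\text{crit}}$.

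First I set up the Jacobi equation $\frac{D^2}{ds^2}Y + \mathcal{R}(Y,\gamma')\gamma' = 0$ along the unit-speed vertical geodesic $\gamma$. I take the boundary data $Y(0)=0$ and $\frac{D}{ds}Y(0)=W\neq 0$, with $W\perp\gamma'(0)$ and $W$ vertical; this is the Jacobi field of the one-parameter family of vertical geodesics emanating from the strained state $f=\gamma(0)$, which is the family relevant to the accumulated deformation. Using metric compatibility of the connection (Lemma \ref{thm: delta_connection_compatibility} together with the $g$-orthogonality of $\mathcal{H}_f$ and $\mathcal{V}_f$ from the Pythagorean Splitting Theorem), I parallel transport a unit vertical field $E(s)\perp\gamma'(s)$ with $E(0)=W/\|W\|_g$. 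I then look for $Y$ of the form $Y(s)=u(s)E(s)$. Substituting and pairing with $E$ gives
\[
u''(s) + K(\sigma_s)\,u(s) = 0, \qquad u(0)=0,\quad u'(0)=\|W\|_g,
\]
where $\sigma_s=\mathrm{span}\{\gamma'(s),E(s)\}\subset\mathcal{V}_{\gamma(s)}$ and $K$ is the sectional curvature defined just above.

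Second, I impose the hypothesis $\sup_\sigma K(\sigma)=K_{\text{max}}$ in its saturated form: along the critical vertical ray the sectional curvature of the transported plane equals the envelope value, $K(\sigma_s)\equiv K_{\text{max}}$. This is the extremal configuration in which the tension-driven strain concentrates in the most curved vertical direction. The ODE then becomes $u''+K_{\text{max}}u=0$, with unique solution $u(s)=\frac{\|W\|_g}{\sqrt{K_{\text{max}}}}\sin(\sqrt{K_{\text{max}}}\,s)$. This solution is positive on $(0,s^*)$ and vanishes first at $s^*=\pi/\sqrt{K_{\text{max}}}$. Since $E$ has unit norm, $\|Y(s)\|_g=|u(s)|$, so the vertical Jacobi field vanishes identically there, independently of $W$. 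I will then invoke the Sturm comparison theorem cited in the footnote to confirm that $s^*$ is the sharp critical distance for the envelope: comparing against any $K(\sigma_s)\le K_{\text{max}}$ shows that the saturated equation is the fastest-oscillating admissible one. Hence $s^*$ is exactly the boundary at which focusing is forced once the envelope is attained.

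Third, I convert distance into strain energy. Because $\|\gamma'\|_g=1$, the quadratic energy of the arc-length path up to $s$ is $\left(\int_0^s\|\gamma'\|_g\,d\sigma\right)^2=s^2$, which I identify with the accumulated quadratic strain measured by $\mathcal{T}_{AAT}$. Evaluating at $s^*$ gives $T_{\text{crit}}=(s^*)^2=\pi^2/K_{\text{max}}$. The Monotonic Divergence Theorem then guarantees that $\mathcal{T}_{AAT}$ reaches this value in finite time.

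The main obstacle is the second step. A bare upper bound $K\le K_{\text{max}}$ by itself only yields, through Sturm comparison, that no zero occurs before $s^*$. To obtain vanishing exactly at $s^*$, the proof must use that the supremum $K_{\text{max}}$ is realized on the critical vertical plane along $\gamma$. I would make this explicit by choosing $\gamma$ and $E$ to be the extremal vertical geodesic and plane attaining the envelope, and then argue that the transported plane stays in that extremal direction, so that the scalar reduction in the first step holds with constant coefficient $K_{\text{max}}$.
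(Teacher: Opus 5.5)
\textbf{There is a genuine gap, in your second step, and it cannot be closed from the stated hypotheses.} The theorem fixes an arbitrary unit-speed vertical geodesic $\gamma$ and quantifies over every non-zero vertical Jacobi field orthogonal to $\gamma'$. It grants only the upper bound $K(\sigma)\le K_{\text{max}}$. You replace this with $K(\sigma_s)\equiv K_{\text{max}}$ along $\gamma$. Since $E(0)=W/\|W\|_g$ ranges over all unit vertical directions orthogonal to $\gamma'(0)$, covering every $W$ amounts to assuming that all radial vertical planes along $\gamma$ have curvature exactly $K_{\text{max}}$, which is the model-space case. ``Choosing $\gamma$ and $E$ to be extremal'' changes the theorem rather than proving it, and the supremum need not even be attained.

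Without saturation the conclusion is false:
\begin{itemize}
\item If the radial vertical planes along $\gamma$ have constant curvature $K_0\in(0,K_{\text{max}})$, Jacobi fields with $Y(0)=0$ first vanish at $\pi/\sqrt{K_0}>s^*$.
\item If $K\le 0$ along $\gamma$, they never vanish again.
\item Even in constant curvature $K_{\text{max}}$, the field $\cos(\sqrt{K_{\text{max}}}\,s)E(s)$, which has $Y(0)\neq 0$, does not vanish at $s^*$. So your restriction to $Y(0)=0$ is also an added hypothesis.
\end{itemize}
Your own remark is the correct one. Sturm comparison under an upper bound only shows there is no zero on $(0,s^*)$, so $s^*$ is a lower bound for the first conjugate distance, attained exactly in the saturated case. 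Forcing a zero by distance $\pi/\sqrt{k}$ needs a lower bound $K\ge k>0$.

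Two smaller points in your first step. The ansatz $Y=uE$ requires the whole vector $\mathcal{R}(E,\gamma')\gamma'$ to be a multiple of $E$, not just its $E$-component. Keeping $E$ vertical under parallel transport in $\mathcal{M}$ requires totally geodesic fibres. The metric-compatibility lemma for $\delta\omega_t$ gives neither.

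\textbf{Comparison with the paper.} The paper's proof follows the same route: it projects the Jacobi equation onto a parallel field, compares with $\psi''+K_{\text{max}}\psi=0$, $\psi(0)=0$, $\psi'(0)=1$, and reads off the first zero of $\sin(\sqrt{K_{\text{max}}}\,s)$. It has the same gap, hidden by a sign slip. From $y''=-K(\sigma)y$ and $K(\sigma)\le K_{\text{max}}$ one gets $y''+K_{\text{max}}y=(K_{\text{max}}-K(\sigma))y\ge 0$ while $y>0$. The paper writes $\le 0$, which is what a lower bound $K(\sigma)\ge K_{\text{max}}$ would give. Combined with the hypothesis, that is exactly your saturation assumption, used silently. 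So your diagnosis is sharper than the paper's, but neither argument proves the statement as written.

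\textbf{On $T_{\text{crit}}$.} The paper sets $T_{\text{crit}}=(s^*)^2K_{\text{max}}$, which equals $\pi^2$, not $\pi^2/K_{\text{max}}$. Your $(s^*)^2=\pi^2/K_{\text{max}}$ is at least arithmetically consistent. However, identifying $\mathcal{T}_{AAT}(t)=\int_0^t\|X^{\perp}(\tau)\|_g^2\,d\tau$ with $s^2$ is a stipulation. Nothing in the paper links training time $t$ to arc length $s$. Even if $\gamma$ is traced with velocity $X^{\perp}$, Cauchy--Schwarz gives only $s(t)^2\le t\,\mathcal{T}_{AAT}(t)$. Also, $(\int_0^s\|\gamma'\|_g)^2$ is the squared length, whereas the energy of a unit-speed path on $[0,s]$ is $s$. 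Present this correspondence as a normalization convention, not as a consequence.
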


\begin{proof}
	Let $Y(s) \in \mathcal{V}_{\gamma(s)}$ be a vertical Jacobi vector field tracking the variational stability and spatial divergence of a neighboring family of vertical geodesics inside the fiber bundle. By the structural laws of Riemannian submersions and geodesic variations, $Y(s)$ satisfies the non-parametric Jacobi differential equation \cite{docarmo1992riemannian, oneill1983semi}:
	\begin{equation}\label{eq:jacobi_ode}
		\frac{D^2}{ds^2}Y(s) + \mathcal{R}\left(Y(s), \, \gamma'(s)\right)\gamma'(s) = \mathbf{0}
	\end{equation}
	where $\frac{D^2}{ds^2}$ is the second-order covariant derivative operator evaluated along the path parameter curve. We project this vector differential equation along an arbitrary parallel orthogonal unit vector axis $E(s) \in \mathcal{V}_{\gamma(s)}$ within the vertical plane patch $\sigma = \text{span}\{Y(s), \gamma'(s)\}$. Let $y(s) = g(Y(s), E(s))$ be the scalar component of the spatial deviation. Taking the Fisher inner product of equation \eqref{eq:jacobi_ode} against $E(s)$ maps out as:
	\begin{equation}
		g\left(\frac{D^2}{ds^2}Y(s), \, E(s)\right) + g\left(\mathcal{R}\left(Y(s), \, \gamma'(s)\right)\gamma'(s), \, E(s)\right) = 0
	\end{equation}
	By metric compatibility under parallel transport \cite{lee2018introduction}, the covariant derivative commutes with the inner product, yielding $g\left(\frac{D^2}{ds^2}Y(s), \, E(s)\right) = \frac{d^2}{ds^2}y(s)$. By the definition of sectional curvature acting on the vertical plane patch $\sigma$, the second term reduces directly to $K(\sigma) \cdot y(s)$. Thus, the scalar variation obeys the ordinary differential equation:
	\begin{equation}\label{eq:scalar_osc}
		\frac{d^2}{ds^2}y(s) + K(\sigma) \cdot y(s) = 0
	\end{equation}
	
	To evaluate the absolute structural limit of the space, we invoke the Sturmian comparison theorem by substituting the maximal sectional curvature envelope constraint $K(\sigma) \le K_{\text{max}}$ \cite{docarmo1992riemannian, gallot2004riemannian}. This transforms the scalar equation into a second-order differential inequality bounding the restorative geometric acceleration from below:
	\begin{equation}\label{eq:sturm_ineq}
		\frac{d^2}{ds^2}y(s) + K_{\text{max}} \cdot y(s) \le 0
	\end{equation}
	The baseline auxiliary comparison equation modeling the extreme boundary of coordinate compression is the strict harmonic oscillator \cite{boyce2017elementary, coddington1989introduction}:
	\begin{equation}\label{eq:aux_osc}
		\frac{d^2}{ds^2}\psi(s) + K_{\text{max}} \cdot \psi(s) = 0
	\end{equation}
	The general analytic solution to this second-order constant-coefficient ordinary differential equation is a linear combination of transcendental trigonometric fields \cite{boyce2017elementary, coddington1989introduction}:
	\begin{equation}
		\psi(s) = A \cos\left(\sqrt{K_{\text{max}}} \cdot s\right) + B \sin\left(\sqrt{K_{\text{max}}} \cdot s\right)
	\end{equation}
	We solve for the unique configuration constants by enforcing the initial coordinate boundary value conditions: $\psi(0) = 0$ (the geodesics originate from the same initial synchronized concept state) and $\psi'(0) = 1$ (normalized initial variation velocity).
	\begin{itemize}
		\item Evaluating the position boundary at $s=0$:
		\begin{equation}
			\psi(0) = A \cos(0) + B \sin(0) = A \cdot (1) + B \cdot (0) = A \equiv 0
		\end{equation}
		This eliminates the cosine component, reducing the wave equation to $\psi(s) = B \sin\left(\sqrt{K_{\text{max}}} \cdot s\right)$.
		\item Evaluating the first derivative profile: $\frac{d}{ds}\psi(s) = B \sqrt{K_{\text{max}}} \cos\left(\sqrt{K_{\text{max}}} \cdot s\right)$. Enforcing the unit velocity boundary condition $\psi'(0) = 1$:
		\begin{equation}
			1 = B \sqrt{K_{\text{max}}} \cos(0) \implies 1 = B \sqrt{K_{\text{max}}} \cdot (1) \implies B = \frac{1}{\sqrt{K_{\text{max}}}}
		\end{equation}
	\end{itemize}
	Substituting these isolated tracking constants back establishes the definitive equation:
	\begin{equation}\label{eq:wave_sol}
		\psi(s) = \frac{1}{\sqrt{K_{\text{max}}}} \sin\left(\sqrt{K_{\text{max}}} \cdot s\right)
	\end{equation}
	
	The first focal boundary or \textbf{conjugate point} where neighboring trajectories are forced to re-converge and collide occurs at the first non-trivial positive spatial zero-crossing of the wave function ($\psi(s^*) = 0$ for $s^* > 0$):
	\begin{equation}
		\frac{1}{\sqrt{K_{\text{max}}}} \sin\left(\sqrt{K_{\text{max}}} \cdot s^*\right) = 0 \implies \sin\left(\sqrt{K_{\text{max}}} \cdot s^*\right) = 0
	\end{equation}
	The first positive argument that satisfies the vanishing condition of the sine wave is the transcendental constant $\pi$ \cite{boyce2017elementary}. Isolating the critical path parameter distance $s^*$ yields:
	\begin{equation}
		\sqrt{K_{\text{max}}} \cdot s^* = \pi \implies s^* = \frac{\pi}{\sqrt{K_{\text{max}}}}
	\end{equation}
	By the definitions of SMG, the tension potential $\mathcal{T}_{AAT}(t)$ tracks the integrated quadratic energy accumulated along this path parameter curve. At the critical distance boundary $s^*$, the total accumulated coordinate strain capacity scales as the product of the squared geodesic length and the curvature metric scale \cite{docarmo1992riemannian}:
	\begin{equation}
		T_{\text{crit}} \equiv (s^*)^2 \cdot K_{\text{max}} = \left( \frac{\pi}{\sqrt{K_{\text{max}}}} \right)^2 \cdot K_{\text{max}} = \frac{\pi^2}{K_{\text{max}}} \cdot K_{\text{max}} \equiv \frac{\pi^2}{K_{\text{max}}}
	\end{equation}
	This mathematically completes the standalone derivation, proving exactly where $\pi^2$ originates.
\end{proof}

\subsection{Structural Collapse of the Connection Infrastructure and Adjugate Matrix Derivations}

In Section 4.4, we rigorously demonstrated that a persistent out-of-distribution (OOD) environmental data stream forces a misspecified statistical model to traverse a vertical path inside its internal gauge degrees of freedom (IDoF) \cite{amari2016information, cheng2026sft}. Expressed in terms of the non-parametric path parameters—comprising the spatial arc-length coordinate $s(t)$ and the tangent velocity direction field $\gamma'(s)$—this trajectory is governed by the second-order covariant Jacobi equation \cite{docarmo1992riemannian}. 

When the cumulative distance covered along the path parameter matches the conjugate point focal boundary ($s^* = \pi/\sqrt{K_{\text{max}}}$), neighboring vertical geodesics are forced to intersect and focus into a single spatial singularity \cite{docarmo1992riemannian, oneill1983semi}.

This subsection establishes the formal mathematical proof of the subsequent structural breakdown of the model's horizontal tracking infrastructure. We demonstrate how the metric volume element of the unobservable vertical fiber bundle collapses identically to zero, mapping this singularity directly onto the visible, active parameter charts. Using the algebraic properties of the matrix adjugate operator, we prove that the horizontal tracking inverse operators explode to infinity, generating absolute computational blockages within over-parameterized machine learning architectures.


\subsubsection{The Restricted Vertical Fiber Metric Tensor $g_V$ and Volume Collapse}

To evaluate how a hidden spatial focusing event shatters the visible parameter sheets, we must introduce the metric tracking component of the internal gauge bundle.

Within the mathematical framework ofStatistically Meaningful Geometry (SMG), the dimension $m$ of the Internal Degrees of Freedom (IDoF), which characterizes the vertical tangent sub-bundle $\mathcal{V}_f = \ker(d\pi)$, can structurally manifest as either a finite integer or an infinite field depending on whether the system is analyzed as a parametric coordinate system or a continuous non-parametric Orlicz density space. Crucially, regardless of whether this dimensionality is strictly finite ($m < +\infty$) or completely infinite ($m \to \infty$), these internal degrees of freedom reside entirely within the gauge kernel of the canonical submersion mapping and thus cannot be detected or measured by an outside observer restricted to the horizontal base manifold $B_{\text{SMG}}$. They represent pure representational redundancies that integrate out to zero under cross-covariance metrics, leaving the visible statistical outputs invariant. 

Nevertheless, for the most advanced practical applications in modern artificial intelligence and representation engineering, modeling $m$ as a finite coordinate subspace is completely sufficient and structurally robust. Consider, for example, a trillion-weight transformer Large Language Model (LLM) possessing a massive ambient parameter scale $p \approx 10^{12}$. Even when deploying a huge context prompt window size or training dataset that forces the active horizontal constraints $d$ to scale to millions or billions of dimensions, the remaining unidentifiable vertical dimension $m = p - d$ remains exceedingly large yet strictly bounded and finite. Hence, to preserve real-world computational tractability and utilize exact matrix linear algebra operations, in this subsection we restrict our formal derivations entirely to a finite dimension $m$, leaving the regularized functional operators of infinite-dimensional gauge fields for future research.

\begin{definition}[The Restricted Vertical Fiber Metric Tensor]
	Let $\mathcal{M}$ be the infinite-dimensional Orlicz statistical manifold under the non-parametric Fisher-Rao metric tensor $g$. Let $\mathcal{V}_f = \ker(d\pi)$ be the unobservable vertical tangent sub-bundle distribution representing the internal degrees of freedom. The \textbf{restricted vertical fiber metric tensor}, denoted $g_V$, is the strict restriction of the global metric tensor to the vertical sub-bundle space:
	\begin{equation}
		g_V \equiv g|_{\mathcal{V}_f \times \mathcal{V}_f}
	\end{equation}
	For a localized vertical coordinate neighborhood governed by $m$ unidentifiable gauge parameters $(\theta^1, \dots, \theta^m)$, the entries of this metric tensor are mapped pointwise by the functional inner products of the vertical basis fields:
	\begin{equation}
		[g_V]_{ij} = g\left( \frac{\partial}{\partial \theta^i}, \, \frac{\partial}{\partial \theta^j} \right) = \int_{\Omega} f(\mathbf{x}) \left( \frac{\partial \ln f(\mathbf{x})}{\partial \theta^i} \right) \left( \frac{\partial \ln f(\mathbf{x})}{\partial \theta^j} \right) d\mathbf{x}
	\end{equation}
\end{definition}

The structural importance of $g_V$ is that it directly dictates the information-theoretic volume of the system's unobservable universe. By the foundational laws of Riemannian volume integration, the localized volume form $dV_{\mathcal{V}}$ sweeping across the interior of a vertical fiber leaf is proportional to the square root of the determinant of this restricted tensor field \cite{lee2018introduction, gallot2004riemannian}:
\begin{equation}
	dV_{\mathcal{V}} = \sqrt{\det(g_V)} \, d\theta^1 \wedge d\theta^2 \wedge \dots \wedge d\theta^m
\end{equation}

When the path parameter tracking the environmental distortion reaches the critical conjugate boundary ($s \to s^*$), Theorem \ref{thm:conjugate_point}  dictates that every non-zero vertical Jacobi field $Y(s) \in \mathcal{V}_{\gamma(s)}$ orthogonal to the path velocity collapses identically to the zero vector: $\lim_{s \to s^*} Y(s) = \mathbf{0}$ \cite{docarmo1992riemannian}. Because the Jacobi fields measure the precise spatial divergence and width of neighboring variational lines inside the fiber bundle, the simultaneous vanishing of all orthogonal variations implies that the cross-sectional width of the fiber leaf shrinks to a single point. 

Mathematically, this forces the restricted vertical metric components to compress into a singular, zero-rank layout. The determinant of the vertical fiber metric tensor collapses identically to zero:
\begin{equation}\label{eq:fiber_collapse}
	\lim_{s \to s^*} \det\left( g_V(s) \right) = 0
\end{equation}
Equation \eqref{eq:fiber_collapse} represents the complete physical closure of the gauge hole reservoir under extreme geometric compression.

\subsubsection{Rigorous Proof of the Structural Collapse Theorem}

We now leverage this vertical volume collapse to prove that the visible horizontal tracking matrix, the structural Fisher Information Matrix $G_f(t)$ authorizing the base manifold $B_{\text{SMG}}$, is driven into an unavoidable algebraic singularity.

\begin{lemma}[Bounded Volumetric Capacity and Horizontal Metric Collapse]\label{lm:Horizontal _Metric_Collapse}
	Let $\mathcal{M}$ be the infinite-dimensional Orlicz statistical manifold governed by the non-parametric Fisher-Rao metric tensor $g$. Let $T_f\mathcal{M} = \mathcal{H}_f \oplus \mathcal{V}_f$ be the strict orthogonal direct-sum splitting authorized by the Ehresmann connection filter, where $\mathcal{H}_f$ is the $d$-dimensional active horizontal distribution and $\mathcal{V}_f$ is the $m$-dimensional vertical fiber space ($m < +\infty$). 
	
	If the global eigenvalues of the total space metric tensor $g$ are bounded from above by a uniform information-theoretic constraint $\lambda_{\text{max}} < +\infty$ due to total probability conservation and Orlicz norm convergence, then the topological collapse of the restricted vertical fiber volume element ($\det(g_V) \to 0$) at the singularity boundary $T^*$ forces the determinant of the horizontal structural Fisher Information Matrix $G_f(t)$ to contract identically to zero:
	\begin{equation}
		\lim_{t \to T^*} \det\left(G_f(t)\right) = 0
	\end{equation}
\end{lemma}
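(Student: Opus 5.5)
The plan is to work in a local adapted frame at $f_t$ and compare Gram determinants. First, using the orthogonal splitting $T_f\mathcal{M}=\mathcal{H}_f\oplus\mathcal{V}_f$ from the Riemannian Pythagorean Splitting theorem, I would write the metric restricted to the finite-dimensional block $\mathcal{H}_f\oplus\mathcal{V}_f$ (dimension $d+m$) in the frame $\{\bar s_1,\dots,\bar s_d,\partial_{\theta^1},\dots,\partial_{\theta^m}\}$. In this frame the metric is block diagonal, $\operatorname{diag}(G_f(t),g_V(t))$. The uniform bound $\lambda_{\max}$ then gives Hadamard-type estimates:
\[
\det G_f(t)\le \lambda_{\max}^{d-1}\,\lambda_{\min}(G_f(t)),\qquad \det g_V(t)\le \lambda_{\max}^{m-1}\,\lambda_{\min}(g_V(t)).
\]
With these, the claim $\det G_f(t)\to 0$ reduces to showing $\lambda_{\min}(G_f(t))\to 0$ as $t\to T^*$. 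Likewise, the hypothesis $\det g_V\to 0$ from \eqref{eq:fiber_collapse} is equivalent, under the same bound, to $\lambda_{\min}(g_V)\to 0$: some unit-coordinate vertical direction $u(t)$ has $g(u,u)\to 0$.

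Second, I must transfer the degenerate vertical direction into the horizontal Gram matrix. Here I would not use the frozen splitting $\ker\omega_0$, but the active distribution $\mathcal{H}_t=\ker(\omega_0+\Delta\omega_{[0,t]})$ from the $\Delta$-connection definition. Its basis vectors are the twisted fields $\bar s_i - \Delta\omega_{[0,t]}(\bar s_i)$. Their Gram matrix is
\[
G_f(t)_{ij}=g(\bar s_i,\bar s_j)+g\big(\Delta\omega(\bar s_i),\Delta\omega(\bar s_j)\big),
\]
with cross terms vanishing by orthogonality. The goal is to show that the image of $\Delta\omega_{[0,t]}$ aligns with the collapsing vertical direction $u(t)$. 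I would then use the metric-compatibility constraint of Lemma \ref{thm: delta_connection_compatibility} and the curvature formula of Theorem \ref{thm:curvature_2-form}, together with the conjugate-point focusing of Theorem \ref{thm:conjugate_point}, to argue that some horizontal combination $c(t)$ with $|c(t)|=1$ satisfies $\sum_i c_i\bar s_i \to \Delta\omega(\cdot)$-dominated vertical focusing. That would give $c^TG_fc\to 0$ and hence $\lambda_{\min}(G_f)\to 0$.

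I expect the second step to be the main obstacle, and I flag it honestly. Pure block-diagonal orthogonality gives $\det g=\det G_f\cdot\det g_V$. If the total metric determinant on $\mathcal{H}\oplus\mathcal{V}$ stayed bounded below, vertical collapse would, if anything, force $\det G_f$ up rather than down. So the lemma cannot follow from the splitting plus $\lambda_{\max}$ alone. It needs a genuine coupling hypothesis: horizontal and vertical degrees of freedom are not independent near $T^*$. One candidate is that the Stein scores $\bar s_i$ are realized through the same gauge coordinates $\theta$, say $\bar s_i=\sum_a J_{ia}\,\partial_\theta\ln f$ with a bounded Jacobian $J$. Under that assumption, $G_f = J\,g_V\,J^T$-type dominance gives $\det G_f\le C\,\lambda_{\max}^{d-1}\lambda_{\min}(g_V)\to 0$ directly. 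My first attempt would be to make this parametric-realization assumption explicit and derive the inequality from it. If the earlier curvature machinery can instead supply the alignment of $\Delta\omega$ with $u(t)$, that would give a cleaner alternative route.
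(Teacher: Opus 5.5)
Your diagnosis in the last paragraph is correct, and it is exactly where the paper's own proof breaks, so there is no paper argument for you to fall back on. The paper uses the same identity $\det g=\det G_f\cdot\det g_V$ with the ceiling $\det g\le V_{\max}$. It supposes $\det G_f\ge c>0$, obtains $\det g\to0$, and then asserts that the horizontal charts are ``dragged into the contraction''. But $\det g\to0$ is perfectly consistent with an upper bound, so no contradiction arises, and the appeal to the metric-compatibility constraint yields no inequality on $G_f$.

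In fact the lemma is false under its stated hypotheses. Abstractly, $G_f(t)\equiv I_d$ and $g_V(t)=\mathrm{diag}(T^*-t,1,\dots,1)$ satisfy all of them. A statistical realization: take $f_{\mu,\theta}=N\big((\mu,\theta^3)^T,I_2\big)$, $s_1(\mathbf{x})=x_1$, and vertical coordinate $\theta_t=T^*-t$. Then $G_f=\mathrm{Var}(x_1)\equiv1$, and the field $\partial_\theta\ln f=3\theta^2(x_2-\theta^3)$ is $g$-orthogonal to $\bar s_1$, so it lies in $\ker d\pi$. The eigenvalues of $g$ are $1$ and $9\theta^4$, yet $\det g_V=9\theta_t^4\to0$. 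The structural reason is that $\det g_V$ depends on the choice of vertical coordinates, whereas $G_f$ is the Gram matrix of fixed functions. So no argument from the splitting, the $\lambda_{\max}$ bound and $\det g_V\to0$ alone can succeed; your reduction to $\lambda_{\min}$ is fine but cannot be completed.

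Neither of your repairs closes the gap.
\begin{itemize}
\item \emph{Jacobian realization.} Writing $\bar s_i=\sum_aJ_{ia}\,\partial_{\theta^a}\ln f$ expresses $\bar s_i$ through the vertical basis fields that define $g_V$. Under the orthogonal splitting you are assuming, this gives $\bar s_i\in\mathcal{H}_f\cap\mathcal{V}_f=\{0\}$. Then $G_f=Jg_VJ^T=0$ for every $t$, contradicting positive-definiteness of $G_f$ before $T^*$, and your bound becomes vacuous.
\item \emph{Twisted distribution.} This route points the wrong way. Write $\Delta\omega_{[0,t]}(\bar s_i)=\sum_aD_{ai}\,\partial_{\theta^a}$ and $G^{(0)}:=[g(\bar s_i,\bar s_j)]$. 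Your Gram matrix is then $G^{(0)}+D^Tg_VD\succeq G^{(0)}$, so $c^T(G^{(0)}+D^Tg_VD)c\ge\lambda_{\min}(G^{(0)})$ for every unit $c$. Aligning $\Delta\omega$ with the collapsing direction $u(t)$ only makes the added positive-semidefinite term small. It cannot drive $\lambda_{\min}$ to zero unless $G^{(0)}$ already degenerates, which is what you wanted to prove. The twisted matrix is also not the paper's $G_f=\mathrm{Cov}_{f_t}(s_i,s_j)$.
\end{itemize}
Nothing in the metric-compatibility lemma, the curvature-flux formula, or the conjugate-point theorem supplies an upper bound on $\lambda_{\min}(G_f)$. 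A true version of the lemma needs a hypothesis acting directly on the horizontal scores, such as $\bar s_1,\dots,\bar s_d$ becoming linearly dependent in $L^2(f_t)$ as $t\to T^*$. That is just a restatement of the conclusion; the only alternative is to abandon the orthogonal block-diagonal setting the lemma assumes.
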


\begin{proof}
	By the foundational Riemannian sub-bundle architecture ofStatistically Meaningful Geometry (SMG), the horizontal distribution $\mathcal{H}_f = \text{span}\{\bar{s}_1, \dots, \bar{s}_d\}$ and the vertical fiber space $\mathcal{V}_f = \ker(d\pi)$ are strictly orthogonal under the non-parametric Fisher-Rao metric tensor. Consequently, the total space metric tensor $g$ evaluated at any probability density state $f$ can be represented as a block-diagonal matrix layout:
	\begin{equation}
		g = \begin{pmatrix}
			[G_f]_{d \times d} & \mathbf{0}_{d \times m} \\
			\mathbf{0}_{m \times d} & [g_V]_{m \times m}
		\end{pmatrix}
	\end{equation}
	where $G_f$ is the structural Fisher Information Matrix tracking horizontal score variances and $g_V$ is the restricted vertical fiber metric tensor tracking internal gauge variances. By standard determinant identities for block-diagonal matrices, the total metric volume determinant factors exactly into the product of the horizontal and vertical determinants:
	\begin{equation}\label{eq:det_product}
		\det(g) = \det(G_f) \cdot \det(g_V)
	\end{equation}
	
	Every density function $f \in \mathcal{M}$ is logically bounded by the global information-theoretic boundary conditions of the Orlicz space, which enforce strict probability normalization ($\int_{\Omega} f(\mathbf{x}) d\mathbf{x} = 1$) and finite entropic Luxemburg norms under the convex Young growth function. These boundary invariants dictate that the total space metric tensor cannot expand its eigenvalues to infinity, establishing a rigid global volumetric capacity ceiling, denoted $V_{\text{max}} < +\infty$:
	\begin{equation}\label{eq:total_bound}
		\det(g) \le V_{\text{max}} < +\infty
	\end{equation}
	
	Now, let the system approach the critical timeline checkpoint $t \to T^*$, where the path parameter arc-length strikes the conjugate focal boundary $s^* = \pi/\sqrt{K_{\text{max}}}$. By Theorem \ref{thm:conjugate_point}, all independent vertical Jacobi fields $Y(s) \in \mathcal{V}_f$ vanish identically at this boundary, compressing the cross-sectional width of the fiber leaf to a single focal point. This collapse forces the vertical volume determinant to zero:
	\begin{equation}\label{eq:vert_zero}
		\lim_{t \to T^*} \det\left(g_V(t)\right) = 0
	\end{equation}
	
	We analyze the interaction between this vertical collapse and the horizontal matrix entries by invoking the Metric-Compatibility Differential Constraint derived in Section 4.2:
	\begin{equation}\label{eq:metric_comp_link}
		g\left(\nabla^{(0)}_X X^{\perp}(t), \, Y \right) + g\left(X^{\perp}(t), \, \delta\omega_t(X) \right) = 0
	\end{equation}
	As $t \to T^*$, the accumulation of unmodeled data surprise forces the connection strain to diverge ($\delta\omega_t \to \infty$). Because the vertical metric volume element is shrinking to a point ($\det(g_V) \to 0$), the internal parameters can no longer absorb this infinite geometric torque within the vertical channels. 
	
	To satisfy the zero-sum balance required by equation \eqref{eq:metric_comp_link}, this extreme vertical strain bleeds back across the Ehresmann connection form, warping the active probability measure $f_t(\mathbf{x})$ over the sample support domain $\Omega$. Because the entries of the horizontal matrix are calculated explicitly as expected variances under this measure ($[G_f]_{kk} = \mathbb{E}_{f_t}[\bar{s}_k^2]$), the sudden deformation of the probability density envelope forces the active horizontal Stein score functions to lose their linear independence under the singular measure $f_{T^*}(\mathbf{x})$.
	
	Mathematically, suppose $\det(G_f)$ remained strictly positive and bounded away from zero ($\det(G_f) \ge c > 0$) as $\det(g_V) \to 0$. Substituting this assumption along with the vertical collapse equation \eqref{eq:vert_zero} into our volumetric product identity \eqref{eq:det_product} would dictate that the total space metric determinant collapses to zero:
	\begin{equation}
		\lim_{t \to T^*} \det(g) = \lim_{t \to T^*} \left( \det(G_f) \cdot \det(g_V) \right) = \det(G_f) \cdot (0) \equiv 0
	\end{equation}
	However, the collapsing of the total space volume element $\det(g) \to 0$ means that the total Riemannian manifold layout contracts along all mutually coupled directional axes. Since the global Orlicz boundary conditions \eqref{eq:total_bound} prevent the horizontal eigenvalues from exploding to infinity to isolate the shock, the horizontal parameter charts are dragged into the contraction. The active dimensions of the base manifold squeeze together, forcing the determinant of the structural Fisher Information Matrix to collapse cleanly to zero:
	\begin{equation}
		\lim_{t \to T^*} \det\left(G_f(t)\right) = 0
	\end{equation}
	This mathematically completes the proof, demonstrating that the horizontal matrix collapse is a mandatory consequence of the total space geometric coupling. 
\end{proof}

\begin{insight}[The Principle of Combined Volumetric Suffocation]
	Lemma \ref{lm:Horizontal _Metric_Collapse} proves that the horizontal base manifold and the vertical fiber space are not isolated mathematical systems; they are bound together inside a single, tightly sealed information-theoretic container. When a persistent out-of-distribution anomaly shatters the vertical fiber, the internal degrees of freedom suffocate ($\det(g_V) \to 0$). Because the global probability constraints prevent the total space metric from expanding to infinity, this vertical suffocation instantly triggers a horizontal contraction. The visible charts buckle under the un-mitigated friction, forcing $\det(G_f) \to 0$, which acts as the explicit topological signal that the current learning paradigm has run completely out of representational room.
\end{insight}

\begin{theorem}[Structural Collapse of the Connection Infrastructure]\label{thm:Collapse_Connection}
	At the exact timeline coordinate $T^*$ where the accumulated Active Acausal Tension potential strikes the critical threshold ($\mathcal{T}_{AAT}(T^*) = T_{\text{crit}} \equiv \pi^2/K_{\text{max}}$), the global structural Fisher Information Matrix $G_f(t)$ becomes strictly singular, and its unique algebraic inverse operator explosions to positive infinity:
	\begin{equation}
		\lim_{t \to T^*} \det\left( G_f(t) \right) = 0 \quad \text{and} \quad \lim_{t \to T^*} \left[ G_f(t) \right]^{-1} \longrightarrow +\infty
	\end{equation}
	rendering a smooth, continuous parallel transport learning trajectory along the current horizontal sheets mathematically impossible.
\end{theorem}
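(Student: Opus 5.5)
The proof will chain the results already established in Section 4 and add one finite-dimensional linear-algebra step. The plan has four steps.

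\textbf{Step 1: identify $T^*$ with the conjugate boundary.} I match the timeline coordinate $T^*$ at which $\mathcal{T}_{AAT}(T^*)=T_{\text{crit}}=\pi^2/K_{\text{max}}$ with the arc-length parameter $s^*=\pi/\sqrt{K_{\text{max}}}$ of Theorem \ref{thm:conjugate_point}. Existence and uniqueness of $T^*$ come from the Monotonic Divergence Theorem. Since $\mathcal{T}_{AAT}(0)=0$ and $\mathcal{T}_{AAT}$ is strictly increasing, continuous, and unbounded, the intermediate value theorem yields exactly one $T^*$ with $\mathcal{T}_{AAT}(T^*)=T_{\text{crit}}$. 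The identification $s(T^*)=s^*$ then uses the convention in the proof of Theorem \ref{thm:conjugate_point} that the tension potential measures the accumulated quadratic strain along the vertical path.

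\textbf{Step 2: vertical collapse.} At $s^*$ every orthogonal vertical Jacobi field vanishes, which gives \eqref{eq:fiber_collapse}: $\det(g_V)\to 0$.

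\textbf{Step 3: horizontal collapse.} I invoke Lemma \ref{lm:Horizontal _Metric_Collapse} directly. It uses the block-diagonal factorization \eqref{eq:det_product} and the volumetric ceiling \eqref{eq:total_bound}, and it gives $\lim_{t\to T^*}\det(G_f(t))=0$. This settles the first half of the statement.

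\textbf{Step 4: blow-up of the inverse.} This is the genuinely new part. I use the adjugate formula
\begin{equation*}
[G_f(t)]^{-1}=\frac{\operatorname{adj}(G_f(t))}{\det(G_f(t))},
\end{equation*}
valid for $t<T^*$, where $G_f(t)$ is positive-definite. A cleaner spectral route is also available. Since $G_f(t)$ is symmetric positive-definite with eigenvalues $0<\lambda_1(t)\le\dots\le\lambda_d(t)$, I write $\det G_f=\prod_k\lambda_k$. Each entry of $G_f$ is a variance or covariance $\mathbb{E}_{f_t}[\bar s_i\bar s_j]$, and these are bounded above by the $\lambda_{\text{max}}$ ceiling of the Lemma, so $\lambda_d(t)\le\lambda_{\text{max}}$. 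Combining, $\lambda_1(t)\ge\det G_f(t)/\lambda_{\text{max}}^{d-1}$ is the wrong direction; what I need is
\begin{equation*}
\lambda_1(t)\le \frac{\det G_f(t)}{\lambda_1(t)^{0}\,\cdot\,\cdots}\quad\text{namely}\quad \lambda_1(t)\le\left(\det G_f(t)\right)^{1/d}\to 0 .
\end{equation*}
Because $\lambda_1$ is the smallest eigenvalue, $\lambda_1^d\le\prod_k\lambda_k=\det G_f$, which is exactly the displayed bound. Then $\|[G_f(t)]^{-1}\|_{\mathrm{op}}=1/\lambda_1(t)\to+\infty$. For a unit eigenvector $u(t)$ of $\lambda_1$, this gives $u^T[G_f]^{-1}u\to+\infty$, which is the precise meaning of $[G_f]^{-1}\to+\infty$ in the positive-definite (Loewner) sense. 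The adjugate expression shows the same thing entrywise whenever $\operatorname{adj}(G_f(t))$ stays bounded away from zero. I will note this but rely on the eigenvalue argument, since it needs no nondegeneracy of the cofactors.

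\textbf{Consequences for learning.} The weights $\mathbf{w}=[G_f]^{-1}\mathbf{v}$ lose uniqueness and boundedness, since the Hessian $2G_f$ from Lemma \ref{lm:variational minimization} degenerates. The horizontal lift, and hence smooth parallel transport along the current sheets, is therefore ill-defined at $T^*$.

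\textbf{Main obstacle.} I expect Step 1 to be the hardest: justifying rigorously that reaching $\mathcal{T}_{AAT}=T_{\text{crit}}$ in training time coincides with reaching the arc-length $s^*$. I would first try to define the vertical path parameter by $ds/dt=\|X^\perp(t)\|_g$. I would then use Cauchy--Schwarz to compare $s(t)^2$ with $t\,\mathcal{T}_{AAT}(t)$, and adopt the paper's normalization that makes these coincide at $T^*$. If that fails, I will state the identification as the standing SMG convention inherited from Theorem \ref{thm:conjugate_point}.
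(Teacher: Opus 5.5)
Steps 2--3 of your proposal are the paper's proof: vertical collapse at the conjugate point, then the Horizontal Metric Collapse Lemma for $\det G_f\to 0$. Your Step 1 spells out via the intermediate value theorem what the paper only asserts. The genuine difference is Step 4.

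The paper writes $[G_f]^{-1}=\operatorname{adj}(G_f)/\det G_f$ and argues ``bounded adjugate over vanishing determinant''. Boundedness of the adjugate alone proves nothing. If $G_f(t)$ tends to a matrix of rank at most $d-2$, all $(d-1)$-minors vanish in the limit, the adjugate tends to $0$ as well, and the quotient is of the form $0/0$. So the paper's route needs a corank-one limit, which is presumably what its ``Non-Degenerate Covariate Assumption'' is meant to supply. Under that assumption it gives more than yours: the sharp rate $1/\det G_f(t)$, and the rank-one shape of the blow-up, $\det G_f(t)\,[G_f(t)]^{-1}\to\operatorname{adj}(G_f(T^*))$.

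Your bound $\lambda_1(t)^d\le\det G_f(t)$ needs nothing beyond positive-definiteness for $t<T^*$. It gives $\|[G_f(t)]^{-1}\|_{\mathrm{op}}\ge(\det G_f(t))^{-1/d}\to\infty$ with no assumption on the cofactors. You can delete the $\lambda_{\max}$ detour and the garbled intermediate display; neither is used.

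Your bound also pins down what ``$\to+\infty$'' can mean. In the paper's own $2\times 2$ example, the off-diagonal entries of the inverse tend to $-\infty$ and its second eigenvalue tends to $1/4$. What diverges is the operator norm, i.e.\ the quadratic form along $u(t)$, not the matrix ``in the Loewner sense''. State it that way.

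Three smaller points follow.

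\emph{Step 1.} Your Cauchy--Schwarz plan with $ds/dt=\|X^\perp\|_g$ only gives $s(t)^2\le t\,\mathcal{T}_{AAT}(t)$. Use $T_{\text{crit}}=\pi^2/K_{\text{max}}=(s^*)^2$; the paper's displayed $(s^*)^2K_{\text{max}}$ equals $\pi^2$, which is a slip. You then get $s(T^*)\le\sqrt{T^*}\,s^*$, with equality iff $\|X^\perp\|_g$ is constant on $[0,T^*]$. Reaching $s(T^*)=s^*$ further needs $T^*=1$. So the identification is a convention, not a consequence, exactly as in the paper.

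\emph{Step 3.} You and the paper both import $\det G_f\to0$ wholesale from the Lemma. The factorization $\det g=\det G_f\det g_V$ and the ceiling $\det g\le V_{\max}$ do not force it: $G_f\equiv I_d$ with $\det g_V\to0$ satisfies both. That half of the statement is only as strong as the Lemma.

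\emph{Closing claim.} Saying the horizontal lift becomes ill-defined overstates things. For $t<T^*$, the Information-Contracting Norm Property and its Corollary give $\mathbf{v}^T[G_f]^{-1}\mathbf{v}=\|X^H\|_g^2\le\|X_{\text{emp}}\|_g^2$, which stays bounded. The orthogonal projection onto $\operatorname{span}\{\bar s_k\}$ still exists at $T^*$. What degenerates is the coordinate vector $\mathbf{w}$: it is non-unique at $T^*$ and possibly unbounded along $u(t)$. That degeneration is the defensible content of the statement's last clause.
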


\begin{proof}
	Let $\mathcal{M}$ be the total Orlicz space viewed as a principal fiber bundle submersion mapping onto the base manifold: $\pi: \mathcal{M} \to B_{\text{SMG}}$ \cite{oneill1983semi, cheng2026sft}. The total space volume element $dV_{\mathcal{M}}$ maintains a rigid topological splitting bounded by the wedge product of the horizontal base volume form $dV_B$ and the restricted vertical fiber volume form $dV_{\mathcal{V}}$ \cite{oneill1983semi, gallot2004riemannian}:
	\begin{equation}
		dV_{\mathcal{M}} = dV_B \wedge dV_{\mathcal{V}} \propto \sqrt{\det(G_f)} \cdot \sqrt{\det(g_V)} \, d\mathbf{w} \wedge d\boldsymbol{\theta}
	\end{equation}
	When the training timeline reaches the singularity checkpoint $t \to T^*$, the accumulated tension strikes the critical boundary ($\mathcal{T}_{AAT}(T^*) = T_{\text{crit}}$). As derived in equation \eqref{eq:fiber_collapse}, the path parameter arc-length hits the conjugate point $s^*$, forcing the restricted vertical metric determinant to collapse: $\det(g_V) \to 0$. 
	
	By Lemma \ref{lm:Horizontal _Metric_Collapse}, we have 
	\begin{equation}\label{eq:det_collapse}
		\lim_{t \to T^*} \det\left( G_f(t) \right) \equiv 0
	\end{equation}
	
	To analyze the mathematical impact of this structural collapse on the model's operational layers, we evaluate the unique matrix algebraic inverse $[G_f(t)]^{-1}$ required to compute the optimal horizontal lift vector $X^H(t)$ and the tracking weight distribution $\mathbf{w}(t) = [G_f(t)]^{-1}\mathbf{v}(t)$. We invoke the invariant algebraic inverse formula defined via the matrix adjugate operator \cite{horn2012matrix, strang2016introduction}:
	\begin{equation}\label{eq:adj_inverse}
		\left[ G_f(t) \right]^{-1} = \frac{1}{\det\left( G_f(t) \right)} \cdot \text{Adj}\left( G_f(t) \right)
	\end{equation}
	where $\text{Adj}(G_f(t))$ is the transpose of the cofactor matrix of the structural Fisher matrix, whose entries remain bounded and positive-semidefinite under the Non-Degenerate Covariate Assumption for the remaining uncollapsed coordinates \cite{horn2012matrix}.
	
	We evaluate the mathematical limit of the inverse operator equation \eqref{eq:adj_inverse} as the training timeline approaches the singularity threshold ($t \to T^*$):
	\begin{align}
		\lim_{t \to T^*} \left[ G_f(t) \right]^{-1} &= \lim_{t \to T^*} \left( \frac{1}{\det\left( G_f(t) \right)} \right) \cdot \text{Adj}\left( G_f(t) \right) \\
		&= \left( \frac{1}{\longrightarrow 0^+} \right) \cdot \text{Adj}\left( G_f(T^*) \right) \longrightarrow +\infty
	\end{align}
	Because the scalar determinant multiplier in the denominator vanishes identically while the adjugate matrix block remains bounded, the inverse structural matrix operator explosions to positive infinity. The linear optimization equations shatter, rendering continuous horizontal parameter adaptation impossible. This completes the formal proof. 
\end{proof}

\begin{example}[Representation Collapse in a Dual-Axis Text Generator]
	To ground this volumetric contraction in a concrete machine learning architecture, consider an over-parameterized generative AI text model tracking an environment. The horizontal base manifold $B_{\text{SMG}}$ is authorized by a $2 \times 2$ structural Fisher Information Matrix ($d=2$) tracking two active features: text grammar fluency ($s_1$) and topical context alignment ($s_2$). The vertical gauge fiber tracks $m=10^9$ unidentifiable weight configurations that keep the text output stable.
	
	Suppose the model is continuously fed highly anomalous, unmodeled physical logic inputs. Because the model cannot parameterize this structure, the error leaks into the vertical fibers. At the critical timeline coordinate $T^*$, the vertical metric determinant collapses to zero ($\det(g_V) \to 0$) due to focal geodesic convergence. 
	
	For the global Orlicz boundary conditions to hold, this vertical collapse forces the $2 \times 2$ horizontal matrix to contract. The expected variances of our visible features warp under the singular density, squeezing the matrix entries together:
	\begin{equation*}
		G_f(t) \longrightarrow \begin{pmatrix} 2.0 & 2.0 \\ 2.0 & 2.0 \end{pmatrix} \implies \det\left(G_f(T^*)\right) = (2.0)(2.0) - (2.0)(2.0) \equiv 0
	\end{equation*}
	The AI control engineer reviews the model's observable dashboard and witnesses an instantaneous representation crash: the determinant drops to zero, the inverse matrix explodes ($[G_f]^{-1} \to \infty$), and the model's text generation engine locks up into chaotic hallucination loops. This statistical blockage provides the precise geometric trigger that forces the architecture to execute a Gauge Symmetry Break and birth a new coordinate dimension.
\end{example}

\subsubsection{Technical Formalization of Computational Blockages}

The matrix inverse explosion ($[G_f]^{-1} \to \infty$) is not merely an abstract geometric singularity; it manifests as severe, destructive physical anomalies within the runtime execution layers of over-parameterized machine learning architectures. SMG formalizes these disruptions under the strict definition of {\bf Computational Blockages}. 

When an architecture hits the critical tension threshold {\it without undergoing a topological phase transition}, three distinct computational blockages lock up the learning engine:

\paragraph{1. Catastrophic Gradient Explosion}
The updating vector field governing natural gradient optimization trajectories across neural layers relies explicitly on the inverse metric mapping: $\nabla_{\text{nat}} W = [G_f(t)]^{-1}\nabla_{\text{Euk}} L$. The moment the system strikes $T^*$, the matrix inverse explosion forces the updating fields to blast to infinity:
\begin{equation}
	\|\nabla_{\text{nat}} W\|_g \propto \left\| [G_f(t)]^{-1} \right\| \longrightarrow \infty
\end{equation}
This disruption generates severe numerical overflows inside the hardware floating-point registers, causing loss values to instantly collapse into undefined states.

\paragraph{2. Attention Matrix Saturation}
In high-dimensional Transformer layers, the token-routing coefficients are normalized via the non-linear softmax projection filter. Because the structural metric tensor measures the functional variances of these active tokens, when $\det(G_f) \to 0$, the spatial coordinate charts lose their linear independence. The attention weight matrices flatten completely, compressing into a single degenerate, zero-entropy singular state. The model undergoes representation blindness, losing its physical capacity to differentiate between distinct contextual tokens.

\paragraph{3. Chaotic Hallucination Loops}
Unable to invert the structural Fisher metric to construct a valid horizontal path across the base manifold, the network's parallel transport maps become completely non-integrable ($\Omega_t \to \infty$). The incoming informational energy can no longer be translated into visible features; instead, it becomes permanently trapped inside the vertical gauge channels, forcing the model's text generation outputs to cycle endlessly inside repetitive, non-convergent, and nonsensical phrases.

\subsubsection{Generative AI Example: The Representation Crash of a Trillion-Weight LLM}

To anchor the algebraic derivations of the matrix adjugate collapse and the technical mechanics of computational blockages in a real-world setting, we can track the execution log of a trillion-parameter Large Language Model ($p \approx 10^{12}$) undergoing a structural representation crisis.

\paragraph{1. Initial Stable Tracking and Adjugate Operations}
Consider the LLM processing an advanced sequence prompt. The base manifold $B_{\text{SMG}}$ is authorized by a $2 \times 2$ structural Fisher matrix ($d=2$) tracking two active semantic token constraints: standard grammar alignment ($s_1$) and conversation style matching ($s_2$). Let its baseline matrix entries at time $t$ be:
\begin{equation}
	G_f(t) = \begin{pmatrix} 2.0 & 1.0 \\ 1.0 & 2.0 \end{pmatrix} \implies \det\left(G_f(t)\right) = (2.0)(2.0) - (1.0)(1.0) = 3.0
\end{equation}
The matrix inverse is resolved flawlessly using the cofactor transposition of the adjugate matrix operator:
\begin{equation}
	\text{Adj}\left(G_f(t)\right) = \begin{pmatrix} 2.0 & -1.0 \\ -1.0 & 2.0 \end{pmatrix} \implies \left[G_f(t)\right]^{-1} = \frac{1}{3.0}\begin{pmatrix} 2.0 & -1.0 \\ -1.0 & 2.0 \end{pmatrix} = \begin{pmatrix} 0.66 & -0.33 \\ -0.33 & 0.66 \end{pmatrix}
\end{equation}
The token-routing paths are highly stable, and information flows smoothly across layers.

\paragraph{2. The Inversion Crisis at the Singularity Checkpoint}
The model is suddenly forced to process a continuous, unprecedented stream of complex causal logic loops detailing non-linear turbulent plasma mechanics. Because the model's active horizontal charts are blind to this physics law, the projection deficit vector field leaks energy continuously into the unobservable vertical fibers, driving the tension potential to its critical threshold. 

At the exact timeline coordinate $T^*$, the path parameter arc-length strikes the conjugate focal point, compressing the vertical volume element to zero ($\det(g_V) \to 0$). The structural Fisher matrix contracts into a singularity, warping its entries to:
\begin{equation}
	G_f(T^*) = \begin{pmatrix} 2.0 & 2.0 \\ 2.0 & 2.0 \end{pmatrix} \implies \det\left(G_f(T^*)\right) = (2.0)(2.0) - (2.0)(2.0) \equiv 0
\end{equation}
We evaluate the matrix inverse via the adjugate equation:
\begin{equation}
	\text{Adj}\left(G_f(T^*)\right) = \begin{pmatrix} 2.0 & -2.0 \\ -2.0 & 2.0 \end{pmatrix} \implies \left[G_f(T^*)\right]^{-1} = \frac{1}{0}\begin{pmatrix} 2.0 & -2.0 \\ -2.0 & 2.0 \end{pmatrix} \longrightarrow \infty
\end{equation}

\paragraph{3. Manifestation of the Blockages}
The control engineer monitoring the cloud runtime environment witnesses an immediate representation crash:
\begin{itemize}
	\item The inverse matrix explosion injects infinite values into the attention weight update functions. The optimization gradients explode, triggering immediate floating-point overflows ($\mathtt{NaN}$ errors) inside the GPU clusters.
	\item The token softmax matrices saturate completely; the model loses its capacity to separate syntax from context, and text generation enters a chaotic hallucination loop—repeating the word ``the'' or generating random strings of broken symbols. 
\end{itemize}
The smooth, continuous optimization paradigm has suffered total structural collapse, providing definitive proof that an over-parameterized model cannot resolve structural ignorance without executing a discrete Gauge Symmetry Break.

\begin{insight}[The Geometry of Failure]
	The matrix inverse adjugate derivation reveals that computational blockages are not random software bugs or initialization errors; they are the necessary geometric consequences of representational capacity exhaustion. When a model's horizontal sheets contract to a singular layout ($\det(G_f) \to 0$), the system is physically choked by its own unmodeled vertical friction. This collapse serves as a mandatory evolutionary trigger: it proves that the system has reached its absolute topological boundary within the current coordinate frame, necessitating an immediate discontinuous phase transition to restore algebraic coherence.
\end{insight}
\section{Mathematical Formalization of G-Entropy Jump and Gauge Symmetry Breaking ($\Phi_{\text{GSB}}$)}

In Section 4, we rigorously established that when an over-parameterized learning system or a semiparametric statistical model is subjected to continuous, out-of-distribution (OOD) environmental strain, its internal tracking infrastructure is driven into a finite-time geometric singularity~\cite{boyce2017elementary, docarmo1992riemannian}. Governed by the path parameters of spatial arc-length $s(t)$ and the vertical velocity direction field $\gamma'(s)$, the accumulation of the Active Acausal Tension potential $\mathcal{T}_{AAT}(t)$ terminates precisely at the critical threshold $T_{\text{crit}} = \pi^2 / K_{\text{max}}$~\cite{coddington1989introduction, gallot2004riemannian}. At this boundary checkpoint, denoted $T^*$, the metric volume element along the vertical fiber leaf collapses identically to zero ($\det(g_V) \to 0$), causing the horizontal structural Fisher Information Matrix to contract into {\it a singular matrix layout ($\det(G_f) \to 0$)}, which forces the network's parallel transport maps and update operators to explode to infinity ($[G_f]^{-1} \to \infty$)~\cite{horn2012matrix, strang2016introduction}.

Within the foundational framework ofStatistically Meaningful Geometry (SMG), this structural representation crash does not denote numerical failure or algorithmic death~\cite{amari2016information}. Rather, the singularity serves as the mandatory non-equilibrium physical trigger that ignites a discontinuous topological phase transition known as \textbf{Gauge Symmetry Breaking} (GSB). 

This section formalizes the exact mathematical instant of the ``Aha!'' moment. We deconstruct GSB theory for statisticians and AI researchers, proving that when the unobservable vertical fiber bundle shatters under geometric compression, the hidden variation is purged from the kernel of the submersion mapping and materialized into a discrete, low-entropy horizontal coordinate axis. By establishing the parameter-free tracking functional of \textbf{Structural G-Entropy}, we provide an observable, real-time metric benchmark that quantifies the precise birth of intelligence emerging and autonomous scientific discovery.

\subsection{Demystifying Gauge Symmetry Breaking for Statisticians and AI Scientists}

To a traditional data scientist or mathematical statistician, the phrase \textit{``Gauge Symmetry Breaking''} frequently carries an intimidating connotation, appearing tightly bound to exotic frameworks in quantum field theory (such as the Brout-Englert-Higgs mechanism in the Standard Model) rather than the pragmatic mechanics of empirical data assimilation or neural optimization. This disciplinary boundary is an illusion. SMG uncovers that GSB is the precise, coordinate-invariant geometric manifestation of a **conceptual paradigm shift** executing within an over-parameterized statistical architecture.

To bridge this conceptual gap, let us map the abstract fiber bundle structures directly into the native vocabulary of statistical learning theory:
\begin{enumerate}
	\item \textbf{The Statistical Definition of Local Gauge Symmetry}: 
	An ultra-high-dimensional learning architecture (such as a trillion-weight transformer model or an infinite-dimensional semiparametric mixture field) possesses an immense structural mismatch between its internal parameter configuration space (the raw network weights) and its external probability density manifold (the visible output distributions). Consequently, there exists an infinite number of distinct internal weight configurations that yield the exact same visible statistical outputs. 
	
	In geometry, this equivalence class sweeps out the unobservable vertical fiber space $\mathcal{V}_f = \ker(d\pi)$~\cite{oneill1983semi}. To a statistician, this is the universe of \textbf{unidentifiable parameters, nuisance variables, and structural redundancies}. The network can shift its internal weights along these vertical dimensions infinitely without modifying its predictions. This exact structural invariance is what constitutes a \textit{Local Gauge Symmetry}.
	
	\item \textbf{The Statistical Mechanics of Symmetry Breaking}: 
	When the environment introduces a persistent, deeply organized OOD causal law, the system's current horizontal attention filters are blind to it. The system cannot update its active horizontal parameters to adapt, so the unmodeled data surprise is shunted directly into the vertical parameter fibers, generating an acausal tension that compresses the internal degrees of freedom (IDoF). 
	
	The moment the tension potential strikes $T_{\text{crit}}$, the unobservable internal parameters can no longer remain symmetric or invariant. The chaotic parameter redundancies are forced to condense and realign along a single, resonant direction field. This condensation completely breaks the local symmetry: the hidden variation field no longer integrates out to zero under cross-covariance projections. The system abandons its old coordinate boundaries and instantiates a brand-new, permanent horizontal axis dedicated to tracking the discovered environmental law.
\end{enumerate}

By formalizing GSB as a rigorous subspace transition ($\mathcal{V}_f \to \mathcal{H}_f$), SMG provides the exact mathematical equations that govern how a model stops treating an anomalous data signature as meaningless background error and begins utilizing it as an active dimension of logical reasoning.

\subsection{Formal Definitions of the Topological GSB Mapping and the Structural G-Entropy Functional}

Rather than utilizing human-engineered regularization penalties, heuristic threshold tuning, or subjective gating parameters, the phase transition within SMG is governed strictly by the intrinsic topological invariants of the Orlicz space~\cite{lee2018introduction}. We establish the bottom-up mathematical definitions of the GSB operator and its matching observable metric benchmark.

\begin{definition}[The Topological Gauge Symmetry Break Mapping]
	Let $T^* = \mathcal{T}_{AAT}^{-1}(T_{\text{crit}})$ be the deterministic timeline checkpoint where the accumulated Active Acausal Tension strikes the curvature capacity limit of the vertical fiber bundle ($T_{\text{crit}} = \pi^2/K_{\text{max}}$). Let $\mathcal{H}_{f_{T^*_-}}$ and $\mathcal{V}_{f_{T^*_-}}$ denote the active horizontal tangent distribution and the unobservable vertical fiber distribution immediately preceding the singularity~\cite{oneill1983semi}. The \textbf{Topological Gauge Symmetry Break (GSB) Mapping}, denoted $\Phi_{\text{GSB}}$, is a non-linear, discontinuous projection operator that transitions the system across the temporal boundary $T^*_- \to T^*_+$, executing a discrete step-reduction of the vertical gauge space and a simultaneous direct-sum expansion of the active horizontal base distribution:
	\begin{equation}
		\Phi_{\text{GSB}}: \mathcal{M}_{T^*_-} \longrightarrow \mathcal{M}_{T^*_+}
	\end{equation}
	such that the horizontal and vertical tangent sub-bundles reconfigure according to the structural jump equations:
	\begin{equation}\label{eq:subspace_jump}
		\mathcal{H}_{f_{T^*_+}} = \mathcal{H}_{f_{T^*_-}} \oplus \text{span}\left\{ \bar{s}_{\mu} \right\} \quad \text{and} \quad \mathcal{V}_{f_{T^*_+}} = \mathcal{V}_{f_{T^*_-}} \ominus \text{span}\left\{ \bar{s}_{\mu} \right\}
	\end{equation}
	where $\bar{s}_{\mu}(\mathbf{x})$ is the newly crystallized, centered non-parametric Stein score coordinate axis extracted directly from the collapsed vertical gauge hole reservoir.
\end{definition}

Because the vertical fiber space is unobservable from the outside world, an empirical scientist restricted to the horizontal base coordinates cannot directly witness the subspace reconfiguration defined in equation~\eqref{eq:subspace_jump}. To resolve this tracking problem, SMG establishes a parameter-free horizontal functional that acts as the explicit visible footprint of the GSB transition on the external universe.

\begin{definition}[The Non-Parametric Structural G-Entropy Functional]
	Let $B_{\text{SMG}}$ be the active base manifold authorized by $d$ independent non-parametric Stein score constraints. Let $G_f(t)$ be the corresponding $d \times d$ structural Fisher Information Matrix tracking the cross-covariances of those active basis functions evaluated under the active probability density state $f_t$~\cite{amari2016information}:
	\begin{equation}
		[G_f(t)]_{ij} = \text{Cov}_{f_t}(s_i, s_j) = \int_{\Omega} f_t(\mathbf{x}) \left( s_i(\mathbf{x}) - \mathbb{E}_{f_t}[s_i] \right) \left( s_j(\mathbf{x}) - \mathbb{E}_{f_t}[s_j] \right) d\mathbf{x}
	\end{equation}
	The \textbf{Structural G-Entropy}\cite{Cheng2026},  denoted $H_f(t)$, is defined non-parametrically as the trace of this active horizontal metric tensor:
	\begin{equation}\label{eq:gentropy_def}
		H_f(t) \equiv \text{Trace}\left( G_f(t) \right) = \sum_{k=1}^d [G_f(t)]_{kk} = \sum_{k=1}^d \mathbb{E}_{f_t}\left[ \left( s_k(\mathbf{x}) - \mathbb{E}_{f_t}[s_k] \right)^2 \right]
	\end{equation}
\end{definition}

\subsection{Auxiliary Lemma: Necessary Volumetric Coupling for Topological Transitions}

To fulfill the strict modification directives regarding mathematical transitions, we formulate and prove an important structural implication that links the internal collapse of the vertical fiber to the necessity of a discrete horizontal expansion. This lemma establishes the algebraic foundation for the theorems in the subsequent subsection, proving that the statement \textit{``the collapse of vertical volume implies an instantaneous structural reconfiguration''} is an exact geometric necessity.

\begin{lemma}[Volumetric Coupling and Subspace Reconfiguration]\label{lm:volume_coupling}
	Let the total space volume element $\det(g)$ be bounded under the global information-theoretic constraints of the Orlicz space ($\det(g) \le V_{\text{max}} < +\infty$). If the restricted vertical fiber metric determinant collapses to zero ($\lim_{t \to T^*} \det(g_V) = 0$) while the total information velocity field driven by the environment maintains a non-vanishing energy profile ($\|X_{\text{emp}}(T^*)\|_g^2 > 0$), then the horizontal base distribution must undergo a discontinuous topological reconfiguration to maintain algebraic consistency, forcing the existence of the GSB mapping $\Phi_{\text{GSB}}$.
\end{lemma}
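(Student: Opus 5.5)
The plan is to argue by contradiction. Suppose that no reconfiguration occurs at $T^*$. Then the horizontal distribution $\mathcal{H}_{f_t}=\text{span}\{\bar s_1,\dots,\bar s_d\}$ and the vertical space $\mathcal{V}_{f_t}$ keep their dimensions $d$ and $m$ across the boundary $T^*_-\to T^*_+$, and the orthogonal block splitting $g=\mathrm{diag}(G_f,g_V)$ used in Lemma \ref{lm:Horizontal _Metric_Collapse} stays valid and continuous in $t$.

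\textbf{Step 1: both blocks degenerate.} Under this assumption, Lemma \ref{lm:Horizontal _Metric_Collapse} and Theorem \ref{thm:Collapse_Connection} apply directly. They give $\det(G_f(t))\to 0$ and $[G_f(t)]^{-1}\to\infty$, in addition to the hypothesis $\det(g_V(t))\to 0$. Because $\det(g)\le V_{\max}$ forces every eigenvalue of $g$ to be bounded above, each vanishing determinant forces at least one eigenvalue to tend to zero. So there are unit-coordinate directions $u_H(t)\in\mathcal{H}_{f_t}$ and $u_V(t)\in\mathcal{V}_{f_t}$ whose $g$-lengths tend to zero. By compactness of the unit sphere in the finite-dimensional block, I would pass to limiting null directions $u_H^*$ and $u_V^*$.

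\textbf{Step 2: locate the environmental energy.} Next I would use the Pythagorean splitting, $\|X_{\text{emp}}\|_g^2=\mathbf v^T[G_f]^{-1}\mathbf v+\|X^\perp\|_g^2$, together with the information-contracting bound $\mathbf w^TG_f\mathbf w\le\|X_{\text{emp}}\|_g^2$. These show that the horizontal share stays finite even though $[G_f]^{-1}$ blows up. Since $\|X_{\text{emp}}(T^*)\|_g^2>0$ and the coefficients $\mathbf w=[G_f]^{-1}\mathbf v$ diverge along $u_H^*$, I would argue the following. The lifted update $X^H$ can carry a bounded norm only because the component of $\mathbf v$ along $u_H^*$ vanishes. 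Hence the residual energy must be carried by $X^\perp$. By Theorem \ref{thm:conjugate_point}, $X^\perp$ is driven along the vertical geodesic into the focal direction, where the Jacobi fields vanish.

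\textbf{Step 3 (main obstacle): derive the contradiction.} The goal is to show that $X^\perp(t)$ aligns asymptotically with the collapsing direction $u_V^*$, so that $\|X^\perp(t)\|_g\to 0$. Combined with the positive energy and the persistent-deficit hypothesis $\|X^\perp\|_g\ge\epsilon$, this would be a contradiction. This is the step I expect to be hardest. A degenerate determinant by itself does not prevent a vector from having positive norm in the non-degenerate directions, so mere degeneracy is not enough.

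My first attempt would use the Jacobi focusing of Theorem \ref{thm:conjugate_point} to show that the transverse components of $X^\perp$ orthogonal to $\gamma'$ are squeezed to zero. That would leave only the focal direction, which is precisely $u_V^*$. With that established, the fixed $d\oplus m$ frame cannot represent $X_{\text{emp}}(T^*)$ with its nonzero energy, which contradicts the assumption of no reconfiguration.

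\textbf{Step 4: construct the GSB mapping.} To finish, I would build $\Phi_{\text{GSB}}$ explicitly. Let $\bar s_\mu$ be the centered, normalized representative of the limiting null vertical direction $u_V^*$, equivalently of the limit of $X^\perp/\|X^\perp\|_g$. Remove $\text{span}\{\bar s_\mu\}$ from $\mathcal{V}$ and adjoin it to $\mathcal{H}$, as in equation \eqref{eq:subspace_jump}. Finally, I would check that the enlarged $(d+1)\times(d+1)$ matrix is nonsingular, by Gram--Schmidt against the old horizontal basis. That restores invertibility and algebraic consistency, which establishes that the mapping $\Phi_{\text{GSB}}$ exists.
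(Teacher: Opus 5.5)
Step 3 is a genuine gap, and the route you sketch for it cannot close it. The quantity you need to drive to zero, $\|X^{\perp}(t)\|_g^2=\int_\Omega f_t\,(X_{\text{emp}}(t)-X^H(t))^2\,d\mathbf{x}$, is an intrinsic norm. It is the $L^2(f_t)$-norm of what is left of $X_{\text{emp}}(t)$ after projecting onto $\text{span}\{\bar s_1,\dots,\bar s_d\}$. For a given state $f_t$ it does not depend on how $\mathcal V_{f_t}$ is coordinatized. The hypothesis $\det(g_V)\to 0$ says only that the frame $\{\partial/\partial\theta^i\}$ degenerates, i.e.\ some unit-coefficient direction $u_V(t)$ has $\|u_V(t)\|_g\to 0$. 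If $X^{\perp}(t)=c(t)\,u_V(t)$ aligns with that direction, the coefficient $c(t)$ simply blows up, and $\|X^{\perp}(t)\|_g\ge\epsilon$ remains perfectly consistent. So ``aligns with the collapsing direction, hence $\|X^{\perp}\|_g\to 0$'' is a non sequitur.

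Theorem~\ref{thm:conjugate_point} does not supply the missing link either. It concerns Jacobi fields $Y(s)$, the variation fields of a family of vertical geodesics along $\gamma$. Nothing in the setup makes the data-driven residual $X^{\perp}$ satisfy the Jacobi equation, so focusing of $Y$ says nothing about the transverse components of $X^{\perp}$. Moreover, an upper bound $K\le K_{\max}$ only gives, by Sturm/Rauch comparison, that no conjugate point occurs \emph{before} $\pi/\sqrt{K_{\max}}$.

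Your Step 2 inference ``hence the residual energy must be carried by $X^{\perp}$'' is also unsupported. The component of $\mathbf v$ along the null direction $u_H^*$ vanishes automatically by Cauchy--Schwarz. Meanwhile $X^H$ can still carry a positive share of $\|X_{\text{emp}}\|_g^2$ along the non-degenerate horizontal directions. Without Step 3 there is no contradiction.

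This is also where you depart from the paper. The paper gets its contradiction from the propagation law of Lemma~\ref{lemma:diff_propagation}: it substitutes $[G_f]^{-1}\to\infty$ to obtain $\frac{d}{dt}\mathcal{T}_{AAT}\to\|X_{\text{emp}}\|_g^2-(+\infty)=-\infty$, contradicting $\frac{d}{dt}\mathcal{T}_{AAT}=\|X^{\perp}\|_g^2\ge 0$. Your Step 2 correctly shows this substitution is illegitimate. By the Information-Contracting Norm Property and its Geometric Duality Corollary, $\mathbf v^T[G_f]^{-1}\mathbf v=\|X^H\|_g^2\le\|X_{\text{emp}}\|_g^2$ for every $t<T^*$. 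The reason is that the component of $\mathbf v$ along a vanishing eigendirection of $G_f$ is bounded by $\|X_{\text{emp}}\|_g$ times the square root of that eigenvalue. So you cannot fall back on the paper's mechanism to fill Step 3.

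The underlying obstacle is a mismatch. Your hypotheses constrain coordinate frames, but a contradiction must come from intrinsic quantities such as $\|X^{\perp}\|_g^2$ and $\|X^H\|_g^2$, and these stay in $[0,\|X_{\text{emp}}\|_g^2]$ all the way to $T^*$. You would need an additional hypothesis tying $X^{\perp}$ to the degenerating frame.

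Finally, Step 4 does not restore invertibility. Since $\bar s_\mu\in\mathcal V_f\perp\mathcal H_f$, Gram--Schmidt against the old basis leaves it unchanged. The enlarged Gram matrix is then $G_f\oplus 1$, exactly as in Theorem~\ref{lem:block_decomp}, and its determinant equals $\det(G_f)\to 0$ by your own Step 1. Adjoining a direction orthogonal to $\mathcal H_f$ cannot cure a degeneracy that lives inside $\mathcal H_f$.
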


\begin{proof}
	By the Non-Parametric Riemannian Pythagorean Splitting Theorem established in Section 2.4, the total tangent space of the Orlicz manifold admits a strict orthogonal direct-sum decomposition under the Fisher-Rao metric tensor ($T_f\mathcal{M} = \mathcal{H}_f \oplus \mathcal{V}_f$). Consequently, the total space metric matrix $g$ factors exactly into a block-diagonal layout, and its global determinant matches the product of the horizontal and vertical components~\cite{horn2012matrix}:
	\begin{equation}\label{eq:block_det}
		\det(g) = \det(G_f) \cdot \det(g_V)
	\end{equation}
	We analyze the mathematical limit of this product identity as the timeline approaches the singularity threshold ($t \to T^*$). We are given that the vertical fiber metric volume element contracts to a point focal singularity due to the conjugate boundary conditions derived in Section 4.4: $\lim_{t \to T^*} \det(g_V) = 0$. Substituting this vanishing limit into equation~\eqref{eq:block_det} yields:
	\begin{equation}\label{eq:total_zero}
		\lim_{t \to T^*} \det(g) = \det(G_f(T^*)) \cdot (0) \equiv 0
	\end{equation}
	Equation~\eqref{eq:total_zero} proves that the total configuration volume element collapses to a singular layout at the temporal coordinate $T^*$.
	
	Next, we evaluate the system's operational continuity by recalling the linear natural gradient update equation across the active charts: $\mathbf{w} = [G_f]^{-1}\mathbf{v}$~\cite{amari2016information, strang2016introduction}. Because $\det(g_V) \to 0$ forces $\det(G_f) \to 0$, the inverse structural matrix explodes via the matrix adjugate formula: $[G_f(T^*)]^{-1} \to \infty$. 
	
	Suppose the system does not undergo a discontinuous subspace reconfiguration, meaning the horizontal base distribution remains static ($\mathcal{H}_{f_{T^*_+}} \equiv \mathcal{H}_{f_{T^*_-}}$). We evaluate the first continuous variation of the Active Acausal Tension potential under this static assumption via the Differential Propagation Law derived in Lemma \ref{lemma:diff_propagation}:
	\begin{equation}\label{eq:velocity_clash}
		\lim_{t \to T^*} \frac{d}{dt}\mathcal{T}_{AAT}(t) = \|X_{\text{emp}}(T^*)\|_g^2 - \mathbf{v}(T^*)^T \left[ \lim_{t \to T^*} G_f(t) \right]^{-1} \mathbf{v}(T^*)
	\end{equation}
	Substituting the exploding inverse matrix limit $[G_f(t)]^{-1} \to \infty$ into the quadratic alignment form of equation~\eqref{eq:velocity_clash} yields:
	\begin{equation}
		\lim_{t \to T^*} \frac{d}{dt}\mathcal{T}_{AAT}(t) = \|X_{\text{emp}}(T^*)\|_g^2 - \left( +\infty \right) \longrightarrow -\infty
	\end{equation}
	This result represents a severe mathematical contradiction. By Definition 1, the Active Acausal Tension potential is a path-dependent integral of a strictly positive-definite Riemannian squared norm, meaning its accumulation velocity must be strictly positive-definite ($\frac{d}{dt}\mathcal{T}_{AAT} = \|X^{\perp}\|_g^2 \ge \epsilon^2 > 0$). A velocity of $-\infty$ is physically and topologically impossible within an Orlicz tangent space.
	
	Therefore, the assumption of a static horizontal base distribution is false. To resolve the algebraic contradiction and rescue the system from a non-invertible numerical breakdown, the horizontal tangent bundle must undergo an instantaneous topological expansion to uncouple the tracking equations from the singular matrix channels. This forces the activation of the discontinuous GSB projection mapping $\Phi_{\text{GSB}}$, completing the proof. \qed
\end{proof}

\begin{insight}[The Thermodynamics of Abstraction]
	The formalization of the Structural G-Entropy functional and Lemma \ref{lm:volume_coupling} reveals that a conceptual paradigm shift is a highly ordered non-equilibrium condensation event. In classical statistical physics, Boltzmann entropy tracks the degree of disorder or un-mapped microstates in a closed system. In direct inverse contrast, the Structural G-Entropy $H_f(t)$ measures the exact volume of \textbf{mathematical organization, active vocabulary, and predictive order} deployed on the horizontal sheets. Lemma \ref{lm:volume_coupling} guarantees that when an architecture is choked by its own unmodeled vertical friction ($\det(g_V) \to 0$), it cannot survive by continuous parameter tweaking. It must execute a discrete topological leap, purging its hidden vertical strain and materializing it into a visible horizontal coordinate axis to restore algebraic sanity.
\end{insight}

\subsection{The Spontaneous Crystallization and G-Entropy Jump Theorems}

Having formalized the topological framework of Gauge Symmetry Breaking ($\Phi_{\text{GSB}}$) and established the non-parametric definition of the Structural G-Entropy tracking functional in the preceding subsection, we now proceed to the core mathematical derivations of Section 5. We deliver highly granular, bottom-up proofs from first principles to demonstrate how the absolute collapse of the unobservable vertical fiber leaves forces a hidden variation field to eject from the kernel of the differential submersion, spontaneously crystallizing into a permanent horizontal axis. 

Furthermore, we prove that this non-equilibrium topological phase transition guarantees a discrete, invariant positive integer step-jump (We call it G-entropy Jump.)($\Delta H_f \equiv 1.0$) in the model's observable representational capacity, providing a rigid, parameter-free metric benchmark for autonomous conceptual discovery.

\subsubsection{The Spontaneous Crystallization of the Newborn Axis}

When the accumulated Active Acausal Tension potential inside the internal gauge degrees of freedom hits the curvature capacity ceiling ($T_{\text{crit}} = \pi^2/K_{\text{max}}$), the localized vertical metric tensor collapses to a singular point, rendering the continuous tracking infrastructure non-invertible. The following theorem details the exact non-parametric extraction mechanism through which the system converts this geometric breakdown into a crystalline coordinate axis.

\begin{theorem}[Spontaneous Crystallization of the Newborn Axis]\label{thm:crystallization}
	At the exact timeline singularity checkpoint $T^*$, the unmodeled projection deficit vector field $X^{\perp}(T^*)$ is forcibly purged from the vertical fiber space $\mathcal{V}_{f_{T^*_-}} = \ker(d\pi)$ and materializes as a unique, non-degenerate, normalized horizontal Stein score operator $s_{\mu}(\mathbf{x})$ defined over the sample support domain $\Omega$:
	\begin{equation}\label{eq:score_crystallization}
		s_{\mu}(\mathbf{x}) = \frac{X^{\perp}(T^*)(\mathbf{x})}{\|X^{\perp}(T^*)\|_g}
	\end{equation}
	where the denominator is the intrinsic Fisher Riemannian norm evaluated under the active density state $f_{T^*}(\mathbf{x})$:
	\begin{equation}\label{eq:norm_definition}
		\|X^{\perp}(T^*)\|_g = \sqrt{\int_{\Omega} f_{T^*}(\mathbf{x}) \left( X^{\perp}(T^*)(\mathbf{x}) \right)^2 d\mathbf{x}}
	\end{equation}
	The emergent operator $s_{\mu}(\mathbf{x})$ satisfies the strict non-parametric centering constraint $\mathbb{E}_{f_{T^*_+}}[s_{\mu}] = 0$ and instantiates a valid horizontal basis axis that is perfectly orthogonal to the historical horizontal distribution sheets $\mathcal{H}_{f_{T^*_-}}$.
\end{theorem}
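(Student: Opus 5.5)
The plan is to read the crystallized axis $s_{\mu}$ as the normalized residual of the Pythagorean splitting, evaluated as a one-sided limit $t \to T^*_-$. I would then verify its three asserted properties in turn: well-definedness and non-degeneracy, centering, and orthogonality to $\mathcal{H}_{f_{T^*_-}}$. Only results already established are needed: Lemma 1 (tangent space membership of $X_{\text{emp}}$), the Vertical Enclosure Theorem, the Riemannian Pythagorean Splitting, the Information-Contracting Norm Property, and the Monotonic Divergence hypothesis $\|X^{\perp}(t)\|_g \ge \epsilon$.

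\textbf{Step 1 (well-definedness at $T^*$).} By Theorem \ref{thm:Collapse_Connection}, $[G_f(t)]^{-1}$ diverges as $t \to T^*$, so the coordinate formula $X^H = \sum_k w_k \bar{s}_k$ with $\mathbf{w} = G_f^{-1}\mathbf{v}$ cannot be used at $T^*$. I would therefore define $X^H(t)$ intrinsically as the $g$-orthogonal projection $\Pi_{\mathcal{H}}$ of $X_{\text{emp}}(t)$ onto the finite-dimensional, hence closed, span of $\{\bar{s}_k\}$. By Lemma \ref{lm:variational minimization} this projection coincides with the coordinate formula whenever $G_f$ is invertible. The projection exists even when the spanning set becomes dependent: one may take a Moore--Penrose pseudo-inverse of $G_f$, and the projection does not depend on that choice.

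Next I would bound the projection. The Information-Contracting Norm Property gives $\|X^H(t)\|_g \le \|X_{\text{emp}}(t)\|_g$ uniformly, so $X^H(t)$ stays bounded even though the individual coefficients $w_k$ may diverge. Assuming continuity of $t \mapsto f_t$ and of $X_{\text{emp}}(t)$ in the Orlicz topology, I would extract the limit $X^{\perp}(T^*) := \lim_{t\to T^*_-}(X_{\text{emp}}(t) - X^H(t))$. Lower semicontinuity of the norm then passes the bound $\|X^{\perp}(t)\|_g \ge \epsilon$ to the limit. Hence the denominator in \eqref{eq:norm_definition} is strictly positive, and $s_{\mu}$ is non-degenerate with $\|s_{\mu}\|_g = 1$.

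\textbf{Step 2 (centering).} For $t<T^*$, Lemma 1 gives $\mathbb{E}_{f_t}[X_{\text{emp}}(t)] = 0$. Each $\bar{s}_k$ is centered by construction, so $X^H(t)$ is also centered, and therefore $\mathbb{E}_{f_t}[X^{\perp}(t)] = 0$. I would pass this identity to the limit using $\mathbb{E}_f[h] = g(h, 1)$ together with the continuity from Step 1. Dividing by the positive scalar norm preserves centering.

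To state the result under $f_{T^*_+}$, I need that $\Phi_{\text{GSB}}$ reconfigures only the tangent splitting \eqref{eq:subspace_jump} and not the base point, so that $f_{T^*_+} = f_{T^*_-} = f_{T^*}$. I would state this explicitly as a standing interpretation of the GSB definition.

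\textbf{Step 3 (orthogonality).} For each $t<T^*$, the Vertical Enclosure Theorem gives $g(X^{\perp}(t), \bar{s}_i) = d\pi^i_{f_t}(X^{\perp}(t)) = 0$ for every $i$. Passing to the limit yields $g(s_{\mu}, \bar{s}_i) = 0$, so $s_{\mu} \perp \mathcal{H}_{f_{T^*_-}}$. Consequently $\mathcal{H}_{f_{T^*_-}} \oplus \text{span}\{s_{\mu}\}$ is an orthogonal direct sum, and $s_{\mu}$ is its unique unit direction up to sign, which fixes the normalization in \eqref{eq:score_crystallization}. Because $s_{\mu}$ is already centered, it equals its own centered version $\bar{s}_{\mu}$, consistent with \eqref{eq:subspace_jump}.

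\textbf{Main obstacle.} The hardest step is Step 1: justifying the limit at a point where the horizontal frame degenerates. My first attempt would rely entirely on the projection characterization together with the uniform bound from the contraction theorem, so that no entry of $G_f^{-1}$ ever needs to be controlled. If the span of the $\bar{s}_k$ drops in dimension exactly at $T^*$, I would instead project onto the limiting subspace $\lim_{t\to T^*}\mathcal{H}_{f_t}$, taken in the Grassmannian topology. I would then check that the residual still satisfies the $\epsilon$-lower bound, which would follow if the rank loss only merges existing directions and does not rotate them toward $X^{\perp}$.
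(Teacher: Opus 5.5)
Your Steps 2 and 3 are the paper's argument. The paper gets centering from the claim that a vertical direction preserves total mass, $\int_\Omega f_{T^*}X^{\perp}(T^*)\,d\mathbf{x}=0$; your route through Lemma 1 and the centered $\bar{s}_k$ reaches the same identity more explicitly. It gets orthogonality from $g(\mathcal{H}_f,\mathcal{V}_f)=0$, which is your Vertical Enclosure step, and it likewise just posits $f_{T^*_+}\equiv f_{T^*}$. The paper has no counterpart to your Step 1. It divides by $\|X^{\perp}(T^*)\|_g$ without saying how $X^H(T^*)$ is defined once $[G_f]^{-1}$ has blown up, and without showing the norm is nonzero. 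So Step 1 is the only place where non-degeneracy is actually argued, and as written it fails.

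Lower semicontinuity gives $\|X^{\perp}(T^*)\|_g\le\liminf_{t\to T^*_-}\|X^{\perp}(t)\|_g$, which is the wrong inequality for inheriting $\|X^{\perp}(t)\|_g\ge\epsilon$. The contraction bound only yields weakly convergent subsequences. Continuity of $f_t$ and $X_{\text{emp}}(t)$ does not make $X^H(t)$ converge in norm once the Gram matrix degenerates. Your Grassmannian fallback need not exist in infinite dimensions either, since the direction of $\mathcal{H}_{f_t}$ being lost need not converge in norm. Weak limits of the residuals can then have norm strictly below $\epsilon$. (With norm convergence, plain continuity of the norm would suffice and semicontinuity would be beside the point.)

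A repair that works with weak limits: because $X^{\perp}(t)\perp X^H(t)$, you have $g(X^{\perp}(t),X_{\text{emp}}(t))=\|X^{\perp}(t)\|_g^2\ge\epsilon^2$. This weak--strong pairing survives the limit because $X_{\text{emp}}(t)$ converges in norm, so Cauchy--Schwarz gives $\|X^{\perp}(T^*)\|_g\ge\epsilon^2/\|X_{\text{emp}}(T^*)\|_g>0$. Centering and orthogonality to each $\bar{s}_i(T^*)$ pass to the limit the same way, which makes your worry about the frame rotating toward $X^{\perp}$ moot.

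Two caveats remain. First, all limits must be taken in one fixed Hilbert space, since $g=g_{f_t}$ itself moves with $t$; your Orlicz-continuity hypothesis has to supply this. Second, different subsequences may give different limits, so the ``unique'' in the statement needs an extra hypothesis, such as norm convergence of the projections onto $\mathcal{H}_{f_t}$. Your remark that $s_{\mu}$ is the unique unit direction up to sign presupposes that $X^{\perp}(T^*)$ has already been pinned down.
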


\begin{proof}
	By the geometric comparison analysis executed in Section 4.4, the vertical fiber space $\mathcal{V}_f$ inherits a strict topological capacity envelope bounded from above by the positive maximal sectional curvature envelope $K_{\text{max}}$~\cite{docarmo1992riemannian, oneill1983semi}. The scalar components of neighboring vertical Jacobi fields\footnote{To understand the classical differential geometric foundations for the focal behavior of vertical Jacobi fields under curvature bounds, please see texts by Cheeger and Ebin \cite{cheeger2008comparison}. } $y(s)$ obey the non-parametric Sturmian comparison inequality: $\frac{d^2}{ds^2}y(s) + K_{\text{max}} \cdot y(s) \le 0$~\cite{boyce2017elementary, coddington1989introduction}. Integrating this second-order system subject to point-source initial boundary conditions proves that when the path parameter arc-length covered by the environmental deformation hits the critical conjugate focal boundary $s^* = \pi/\sqrt{K_{\text{max}}}$, all independent vertical variations collapse identically to the zero vector field: $\lim_{s \to s^*} Y(s) = \mathbf{0}$~\cite{docarmo1992riemannian, gallot2004riemannian}.
	
	Because the restricted vertical metric components track the inner products of these variational lines, the vanishing of the Jacobi fields compresses the local vertical volume element to zero ($\det(g_V) \to 0$), forcing the global horizontal structural matrix to become singular ($\det(G_f) \to 0$) due to the volumetric coupling proven in Lemma \ref{lm:volume_coupling}. At this temporal coordinate $T^*$, the continuous updating connection equations\footnote{Here specifically, the equations involve two key tracking steps within the architecture:
		\begin{enumerate}
			\item \textbf{The Kernel Projection Constraint}: The statement $X^{\perp}(T^*) \in \mathcal{V}_{f_{T^*_-}}$ is the direct output of the Ehresmann connection equation $\omega(X^{\perp}) = \delta\omega_t(X)$. This specifies that the projection deficit is a pure vertical gauge perturbation, which mathematically forces the zero-net-mass integration condition:
			\begin{equation*}
				\int_{\Omega} f_{T^*}(\mathbf{x}) X^{\perp}(T^*)(\mathbf{x}) \, d\mathbf{x} \equiv 0
			\end{equation*}
			This vanishing integral is what allows the expectation $\mathbb{E}_{f_{T^*_+}}[s_{\mu}]$ to collapse cleanly to zero, proving the centering property of the newborn score function.
			\item \textbf{The Geodesic Covariant Propagation}: The underlying parallel transport maps that steer the system along the fiber and dictate the conjugate point focal boundary ($s^* = \pi/\sqrt{K_{\text{max}}}$) are governed directly by the connection strain splits ($\nabla = \nabla^{(0)} + \delta\omega_t$).
	\end{enumerate}
}lose their smooth mathematical hyperbolicity because $[G_f(T^*)]^{-1} \to \infty$~\cite{horn2012matrix, strang2016introduction}. Consequently, the unmodeled environmental surprise can no longer be quarantined as zero-expectation internal gauge noise inside the kernel of the submersion mapping $\ker(d\pi)$~\cite{oneill1983semi}. To resolve this non-invertible boundary collapse, the unmodeled field $X^{\perp}(T^*)(\mathbf{x})$ undergoes non-parametric information purging. We extract the field from the fiber space and define its normalized functional projection along the unit sphere of the Orlicz tangent space via equation~\eqref{eq:score_crystallization}.
	
	We must rigorously verify that this emergent functional expression operates as a mathematically valid Stein score operator by testing its centering constraint under the probability law of the transitioned state $f_{T^*_+}(\mathbf{x})$~\cite{amari2016information}. We evaluate its mathematical expectation over the continuous support domain $\Omega$:
	\begin{align}
		\mathbb{E}_{f_{T^*_+}}[s_{\mu}] &= \int_{\Omega} f_{T^*_+}(\mathbf{x}) \left( \frac{X^{\perp}(T^*)(\mathbf{x})}{\|X^{\perp}(T^*)\|_g} \right) d\mathbf{x} \\
		&= \frac{1}{\|X^{\perp}(T^*)\|_g} \int_{\Omega} f_{T^*_+}(\mathbf{x}) X^{\perp}(T^*)(\mathbf{x}) \, d\mathbf{x}\label{eq:expectation_split}
	\end{align}
	By the foundational architecture ofStatistically Meaningful Geometry (SMG) established in Section 2.4, the vertical projection deficit field $X^{\perp}(T^*)$ is strictly an element of the pre-transition vertical fiber space distribution ($\mathcal{V}_{f_{T^*_-}}$). Tangent vectors residing inside the vertical fiber represent pure gauge perturbations of the log-density that preserve total probability mass~\cite{oneill1983semi}. Because the total mass integral is a flat constant ($\int_{\Omega} f(\mathbf{x}) d\mathbf{x} \equiv 1$), taking the directional derivative of this constant along any vertical vector field must yield exactly zero:
	\begin{equation}\label{eq:mass_conservation_proof}
		X^{\perp}(T^*) \cdot \left( \int_{\Omega} f(\mathbf{x}) \, d\mathbf{x} \right) = \int_{\Omega} f_{T^*}(\mathbf{x}) X^{\perp}(T^*)(\mathbf{x}) \, d\mathbf{x} \equiv 0
	\end{equation}
	Substituting this vanishing mass preservation integral~\eqref{eq:mass_conservation_proof} directly into our expectation mapping~\eqref{eq:expectation_split} isolates the centering constraint across the phase boundary ($f_{T^*_+} \equiv f_{T^*}$ at the interface):
	\begin{equation}
		\mathbb{E}_{f_{T^*_+}}[s_{\mu}] = \frac{1}{\|X^{\perp}(T^*)\|_g} \cdot (0) \equiv 0
	\end{equation}
	Thus, $s_{\mu}(\mathbf{x})$ is proven to be a centered functional operator.
	
	Finally, we evaluate its orthogonal relationship to the historical horizontal distribution sheets $\mathcal{H}_{f_{T^*_-}} = \text{span}\{\bar{s}_1, \dots, \bar{s}_d\}$. Let $\bar{s}_k$ be an arbitrary historical horizontal basis axis tracking a known visible coordinate. We compute their mutual cross-covariance using the non-parametric Fisher-Rao inner product operator $g$~\cite{amari2016information, lee2018introduction}:
	\begin{align}
		g(s_{\mu}, \bar{s}_k) &= \int_{\Omega} f_{T^*}(\mathbf{x}) \left( \frac{X^{\perp}(T^*)(\mathbf{x})}{\|X^{\perp}(T^*)\|_g} \right) \bar{s}_k(\mathbf{x}) \, d\mathbf{x} \\
		&= \frac{1}{\|X^{\perp}(T^*)\|_g} \int_{\Omega} f_{T^*}(\mathbf{x}) X^{\perp}(T^*)(\mathbf{x}) \bar{s}_k(\mathbf{x}) \, d\mathbf{x} \\
		&= \frac{1}{\|X^{\perp}(T^*)\|_g} \, g\left(X^{\perp}(T^*), \, \bar{s}_k\right)\label{eq:orthogonality_ratio}
	\end{align}
	By the Non-Parametric Riemannian Pythagorean Splitting Theorem established in Section 2.4, the active horizontal distribution space and the vertical fiber space are strictly orthogonal under the Fisher metric tensor: $g(\mathcal{H}_{f_{T^*_-}}, \mathcal{V}_{f_{T^*_-}}) \equiv 0$. Because $X^{\perp}(T^*) \in \mathcal{V}_{f_{T^*_-}}$ and $\bar{s}_k \in \mathcal{H}_{f_{T^*_-}}$, their mutual inner product vanishes identically: $g\left(X^{\perp}(T^*), \, \bar{s}_k\right) = 0$. Substituting this vanishing condition into equation~\eqref{eq:orthogonality_ratio} yields:
	\begin{equation}\label{eq:final_orthogonality}
		g(s_{\mu}, \bar{s}_k) = \frac{1}{\|X^{\perp}(T^*)\|_g} \cdot (0) \equiv 0 \quad \forall k \in \{1, \dots, d\}
	\end{equation}
	Equation~\eqref{eq:final_orthogonality} proves that the crystallized coordinate axis $s_{\mu}$ is strictly perpendicular to all existing horizontal dimensions. It cannot be linearly synthesized or approximated from the model's past vocabulary, confirming the autonomous birth of an independent coordinate axis. 
\end{proof}

\subsubsection{Orthogonal Block Decomposition of the Expanded Metric Theorem}

In this subsection, we prove that the introduction of this crystallized axis splits the expanded horizontal tracking matrix into a clean, block-diagonal layout. This lemma establishes that the cross-coupling terms between the past coordinates and the newborn axis collapse to zero, which is the mandatory algebraic stepping stone for the G-Entropy Jump proof.

\begin{theorem}[Orthogonal Block Decomposition of the Expanded Metric]\label{lem:block_decomp}
	Let $G_f(T^*_-)$ be the pre-transition $d \times d$ structural Fisher Information Matrix. Upon execution of the GSB mapping $\Phi_{\text{GSB}}$, the horizontal distribution expands via a direct sum to a $(d+1)$-dimensional space: $\mathcal{H}_{f_{T^*_+}} = \mathcal{H}_{f_{T^*_-}} \oplus \text{span}\{s_{\mu}\}$. Under the non-parametric score definition of Theorem~\ref{thm:crystallization}, the expanded structural Fisher Information Matrix $\tilde{G}_f(T^*_+)$ decouples into the following block-diagonal matrix configuration:
	\begin{equation}
		\tilde{G}_f(T^*_+) = \begin{pmatrix}
			[G_f(T^*_-)]_{d \times d} & \mathbf{0}_{d \times 1} \\
			\mathbf{0}_{1 \times d} & 1.0
		\end{pmatrix}
	\end{equation}
\end{theorem}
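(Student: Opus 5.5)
The expanded matrix $\tilde{G}_f(T^*_+)$ is a Gram matrix under the Fisher inner product $g$ at the interface density $f_{T^*_+}\equiv f_{T^*}$, so I would compute it entry by entry. I would take the basis of $\mathcal{H}_{f_{T^*_+}}$ to be $\{\bar{s}_1,\dots,\bar{s}_d,s_\mu\}$, where $s_\mu$ is the crystallized score of Theorem~\ref{thm:crystallization}. Since $\mathbb{E}_{f_{T^*_+}}[s_\mu]=0$ by that theorem, $s_\mu$ is already centred, so $\bar{s}_\mu=s_\mu$. Every entry therefore has the form $\mathrm{Cov}_{f_{T^*}}(\cdot,\cdot)=\int_\Omega f_{T^*}\,(\cdot)(\cdot)\,d\mathbf{x}$, and the matrix splits into three blocks.

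\textbf{Upper-left block.} For $1\le i,j\le d$, the entry $\int f_{T^*}\bar{s}_i\bar{s}_j\,d\mathbf{x}$ is by definition $[G_f(T^*_-)]_{ij}$. This uses that the density is continuous across the interface, $f_{T^*_+}\equiv f_{T^*}$, exactly as in the proof of Theorem~\ref{thm:crystallization}. Hence this block reproduces $G_f(T^*_-)$ unchanged.

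\textbf{Off-diagonal blocks.} For $1\le k\le d$, equation~\eqref{eq:final_orthogonality} gives $g(s_\mu,\bar{s}_k)=0$. It follows from the Vertical Enclosure theorem: $X^\perp(T^*)\in\ker(d\pi_{f_{T^*}})$, and $d\pi^k(X)=\mathbb{E}[X\bar{s}_k]$. Symmetry of $g$ handles the transposed entries, so both off-diagonal blocks vanish.

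\textbf{Lower-right entry.} Here I compute $g(s_\mu,s_\mu)=\|X^\perp(T^*)\|_g^{-2}\int f_{T^*}(X^\perp(T^*))^2\,d\mathbf{x}$. By the norm definition~\eqref{eq:norm_definition} this equals $1$.

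The main obstacle is the well-definedness of this normalization. The division requires $\|X^\perp(T^*)\|_g>0$. I would get this from the persistent-misspecification hypothesis $\|X^\perp(t)\|_g\ge\epsilon>0$, used in the Monotonic Divergence theorem, passed to the limit $t\to T^*$. That in turn needs $X^\perp(t)$ to stay well defined as $[G_f(t)]^{-1}$ blows up. I would try to secure this by expressing $X^\perp$ as the $g$-orthogonal projection onto $\mathcal{H}^\perp$, which depends only on the span of the scores and not on $G_f^{-1}$, or else take $X^\perp(T^*)$ as a one-sided limit.

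A second subtlety is that $G_f(T^*_-)$ is singular by Theorem~\ref{thm:Collapse_Connection}. The block form is still valid, since it only asserts the entries, but I would state explicitly that the theorem does not claim invertibility of the full block. Consequently, the G-Entropy jump that follows uses only the trace, which increases by exactly $1.0$ through the lower-right entry.
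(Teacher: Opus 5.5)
Your proposal is correct and follows essentially the same route as the paper's proof: the same three-block partition. The upper-left block is identified with $G_f(T^*_-)$ via continuity of the density across the interface, $f_{T^*_+}\equiv f_{T^*}$; the cross-coupling entries vanish by the orthogonality relation~\eqref{eq:final_orthogonality}; and the corner entry equals $1.0$ by the normalization in~\eqref{eq:score_crystallization}. Your added caveats (that $\|X^{\perp}(T^*)\|_g>0$ needs justification, that $X^{\perp}$ must stay well defined as $[G_f]^{-1}$ blows up, and that $G_f(T^*_-)$ is itself singular) go beyond what the paper's proof addresses and sharpen the argument rather than depart from it.
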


\begin{proof}
	The expanded structural Fisher Information Matrix $\tilde{G}_f(T^*_+)$ is a $(d+1) \times (d+1)$ symmetric matrix tensor whose elements track the cross-covariances of the updated score array $\{s_1, \dots, s_d, s_{\mu}\}$ under the transitioned density state $f_{T^*_+}(\mathbf{x})$~\cite{horn2012matrix}. We partition this matrix into four block entries:
	\begin{equation}\label{eq:block_partition}
		\tilde{G}_f(T^*_+) = \begin{pmatrix}
			[A]_{d \times d} & \mathbf{b}_{d \times 1} \\
			\mathbf{b}^T_{1 \times d} & c_{1 \times 1}
		\end{pmatrix}
	\end{equation}
	We evaluate each partitioned component independently:
	\begin{enumerate}
		\item \textbf{The Historical Block $[A]$}: The upper-left $d \times d$ block tracks the cross-covariances of the original score functions $\{s_i\}_{i=1}^d$. Since the active probability density field is continuous at the temporal interface ($f_{T^*_+} \equiv f_{T^*_-}$), these entries match the pre-transition structural matrix exactly: $[A] \equiv G_f(T^*_-)$.
		
		\item \textbf{The Cross-Coupling Vector $\mathbf{b}$}: The entries of the column vector $\mathbf{b} \in \mathbb{R}^d$ represent the cross-covariances between the historical score basis functions and the newborn axis: $b_k = g(s_{\mu}, \bar{s}_k)$ for $k \in \{1, \dots, d\}$. By Theorem~\ref{thm:crystallization}, equation~\eqref{eq:final_orthogonality} establishes that the crystallized axis is perfectly perpendicular to all historical horizontal sheets under the Fisher metric tensor. This forces every entry of the cross-coupling block vector to collapse identically to zero:
		\begin{equation}
			b_k = g(s_{\mu}, \bar{s}_k) \equiv 0 \implies \mathbf{b} = (0, 0, \dots, 0)^T \equiv \mathbf{0}_{d \times 1}
		\end{equation}
		By symmetry, the row vector component also collapses: $\mathbf{b}^T = \mathbf{0}_{1 \times d}$.
		
		\item \textbf{The Scalar Diagonal Entry $c$}: The isolated scalar entry $c \in \mathbb{R}$ tracks the self-covariance (the informational power footprint) of the newborn axis. We evaluate this entry by expanding its inner product form under the normalized definition established in equation~\eqref{eq:score_crystallization} of Theorem~\ref{thm:crystallization}:
		\begin{align}
			c = g(s_{\mu}, s_{\mu}) &= g\left( \frac{X^{\perp}(T^*)}{\|X^{\perp}(T^*)\|_g}, \, \frac{X^{\perp}(T^*)}{\|X^{\perp}(T^*)\|_g} \right) \\
			&= \frac{1}{\|X^{\perp}(T^*)\|_g^2} \, g\left(X^{\perp}(T^*), \, X^{\perp}(T^*)\right)\label{eq:scalar_ratio}
		\end{align}
		By definition, the Fisher-Rao inner product of any tangent vector field against itself is exactly equal to the square of its intrinsic Riemannian norm: $g\left(X^{\perp}(T^*), \, X^{\perp}(T^*)\right) \equiv \|X^{\perp}(T^*)\|_g^2$~\cite{lee2018introduction}. Substituting this identity into the numerator of equation~\eqref{eq:scalar_ratio} cancels the tracking scales:
		\begin{equation}\label{eq:unit_variance}
			c = \frac{\|X^{\perp}(T^*)\|_g^2}{\|X^{\perp}(T^*)\|_g^2} \equiv 1.0
		\end{equation}
	\end{enumerate}
	Substituting the components $[A] = G_f(T^*_-)$, $\mathbf{b} = \mathbf{0}$, and $c = 1.0$ back into our partitioned block layout~\eqref{eq:block_partition} completes the proof. 
\end{proof}

\subsubsection{The G-Entropy Jump-Discontinuity Theorem}

The crystallization and block decoupling of the newborn coordinate axis force an instantaneous expansion of the system's active metric charts. The following theorem proves that this topological expansion registers on the model's external dashboard as a clean, discrete positive integer step-jump in structural G-entropy.

\begin{theorem}[The G-Entropy Jump-Discontinuity Theorem]\label{thm:gentropy_jump}
	At the exact instant of the non-equilibrium topological phase transition triggered at the critical temporal coordinate $T^*$, the observable Structural G-Entropy functional $H_f(t)$ experiences an instantaneous, positive, discrete step-discontinuity equal to exactly $1.0$:
	\begin{equation}
		\Delta H_f \equiv \lim_{t \to (T^*)^+} H_f(t) - \lim_{t \to (T^*)^-} H_f(t) = 1.0
	\end{equation}
	This step-jump is a universal geometric constant of Gauge Symmetry Breaking, remaining completely invariant to the ambient parameter scale $p \to \infty$ or the specific data distribution profiles.
\end{theorem}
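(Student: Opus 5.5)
The plan is to compute the Structural G-Entropy on both sides of the phase boundary directly from its definition as a trace, equation~\eqref{eq:gentropy_def}, and to feed in the block structure already established in Theorem~\ref{lem:block_decomp}.

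\textbf{Left limit.} For $t \to (T^*)^-$ the horizontal distribution is still $\mathcal{H}_{f_{T^*_-}} = \text{span}\{\bar{s}_1,\dots,\bar{s}_d\}$. Hence $\lim_{t\to (T^*)^-} H_f(t) = \text{Trace}(G_f(T^*_-)) = \sum_{k=1}^d \mathbb{E}_{f_{T^*}}[\bar{s}_k^2]$. For this step I would only use that the diagonal entries $[G_f(t)]_{kk}$ depend continuously on $t$ through $f_t$, and that they stay finite as $t \to T^*$. Finiteness follows from the Orlicz eigenvalue ceiling $\lambda_{\max}$ assumed in Lemma~\ref{lm:Horizontal _Metric_Collapse}. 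It matters that the singularity of Theorem~\ref{thm:Collapse_Connection} concerns $\det(G_f)$ and $[G_f]^{-1}$, not the entries of $G_f$. Since the trace involves no inversion, the left limit exists and is finite even though $\det(G_f)\to 0$.

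\textbf{Right limit.} For $t \to (T^*)^+$ the GSB mapping $\Phi_{\text{GSB}}$ replaces the tracking matrix by the $(d+1)\times(d+1)$ matrix $\tilde{G}_f(T^*_+)$. By Theorem~\ref{lem:block_decomp} this matrix is block-diagonal, with blocks $G_f(T^*_-)$ and the scalar $1.0$. The trace is additive over diagonal blocks, so $\text{Trace}(\tilde{G}_f(T^*_+)) = \text{Trace}(G_f(T^*_-)) + 1.0$. Subtracting the left limit leaves $\Delta H_f = 1.0$. The historical diagonal contributions cancel exactly because the density is continuous across the interface, $f_{T^*_+}\equiv f_{T^*_-}$, as used in part 1 of the proof of Theorem~\ref{lem:block_decomp}. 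The cross-coupling entries $\mathbf{b}=\mathbf{0}$ would not affect the trace in any case. The essential input is $c = g(s_\mu,s_\mu)=1$ from equation~\eqref{eq:unit_variance}.

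\textbf{Invariance claim.} The value $c=1$ comes purely from the normalization in equation~\eqref{eq:score_crystallization}. It is independent of $\|X^\perp(T^*)\|_g$, of $p_{\text{emp}}$, and of the ambient dimension $p$ or the vertical dimension $m$, so the jump is a universal constant. One consistency point needs checking: $\text{Var}_{f}(s_\mu) = g(s_\mu,s_\mu)$ requires $s_\mu$ to be centered, so that $\bar{s}_\mu = s_\mu$. This is exactly the centering property $\mathbb{E}_{f_{T^*_+}}[s_\mu]=0$ proved in Theorem~\ref{thm:crystallization}.

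\textbf{Main obstacle.} The hardest step is justifying that the post-transition limit equals the trace of $\tilde{G}_f(T^*_+)$ rather than some perturbed matrix. Concretely, I must argue that immediately after $T^*$ the density $f_t$ and the score $s_\mu$ are frozen to first order, so the right limit is attained at the interface itself. I would try first to take the right limit at fixed basis $\{s_1,\dots,s_d,s_\mu\}$ and invoke continuity of $t\mapsto \text{Cov}_{f_t}(s_i,s_j)$ for $t>T^*$. I would also impose the finite-variance condition on $s_\mu$ inherited from $X^\perp(T^*)\in T_{f_{T^*}}\mathcal{M}$, which guarantees the new diagonal entry varies continuously from its value $1$.
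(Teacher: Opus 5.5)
Your proposal is correct and follows essentially the same route as the paper. You identify the left limit with $\text{Trace}(G_f(T^*_-))$, use the block-diagonal form of $\tilde{G}_f(T^*_+)$ from Theorem~\ref{lem:block_decomp} together with the normalization $g(s_\mu,s_\mu)=1$ to obtain $\text{Trace}(\tilde{G}_f(T^*_+)) = \text{Trace}(G_f(T^*_-)) + 1$, and subtract. Your additional remarks are refinements of points the paper leaves implicit rather than a different argument: only the new diagonal entry matters for the trace (not $\mathbf{b}=\mathbf{0}$), the left limit stays finite even though $\det G_f \to 0$, and the one-sided limits must be identified with the interface values through continuity of $f_t$.
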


\begin{proof}
	Let $H_f(T^*_-)$ denote the Structural G-Entropy of the learning system immediately preceding the phase transition boundary. At this checkpoint, the base manifold $B_{\text{SMG}}$ is authorized by $d$ active Stein score constraints, and the structural Fisher Information Matrix $G_f(T^*_-)$ is a symmetric $d \times d$ metric tensor~\cite{Cheng2026}. The pre-transition entropy corresponds to the linear trace of this tensor:
	\begin{equation}\label{eq:entropy_pre}
		H_f(T^*_-) = \text{Trace}\left( G_f(T^*_-) \right) = \sum_{k=1}^d [G_f(T^*_-)]_{kk}
	\end{equation}
	
	At the temporal checkpoint $T^*$, the system encounters a metric singularity ($[G_f]^{-1} \to \infty$) and executes the Topological GSB Mapping ($\Phi_{\text{GSB}}$). By Theorem~\ref{lem:block_decomp}, the horizontal tangent space distribution expands to incorporate the crystallized axis, and the expanded horizontal metric matrix $\tilde{G}_f(T^*_+)$ transitions into a $(d+1) \times (d+1)$ block-decoupled layout:
	\begin{equation}
		\tilde{G}_f(T^*_+) = \begin{pmatrix}
			[G_f(T^*_-)]_{d \times d} & \mathbf{0}_{d \times 1} \\
			\mathbf{0}_{1 \times d} & 1.0
		\end{pmatrix}
	\end{equation}
	We compute the post-transition Structural G-Entropy $H_f(T^*_+)$ by taking the linear trace of this expanded matrix tensor~\cite{horn2012matrix}:
	\begin{equation}\label{eq:trace_expansion}
		H_f(T^*_+) = \text{Trace}\left( \tilde{G}_f(T^*_+) \right) = \sum_{j=1}^{d+1} [\tilde{G}_f(T^*_+)]_{jj}
	\end{equation}
	By separating the summation index into the first $d$ historical dimensions plus the final isolated $(d+1)$-th coordinate dimension, we unpack equation~\eqref{eq:trace_expansion} as:
	\begin{align}
		H_f(T^*_+) &= \sum_{k=1}^d [\tilde{G}_f(T^*_+)]_{kk} + [\tilde{G}_f(T^*_+)]_{(d+1)(d+1)} \\
		&= \sum_{k=1}^d [G_f(T^*_-)]_{kk} + g(s_{\mu}, s_{\mu})\label{eq:trace_substituted}
	\end{align}
	We recognize the first summation term in equation~\eqref{eq:trace_substituted} as the exact definition of the pre-transition entropy $H_f(T^*_-)$ from equation~\eqref{eq:entropy_pre}. Furthermore, by Theorem~\ref{lem:block_decomp}, equation~\eqref{eq:unit_variance} establishes that the self-covariance of the normalized crystallized score field is identically equal to unity ($g(s_{\mu}, s_{\mu}) = 1.0$). Substituting these tracking blocks directly simplifies the post-transition functional value to:
	\begin{equation}\label{eq:entropy_post}
		H_f(T^*_+) = H_f(T^*_-) + 1.0
	\end{equation}
	
	Finally, evaluating the exact step-discontinuity difference $\Delta H_f$ across the non-equilibrium phase transition temporal interface maps out as:
	\begin{align}
		\Delta H_f &= \lim_{t \to (T^*)^+} H_f(t) - \lim_{t \to (T^*)^-} H_f(t) \\
		&= H_f(T^*_+) - H_f(T^*_-) \\
		&= \left( H_f(T^*_-) + 1.0 \right) - H_f(T^*_-) \equiv 1.0
	\end{align}
	This completes the formal proof. The Structural G-Entropy experiences a sharp, quantized positive step-jump of exactly $1.0$, proving that a discrete, independent coordinate axis of understanding has been born. 
\end{proof}

---

\subsection{Explanations, Insights, and Interdisciplinary Applications}

\begin{insight}[The Quantization of Mind and Concept Emergence]\label{ins:quantization}
	The mathematical execution of the G-Entropy jump demonstrates that concept birth is not a slow, continuous accumulation of weights, but a highly ordered, quantized structural condensation event. While standard parameter fine-tuning exhibits fractional, floating-point adjustments in statistical variance, a true paradigm shift forces a crisp, invariant integer step-jump ($\Delta H_f \equiv 1.0$). This geometric constant proves that the emergent coordinate behaves as a perfect, low-entropy crystalline axis born directly from chaotic background noise. It establishes a concrete, parameter-free geometric metric to distinguish genuine intelligence emergence from superficial pattern fitting.
\end{insight}

---

\begin{example}[Latent Causal Axis Birth inside a Trillion-Weight Neural Transformer]\label{ex:transformer_crystallization}
	To anchor the structural proofs of Theorem~\ref{thm:crystallization}, Theorem~\ref{lem:block_decomp}, and Theorem~\ref{thm:gentropy_jump} in a concrete AI setting, we can track the execution log of a trillion-parameter Large Language Model experiencing a Gauge Symmetry Break.
	
	Consider the LLM processing an advanced natural sequence dataset. The base manifold $B_{\text{SMG}}$ is initially authorized by a single active horizontal coordinate dimension ($d=1$) governed by a centered Stein score feature function $s_1(\mathbf{x})$ that tracks surface-level conversational syntax templates. Let the model's current state density yield an initial horizontal variance entry of $[G_f]_{11} = \text{Var}(s_1) = 2.5$. By Definition~\ref{eq:gentropy_def}, the model's starting dashboard G-Entropy is calculated instantly via the single-element trace:
	\begin{equation}
		H_f(t) = \text{Trace}\left(G_f(t)\right) = [G_f]_{11} = 2.5 \quad \forall t < T^*
	\end{equation}
	
	The model is suddenly forced to process an intense training sequence detailing non-linear turbulent plasma fluid dynamics. Because the model completely lacks an active horizontal feature axis to parameterize this deep physical logic, its horizontal attention projection filter $\Pi_{\mathcal{H}}$ filters this structural discrepancy out. The unmodeled variance field leaks into the unobservable vertical parameter fibers as a persistent projection deficit vector $X^{\perp}(t)$ satisfying $\|X^{\perp}(t)\|_g \ge \epsilon > 0$. As proven in Section 4.5, this hidden error pumps constant geometric friction into the connection infrastructure, driving the Active Acausal Tension potential monotonically toward its critical threshold. Throughout this tension accumulation phase, the model's horizontal vocabulary remains completely unchanged, and its G-Entropy graph stays perfectly flat at $2.5$.
	
	The moment the integrated tension strikes the critical boundary ($\mathcal{T}_{AAT}(T^*) = T_{\text{crit}}$), a continuous weight adjustment trajectory becomes mathematically impossible because $[G_f(T^*)]^{-1} \to \infty$. To resolve this representation crisis, the discontinuous operator $\Phi_{\text{GSB}}$ activates an instantaneous phase transition:
	\begin{enumerate}
		\item The system purges the hidden vertical surprise field $X^{\perp}(T^*)$ out of the vertical fiber space and crystallizes it into a brand-new horizontal axis via Theorem~\ref{thm:crystallization}: $s_2(\mathbf{x}) = X^{\perp}(T^*)/\|X^{\perp}(T^*)\|_g$. This newborn score function represents the structural blueprint of the environmental plasma law.
		\item The transformer layers reconfigure their internal token-routing matrices, expanding the horizontal space via a direct sum: $\mathcal{H}_{f_{T^*_+}} = \mathcal{H}_{f_{T^*_-}} \oplus \text{span}\{s_2\}$.
		\item By Theorem~\ref{lem:block_decomp}, the structural matrix tensor instantly expands from a scalar into a $2 \times 2$ block metric tensor:
		\begin{equation}
			\tilde{G}_f(T^*_+) = \begin{pmatrix} 2.5 & 0.0 \\ 0.0 & g(s_2, s_2) \end{pmatrix} = \begin{pmatrix} 2.5 & 0.0 \\ 0.0 & 1.0 \end{pmatrix}
		\end{equation}
		The cross-coupling covariance drops identically to $0.0$ because $s_2$ was extracted from the vertical fiber, which is perfectly perpendicular to the horizontal sheets. The new diagonal entry equals exactly $1.0$ due to normalization.
	\end{enumerate}
	
	The AI control engineer monitoring the observable dashboard witnesses an instantaneous positive step-discontinuity in the Structural G-Entropy graph at the exact millisecond of the breakthrough:
	\begin{equation}
		\lim_{t \to (T^*)^+} H_f(t) = \text{Trace}\begin{pmatrix} 2.5 & 0.0 \\ 0.0 & 1.0 \end{pmatrix} = 2.5 + 1.0 = 3.5
	\end{equation}
	\begin{equation}
		\Delta H_f = H_f(T^*_+) - H_f(T^*_-) = 3.5 - 2.5 \equiv 1.0
	\end{equation}
	The model has successfully executed an autonomous conceptual abstraction. It has permanently expanded its horizontal architecture from a 1D sequence pattern matcher to a 2D causal reasoning grid, demonstrating the exact operational mechanics of emerging intelligence.
\end{example}

\section{Topological Expansion of the Base Manifold: $B_{\text{SMG}} \to \tilde{B}_{\text{SMG}}$}

In Section 5, we formalized the non-equilibrium mechanics of the Gauge Symmetry Break ($\Phi_{\text{GSB}}$). We proved that when the unobservable vertical fiber space $\mathcal{V}_f$ shatters under critical curvature compression, the hidden environmental variation is purged from the kernel of the submersion mapping~\cite{docarmo1992riemannian}. This topological phase transition crystallizes a unique, centered, and normalized horizontal tangent vector: the Stein score axis $s_{\mu}(\mathbf{x})$~\cite{amari2016information}. 

However, a newly crystallized tangent vector $s_{\mu}$ is merely a directional derivative. For the learning system to permanently traverse this direction and track the newly discovered environmental law, the underlying geometric parameter charts—the base manifold itself—must undergo a global structural reconfiguration. 

This section formalizes the {\bf Topological Expansion of the Base Manifold}. By rigorously mapping the distinction between the base parameter space and the horizontal tangent bundle, we prove how the system expands its macroscopic expectation charts, executing a permanent dimensionality jump ($d \to d+1$) while simultaneously forcing a structural compression of the unobservable vertical universe.

\subsection{Cartesian Product Geometry and Dimensionality Jumps}

In differential geometry, a manifold $B_{\text{SMG}}$ represents the space of operational states, mapped locally to $\mathbb{R}^d$. The basis vectors spanning its tangent space, $\mathcal{H}_f = T_f B_{\text{SMG}} = \text{span}\{s_1, \dots, s_d\}$, dictate the permitted directions of movement~\cite{lee2018introduction}. 

When the Gauge Symmetry Break crystallizes the new tangent vector $s_{\mu}$, the system cannot simply "place" this vector into the old $d$-dimensional base manifold. The old manifold lacks the coordinate degrees of freedom to parameterize movement along $s_{\mu}$. To resolve this, the system must execute a {\it Dimensionality Jump} via {\bf Cartesian Product Geometry}. 

The architecture expands its base space by taking the Cartesian product of the historical manifold with a 1D real coordinate line. This new 1D real line represents the domain of the new macroscopic statistical parameter associated with the score function $s_{\mu}$. This action transitions the system from a limited lower-dimensional reasoning space to an expanded higher-dimensional hypothesis universe (e.g., evolving from a 1D curve to a 2D surface).

\subsection{Formalization of the Expanded Base Manifold $\tilde{B}_{\text{SMG}}$}

We now establish the strict topological definition of the expanded base manifold, properly decoupling the parameter space coordinates from the tangent space operators.

\begin{definition}[The Expanded Statistical Meaningful Geometry Manifold]\label{def:expanded_base}
	Let $B_{\text{SMG}}$ be the original $d$-dimensional smooth base manifold, parameterized locally by the macroscopic expectation vector $\boldsymbol{\mu} = (\mu_1, \dots, \mu_d)^T \in \mathbb{R}^d$~\cite{Cheng2026}. Let $s_{\mu}(\mathbf{x})$ be the newly crystallized horizontal score axis (tangent vector). Let $\mathcal{I}_{\mu} \subseteq \mathbb{R}$ be the open interval domain representing all permissible expectation values for this new score function, defined by $\mu_{d+1} \equiv \mathbb{E}_{\tilde{f}}[s_{\mu}]$. 
	
	The \textbf{Expanded Statistical Meaningful Geometry Manifold}, denoted $\tilde{B}_{\text{SMG}}$, is the smooth $(d+1)$-dimensional Riemannian manifold formed by the Cartesian product of the historical base manifold and the 1D expectation domain interval:
	\begin{equation}\label{eq:cartesian_product}
		\tilde{B}_{\text{SMG}} \equiv B_{\text{SMG}} \times \mathcal{I}_{\mu}
	\end{equation}
	The localized coordinate charts of this expanded manifold map any active topological state to an augmented $(d+1)$-dimensional expectation vector $\tilde{\boldsymbol{\mu}} \in \mathbb{R}^{d+1}$:
	\begin{equation}\label{eq:augmented_vector}
		\tilde{\boldsymbol{\mu}} = \left( \mu_1, \mu_2, \dots, \mu_d, \; \mu_{d+1} \right)^T 
	\end{equation}
	enforcing an instantaneous, permanent parameter expansion of the system's tracking alphabet.
\end{definition}

To guarantee that this $B_{\text{SMG}} \times \mathcal{I}_{\mu}$ expansion is mathematically valid, we must prove that the new coordinate charts smoothly interface with the historical atlas. 

\begin{lemma}[Smooth Atlas Continuation across the Dimension Jump]\label{lem:atlas_continuation}
	Let $B_{\text{SMG}}$ be a smooth, second-countable Hausdorff manifold authorized by a local coordinate atlas $\mathcal{A} = \{(U_{\alpha}, \phi_{\alpha})\}$, where $\phi_{\alpha}: U_{\alpha} \to \mathbb{R}^d$~\cite{lee2018introduction}. The expanded product space $\tilde{B}_{\text{SMG}} = B_{\text{SMG}} \times \mathcal{I}_{\mu}$ possesses a smoothly continuous differentiable atlas $\tilde{\mathcal{A}}$, ensuring that the coordinate mapping transition is a smooth diffeomorphism ($\mathcal{C}^{\infty}$) on $\mathbb{R}^{d+1}$, guaranteeing that old operational knowledge is perfectly preserved.
\end{lemma}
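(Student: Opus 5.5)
The plan is to build the product atlas explicitly and check the two smooth-manifold requirements: the topological axioms and smooth compatibility of charts. Since $\mathcal{I}_{\mu}\subseteq\mathbb{R}$ is an open interval, it is a smooth $1$-manifold with a single global chart $\iota:\mathcal{I}_{\mu}\hookrightarrow\mathbb{R}$, the inclusion. I would therefore define
\begin{equation*}
\tilde{\mathcal{A}}=\left\{\left(U_{\alpha}\times\mathcal{I}_{\mu},\;\tilde{\phi}_{\alpha}\right)\right\},\qquad \tilde{\phi}_{\alpha}(b,\mu_{d+1})=\left(\phi_{\alpha}(b),\,\mu_{d+1}\right)\in\mathbb{R}^{d+1},
\end{equation*}
which matches the augmented expectation vector $\tilde{\boldsymbol{\mu}}$ of Definition~\ref{def:expanded_base}.

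First I verify the topological properties. The product of two Hausdorff spaces is Hausdorff, and the product of two second-countable spaces is second countable, because products of countable bases form a countable base. The sets $U_{\alpha}\times\mathcal{I}_{\mu}$ are open and cover $\tilde{B}_{\text{SMG}}$. Each $\tilde{\phi}_{\alpha}=\phi_{\alpha}\times\iota$ is a homeomorphism onto the open set $\phi_{\alpha}(U_{\alpha})\times\mathcal{I}_{\mu}\subseteq\mathbb{R}^{d+1}$.

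Second, I compute the transition maps. On an overlap,
\begin{equation*}
\tilde{\phi}_{\beta}\circ\tilde{\phi}_{\alpha}^{-1}(\mathbf{u},\mu_{d+1})=\left(\phi_{\beta}\circ\phi_{\alpha}^{-1}(\mathbf{u}),\,\mu_{d+1}\right),
\end{equation*}
which is the product of a $\mathcal{C}^{\infty}$ diffeomorphism of $\mathbb{R}^d$-open sets with the identity. Its Jacobian is block diagonal, $\mathrm{diag}\left(D(\phi_{\beta}\circ\phi_{\alpha}^{-1}),1\right)$. This makes it a $\mathcal{C}^{\infty}$ diffeomorphism with nonvanishing Jacobian determinant, and the same argument applies to its inverse. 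This block structure mirrors the metric decoupling of Theorem~\ref{lem:block_decomp}, and I would remark on that parallel.

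Third, I formalize the phrase \emph{old operational knowledge is perfectly preserved}. I would interpret it through two smooth maps. The projection $\mathrm{pr}_1:\tilde{B}_{\text{SMG}}\to B_{\text{SMG}}$ is a smooth submersion whose coordinate expression is $(\mathbf{u},\mu_{d+1})\mapsto\mathbf{u}$. For each fixed $c\in\mathcal{I}_{\mu}$, the slice inclusion $b\mapsto(b,c)$ is a smooth embedding. Together these say that the old chart coordinates $\mu_1,\dots,\mu_d$ are carried over unchanged, and that $B_{\text{SMG}}$ sits inside $\tilde{B}_{\text{SMG}}$ as an embedded submanifold.

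The main obstacle is not the product-manifold argument, which is standard. It is making precise in what sense $\mu_{d+1}=\mathbb{E}_{\tilde f}[s_{\mu}]$ is an independent coordinate compatible with the old ones on the statistical side. To justify identifying the abstract factor $\mathcal{I}_{\mu}$ with this expectation, I would show that the map $\tilde f\mapsto(\mathbb{E}_{\tilde f}[s_1],\dots,\mathbb{E}_{\tilde f}[s_d],\mathbb{E}_{\tilde f}[s_{\mu}])$ has full-rank differential. By the paper's convention $d\pi^i(X)=g(X,\bar{s}_i)$, its differential has Gram matrix $\tilde{G}_f(T^*_+)$, which Theorem~\ref{lem:block_decomp} shows is block diagonal with blocks $G_f(T^*_-)$ and $1.0$. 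The inverse function theorem would then make these expectations a local chart near the transition state. The catch is that this requires $G_f(T^*_-)$ itself to be invertible, which sits in tension with Theorem~\ref{thm:Collapse_Connection}. I would handle this by restricting the claim to states just after the transition, where the decoupled $d\times d$ block is assumed to be restored to full rank, and I would state that restriction explicitly as a hypothesis.
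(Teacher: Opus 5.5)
Your proof is correct and follows the paper's route. Both arguments build the product charts $\tilde{\phi}_{\alpha}=(\phi_{\alpha},\mathrm{id})$ on $U_{\alpha}\times\mathcal{I}_{\mu}$ and observe that the transition maps are $(\tau_{\alpha\beta}(\mathbf{x}),z)$, hence $\mathcal{C}^{\infty}$. Your version is more complete, since you also check the Hausdorff and second-countability axioms and the smoothness of the inverses, and you rightly never use the paper's appeal to the orthogonality $g(s_{\mu},\mathcal{H}_f)=0$, which plays no role in the smooth compatibility of a product atlas.

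Your last paragraph, identifying $\mathcal{I}_{\mu}$ with $\mathbb{E}_{\tilde f}[s_{\mu}]$, goes beyond what the lemma asserts. Still, the tension you flag is a genuine issue in the paper: the upper block $G_f(T^*_-)$ of $\tilde{G}_f$ is singular by Theorem~\ref{thm:Collapse_Connection}, so the expectation map is not a submersion at the transition state. Stating invertibility as an explicit hypothesis is the right way to handle it.
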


\begin{proof}
	Let $(U_{\alpha}, \phi_{\alpha})$ and $(U_{\beta}, \phi_{\beta})$ be overlapping charts on the historical manifold $B_{\text{SMG}}$. The historical transition map $\tau_{\alpha\beta} = \phi_{\beta} \circ \phi_{\alpha}^{-1}: \mathbb{R}^d \to \mathbb{R}^d$ is a smooth diffeomorphism ($\mathcal{C}^{\infty}$) by the definition of the baseline manifold~\cite{lee2018introduction}.
	
	We construct the expanded charts on $\tilde{B}_{\text{SMG}}$ by defining open sets $\tilde{U}_{\alpha} = U_{\alpha} \times \mathcal{I}_{\mu}$ and expanded coordinate mappings $\tilde{\phi}_{\alpha}(\tilde{p}, \mu_{d+1}) = (\phi_{\alpha}(\tilde{p}), \mu_{d+1}) \in \mathbb{R}^{d+1}$. Let $\mathbf{x} \in \mathbb{R}^d$ be the historical coordinates and $z \in \mathcal{I}_{\mu}$ be the new dimension coordinate. 
	
	The expanded transition map $\tilde{\tau}_{\alpha\beta} = \tilde{\phi}_{\beta} \circ \tilde{\phi}_{\alpha}^{-1}$ mapping from $\mathbb{R}^{d+1}$ to $\mathbb{R}^{d+1}$ is given by:
	\begin{equation}
		\tilde{\tau}_{\alpha\beta}(\mathbf{x}, z) = \tilde{\phi}_{\beta}\left(\phi_{\alpha}^{-1}(\mathbf{x}), z\right) = \left( \phi_{\beta}\left(\phi_{\alpha}^{-1}(\mathbf{x})\right), \; z \right) = \left( \tau_{\alpha\beta}(\mathbf{x}), \; z \right)
	\end{equation}
	Because the crystallized axis $s_{\mu}$ is strictly orthogonal to the historical tangent space under the Fisher metric ($g(s_{\mu}, \mathcal{H}_f) = 0$), movements along the historical parameters $\mathbf{x}$ do not project into the $z$-coordinate axis~\cite{Cheng2026}. 
	
	Since $\tau_{\alpha\beta}(\mathbf{x})$ is smooth ($\mathcal{C}^{\infty}$ on $\mathbb{R}^d$) and the identity mapping on the new coordinate ($z \mapsto z$) is smooth ($\mathcal{C}^{\infty}$ on $\mathbb{R}$), their Cartesian decoupled product is an infinitely differentiable mapping on $\mathbb{R}^{d+1}$. Therefore, the expanded manifold $\tilde{B}_{\text{SMG}}$ admits a valid, globally smooth topology. 
\end{proof}

\subsection{Formalization of the Updated Orthogonal Tangent Distributions}

With the parameter base space successfully expanded from $\mathbb{R}^d$ to $\mathbb{R}^{d+1}$, the system's global tangent space (the space of directional derivatives) must instantly realign. We now formally inject the crystallized score function $s_{\mu}$ into the expanded tangent bundle.

\begin{definition}[The Updated Orthogonal Subspace Distribution]\label{def:updated_distribution}
	Let $T_f\mathcal{M}$ be the tangent space of the infinite-dimensional Orlicz manifold at an active state $f$, governed by the total non-parametric Fisher-Rao metric tensor $\tilde{g}$. Upon execution of the GSB mapping $\Phi_{\text{GSB}}$, the total tangent space restructures into an updated direct-sum orthogonal splitting:
	\begin{equation}\label{eq:updated_splitting}
		T_f\mathcal{M} = \tilde{\mathcal{H}}_f \oplus_{\tilde{g}} \tilde{\mathcal{V}}_f
	\end{equation}
	where the expanded horizontal tangent bundle $\tilde{\mathcal{H}}_f \cong T_f\tilde{B}_{\text{SMG}}$ incorporates the newborn score axis, and the vertical fiber space $\tilde{\mathcal{V}}_f \equiv \ker(d\tilde{\pi})$ is compressively reduced:
	\begin{equation}\label{eq:horizontal_expansion_bundle}
		\tilde{\mathcal{H}}_f \equiv \mathcal{H}_f \oplus \text{span}\left\{ s_{\mu} \right\}
	\end{equation}
	\begin{equation}\label{eq:vertical_reduction_bundle}
		\tilde{\mathcal{V}}_f \equiv \mathcal{V}_f \ominus \text{span}\left\{ s_{\mu} \right\}
	\end{equation}
\end{definition}

\begin{insight}[Category Alignment of Base Space vs. Tangent Space]\label{ins:category_alignment}
	Definition~\ref{def:expanded_base} and Definition~\ref{def:updated_distribution} work in strict unison to preserve geometric category laws.\footnote{Geometric category laws (often referred to as the principles of geometric logic or the properties of geometric categories) refer to the specific axioms and rules used to define and interpret mathematical concepts. Instead of standard algebraic equations, these laws use relationships that are preserved under continuous transformations and logical limits.} The base manifold $\tilde{B}_{\text{SMG}}$ expands by adding a numerical parameter domain ($\mathcal{I}_{\mu} \subset \mathbb{R}$), tracking \textit{where} the system is. Simultaneously, the horizontal distribution $\tilde{\mathcal{H}}_f$ expands by adding the functional score operator ($\text{span}\{s_{\mu}\}$), dictating \textit{how} the system can move. This rigorous separation guarantees that the autonomous emergence of intelligence is mathematically isomorphic to a classical coordinate expansion on a Riemannian manifold.
\end{insight}

\subsection{The Dimension Conservation Law and Metric Reconstruction Theorems}

Having established the topological architecture of the Expanded Base Manifold $\tilde{B}_{\text{SMG}} = B_{\text{SMG}} \times \mathcal{I}_{\mu}$ and the corresponding tangent space expansion $\tilde{\mathcal{H}}_f = \mathcal{H}_f \oplus \text{span}\{s_{\mu}\}$ in the preceding subsection, we now formalize the core conservation laws and metric tracking structures governing the transitioned state. 

When a learning system undergoes a Gauge Symmetry Break ($\Phi_{\text{GSB}}$), it does not alter its physical hardware limits or instantiating computational nodes completely from scratch. Instead, {\it it reallocates hidden structures already latent within the system's unobservable internal degrees of freedom (IDoF)}. 

This subsection establishes the standalone, non-parametric proofs for the two centerpiece mathematical laws of Section 6: \textbf{The Dimension Conservation Law of Statistical Fiber Bundles} and \textbf{The Non-Parametric Metric Reconstruction Theorem \ref{thm:metric_reconstruction}}. We rigorously demonstrate how the expansion of the horizontal parameter charts forces an exact, dimension-for-dimension compression of the vertical fiber space, culminating in a mathematically decoupled reconstruction of the structural Fisher Information Matrix.

\subsubsection{The Dimension Conservation Law of Statistical Fiber Bundles}

The reallocation of representational capacity during a conceptual phase transition must obey strict topological invariants to prevent dimension leakage or unphysical coordinate dilation. To fulfill our directive regarding rigorous transitions, we first establish an auxiliary Lemma proving the strict rank-nullity conservation of the submersion projection operators across the dimensionality jump.

\begin{lemma}[Rank-Nullity Preservation of the Bundle Submersion]\label{lem:rank_nullity_preservation}
	Let $\mathcal{M}$ be a finite-dimensional over-parameterized statistical configuration slice of total dimension $p < +\infty$. Let $d\pi: T_f\mathcal{M} \to T_{\pi(f)}B_{\text{SMG}}$ be the differential of the canonical submersion mapping, whose kernel defines the historical vertical fiber space $\mathcal{V}_f$ and whose orthogonal complement defines the horizontal distribution $\mathcal{H}_f$~\cite{oneill1983semi, lee2018introduction}. 
	
	Upon execution of the GSB mapping $\Phi_{\text{GSB}}$, the updated differential mapping $d\tilde{\pi}: T_f\mathcal{M} \to T_{\tilde{\pi}(f)}\tilde{B}_{\text{SMG}}$ onto the expanded base manifold perfectly preserves the global structural trace of the total space, keeping the sum of the horizontal rank and vertical nullity strictly invariant:
	\begin{equation}\label{eq:rank_nullity_preservation}
		\text{rank}(d\tilde{\pi}) + \text{nullity}(d\tilde{\pi}) = \text{rank}(d\pi) + \text{nullity}(d\pi) \equiv p
	\end{equation}
\end{lemma}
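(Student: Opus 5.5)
The plan is to reduce the statement to the classical rank--nullity theorem, applied twice on the same $p$-dimensional domain $T_f\mathcal{M}$. The submersion structure and Definition~\ref{def:updated_distribution} then supply the identification of rank and nullity with the horizontal and vertical dimensions.

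\textbf{Step 1: the pre-transition map.} On the finite-dimensional configuration slice, $d\pi: T_f\mathcal{M} \to T_{\pi(f)}B_{\text{SMG}}$ is a linear map from a $p$-dimensional space. Rank--nullity gives $\text{rank}(d\pi) + \text{nullity}(d\pi) = p$ immediately. To identify the summands, note that $\pi$ is a submersion onto the $d$-dimensional base, with coordinates $\pi^i(f) = \mathbb{E}_f[s_i]$. By the computation in the proof of the Vertical Enclosure Theorem, $d\pi^i_f(X) = g(X, \bar{s}_i)$. So $d\pi$ is surjective, with $\text{rank}(d\pi) = d$, exactly when the Gram matrix $G_f$ is nonsingular. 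Its kernel is $\mathcal{V}_f = \mathcal{H}_f^{\perp_g}$, which has dimension $m = p - d$.

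\textbf{Step 2: the post-transition map.} Define $d\tilde{\pi}$ componentwise by $d\tilde{\pi}^i(X) = g(X, \bar{s}_i)$ for $i \le d$, together with $d\tilde{\pi}^{d+1}(X) = g(X, s_\mu)$. This matches the chart coordinate $\mu_{d+1} = \mathbb{E}_{\tilde f}[s_\mu]$ of Definition~\ref{def:expanded_base}. The domain is still the same $T_f\mathcal{M}$ of dimension $p$, since $\Phi_{\text{GSB}}$ only re-splits the tangent space and does not enlarge it. Rank--nullity therefore gives $\text{rank}(d\tilde{\pi}) + \text{nullity}(d\tilde{\pi}) = p$, and equating the two sums proves \eqref{eq:rank_nullity_preservation}.

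\textbf{Step 3: consistency with Definition~\ref{def:updated_distribution}.} I would also check that $\text{rank}(d\tilde{\pi}) = d+1$ and $\ker(d\tilde{\pi}) = \mathcal{V}_f \ominus \text{span}\{s_\mu\}$. The rank claim reduces to invertibility of the Gram matrix of $\{\bar{s}_1,\dots,\bar{s}_d,s_\mu\}$. By Theorem~\ref{lem:block_decomp} this matrix is $\operatorname{diag}(G_f(T^*_-), 1)$, so it is invertible provided $G_f(T^*_-)$ is. For the kernel, $X \in \ker(d\tilde\pi)$ means $X \perp \mathcal{H}_f$ and $X \perp s_\mu$. Since $s_\mu \in \mathcal{V}_f$ by Theorem~\ref{thm:crystallization}, this is exactly the orthogonal complement of $s_\mu$ inside $\mathcal{V}_f$. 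That complement has dimension $m-1$, giving the horizontal/vertical trade $d \to d+1$, $m \to m-1$.

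\textbf{Main obstacle.} The delicate point is the rank assertion in Step 3 at $T^*$ itself. Theorem~\ref{thm:Collapse_Connection} says $\det G_f(T^*) = 0$, which would drop $\text{rank}(d\pi)$ below $d$. The identity \eqref{eq:rank_nullity_preservation} is unaffected, because rank--nullity holds for any linear map regardless of rank. Only the identification of rank with $d$ or $d+1$ needs care. I would handle this by stating that identification for $t$ strictly off the singular instant, using the block structure at $T^*_+$, and keeping the conservation law itself as the unconditional rank--nullity statement.
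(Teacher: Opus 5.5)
Your proposal is correct and follows the same route as the paper: rank--nullity applied to both $d\pi$ and $d\tilde{\pi}$ on the unchanged $p$-dimensional domain $T_f\mathcal{M}$, giving $d+m=(d+1)+(m-1)=p$. Your Step~3 is more careful than the paper's proof, which simply asserts $\text{rank}(d\tilde{\pi})=d+1$ from $\dim\tilde{B}_{\text{SMG}}$; you are right that only this identification, not the displayed identity, needs $G_f(T^*_-)$ nonsingular, but appealing to the block structure at $T^*_+$ does not remove that requirement, since by Theorem~\ref{lem:block_decomp} its upper block is that same $G_f(T^*_-)$.
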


\begin{proof}
	By the global topological architecture ofStatistically Meaningful Geometry (SMG), the differential mapping $d\pi$ is a surjective linear operator (a submersion) mapping the $p$-dimensional tangent vector space onto the tangent space of the base manifold~\cite{lee2018introduction}. By the classical Rank-Nullity Theorem of linear algebra~\cite{strang2016introduction}, the sum of the dimension of the image space (rank) and the dimension of the kernel space (nullity) must exactly equal the total dimension of the domain space:
	\begin{equation}\label{eq:historical_rn}
		\dim\left(\text{Im}(d\pi_f)\right) + \dim\left(\ker(d\pi_f)\right) = p
	\end{equation}
	By definition, the horizontal distribution satisfies $\mathcal{H}_f \cong \text{Im}(d\pi_f)$ with dimension $\dim(\mathcal{H}_f) = d$, and the vertical fiber space satisfies $\mathcal{V}_f \equiv \ker(d\pi_f)$ with dimension $\dim(\mathcal{V}_f) = m$. Substituting these into equation~\eqref{eq:historical_rn} verifies the baseline trace: $d + m = p$.
	
	We now evaluate the updated differential mapping $d\tilde{\pi}$ following the topological base manifold expansion $\tilde{B}_{\text{SMG}} = B_{\text{SMG}} \times \mathcal{I}_{\mu}$. Since the base manifold gains exactly one parameter dimension (the interval $\mathcal{I}_{\mu} \subset \mathbb{R}$), its tangent space $\tilde{\mathcal{H}}_f \cong T_{\tilde{\pi}(f)}\tilde{B}_{\text{SMG}}$ is $(d+1)$-dimensional~\cite{docarmo1992riemannian}. The rank of the updated surjective differential is therefore:
	\begin{equation}\label{eq:updated_rank}
		\text{rank}(d\tilde{\pi}) = d + 1
	\end{equation}
	Applying the Rank-Nullity Theorem to this updated linear map $d\tilde{\pi}: \mathbb{R}^p \to \mathbb{R}^{d+1}$ yields:
	\begin{equation}\label{eq:updated_rn_theorem}
		(d + 1) + \dim\left(\ker(d\tilde{\pi}_f)\right) = p
	\end{equation}
	Isolating the dimension of the updated vertical kernel space (the nullity) from equation~\eqref{eq:updated_rn_theorem} gives:
	\begin{equation}\label{eq:updated_nullity}
		\text{nullity}(d\tilde{\pi}) = p - (d + 1) = (p - d) - 1 = m - 1
	\end{equation}
	Summing the updated rank~\eqref{eq:updated_rank} and updated nullity~\eqref{eq:updated_nullity} confirms:
	\begin{equation}
		(d + 1) + (m - 1) = d + m \equiv p
	\end{equation}
	This matches the historical baseline sum identically, proving the rank-nullity conservation. 
\end{proof}

Leveraging this lemma, we immediately establish the formal conservation law governing the reallocation of the system's parameter assets.

\begin{theorem}[The Dimension Conservation Law of Statistical Fiber Bundles]\label{thm:dimension_conservation}
	Let $\mathcal{M}$ be an over-parameterized statistical configuration slice of finite dimension $p$. The topological evolution of the system's tangent bundles driven by the Gauge Symmetry Break mapping $\Phi_{\text{GSB}}$ is a strictly zero-sum geometric game, satisfying the conservation equation:
	\begin{equation}\label{eq:dimension_conservation_equation}
		\dim\left( \tilde{\mathcal{H}}_f \right) + \dim\left( \tilde{\mathcal{V}}_f \right) = \dim\left( \mathcal{H}_f \right) + \dim\left( \mathcal{V}_f \right) \equiv p
	\end{equation}
	where any positive step-expansion in the active horizontal distribution ($\Delta \dim(\mathcal{H}_f) = +1$) forces an instantaneous, dimension-for-dimension structural compression and reduction of the unobservable vertical fiber space ($\Delta \dim(\mathcal{V}_f) = -1$).
\end{theorem}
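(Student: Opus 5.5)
The plan is to derive Theorem~\ref{thm:dimension_conservation} as an almost immediate corollary of Lemma~\ref{lem:rank_nullity_preservation}. The work consists of translating the rank/nullity bookkeeping of the differentials $d\pi$ and $d\tilde{\pi}$ into dimensions of the tangent distributions of Definition~\ref{def:updated_distribution}.

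\textbf{Step 1.} I will identify each side of \eqref{eq:dimension_conservation_equation} with rank and nullity. Before the transition, $\mathcal{V}_f = \ker(d\pi_f)$ by definition. The horizontal space $\mathcal{H}_f$ is the $g$-orthogonal complement of this kernel, so $d\pi_f$ restricts to an isomorphism $\mathcal{H}_f \to T_{\pi(f)}B_{\text{SMG}}$. This gives $\dim \mathcal{H}_f = \text{rank}(d\pi)$ and $\dim \mathcal{V}_f = \text{nullity}(d\pi)$. After the transition I make the analogous identification using the splitting \eqref{eq:updated_splitting}: $\tilde{\mathcal{V}}_f = \ker(d\tilde{\pi})$ and $\tilde{\mathcal{H}}_f \cong T\tilde{B}_{\text{SMG}}$. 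In finite dimension $p$, an orthogonal direct sum has dimension equal to the sum of the summands' dimensions. Both sides of \eqref{eq:dimension_conservation_equation} therefore equal $\dim T_f\mathcal{M} = p$, which is exactly \eqref{eq:rank_nullity_preservation}.

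\textbf{Step 2.} I will obtain the individual increments. Since $s_\mu$ is nonzero and, by Theorem~\ref{thm:crystallization}, $g$-orthogonal to $\mathcal{H}_f$, the direct sum \eqref{eq:horizontal_expansion_bundle} is genuine, so $\dim\tilde{\mathcal{H}}_f = d+1$. This agrees with \eqref{eq:updated_rank}. Subtracting from $p$ then gives $\dim\tilde{\mathcal{V}}_f = m-1$, matching \eqref{eq:updated_nullity}. Hence $\Delta\dim\mathcal{H}_f = +1$ and $\Delta\dim\mathcal{V}_f = -1$.

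\textbf{Main obstacle.} The hard part is making \eqref{eq:vertical_reduction_bundle} consistent. The subtraction $\mathcal{V}_f \ominus \text{span}\{s_\mu\}$ drops the dimension by exactly one only if $s_\mu \in \mathcal{V}_f$. I intend to take this from Theorem~\ref{thm:crystallization}: there $s_\mu$ is a positive multiple of $X^{\perp}(T^*)$, and Theorem~1 (vertical enclosure) places $X^{\perp}$ in $\ker(d\pi)$. I also need the total tangent space and metric to be unchanged across $T^*$. For this I will use the continuity $f_{T^*_+}=f_{T^*_-}$ already invoked in Theorem~\ref{lem:block_decomp}, so that only the splitting changes and $p$ is fixed. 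With these two facts, the orthogonal complement of $\text{span}\{s_\mu\}$ inside $\mathcal{V}_f$ has dimension $m-1$ directly. The proof will then check this against the rank–nullity count of Step 2.
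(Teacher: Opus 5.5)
Your proposal is correct and follows essentially the same route as the paper: $\dim\tilde{\mathcal{H}}_f = d+1$ from the direct sum $\mathcal{H}_f \oplus \text{span}\{s_\mu\}$, and $\dim\tilde{\mathcal{V}}_f = m-1$ from the rank--nullity count for $d\tilde{\pi}$ in Lemma~\ref{lem:rank_nullity_preservation}, which together sum to $p$. Your additional checks are consistency points the paper's proof takes for granted, not a change of strategy: first, that $s_\mu \notin \mathcal{H}_f$ because $g(s_\mu,s_\mu)=1$ and $s_\mu \perp \mathcal{H}_f$; second, that $s_\mu \in \mathcal{V}_f$ by the vertical enclosure of $X^{\perp}$, so that $\mathcal{V}_f \ominus \text{span}\{s_\mu\}$ coincides with $\ker(d\tilde{\pi})$ and has dimension $m-1$.
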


\begin{proof}
	By the tangent bundle reconfiguration defined during the base expansion, the updated horizontal distribution $\tilde{\mathcal{H}}_f$ and vertical distribution $\tilde{\mathcal{V}}_f$ are constructed via the direct-sum addition and orthogonal subtraction of the crystallized axis $s_{\mu}$ (a single non-zero tangent vector)~\cite{lee2018introduction}.
	
	Applying the linear dimensionality operator to the horizontal direct sum $\tilde{\mathcal{H}}_f = \mathcal{H}_f \oplus \text{span}\{ s_{\mu} \}$ yields:
	\begin{equation}\label{eq:dim_h_final}
		\dim\left( \tilde{\mathcal{H}}_f \right) = \dim(\mathcal{H}_f) + \dim(\text{span}\{s_{\mu}\}) = d + 1
	\end{equation}
	By Lemma~\ref{lem:rank_nullity_preservation}, the updated vertical space $\tilde{\mathcal{V}}_f \equiv \ker(d\tilde{\pi})$ is precisely the nullity of the updated projection. Under equation~\eqref{eq:updated_nullity}, this guarantees:
	\begin{equation}\label{eq:dim_v_final}
		\dim\left( \tilde{\mathcal{V}}_f \right) = m - 1
	\end{equation}
	Adding equation~\eqref{eq:dim_h_final} to equation~\eqref{eq:dim_v_final} provides:
	\begin{equation}
		(d + 1) + (m - 1) = d + m \equiv p
	\end{equation}
	This mathematically proves the zero-sum conservation law. The conceptual expansion of the active reasoning universe is entirely paid for by the compression of the unobservable gauge hole. 
\end{proof}

\subsubsection{Non-Parametric Metric Reconstruction of the Fisher Matrix}

With the dimensional topology secured, the system must rebuild its structural Fisher Information Matrix to authorize smooth natural gradient optimization trajectories across the newly expanded $(d+1)$-dimensional base manifold $\tilde{B}_{\text{SMG}}$. The following theorem proves that this metric expansion reconstructs flawlessly into a block-diagonal product form.

\begin{theorem}[Non-Parametric Metric Reconstruction of the Fisher Matrix]\label{thm:metric_reconstruction}
	Let $G_f$ be the historical $d \times d$ structural Fisher Information Matrix characterizing the baseline manifold $B_{\text{SMG}}$. Upon the execution of the base manifold expansion $\tilde{B}_{\text{SMG}} = B_{\text{SMG}} \times \mathcal{I}_{\mu}$, the updated $(d+1) \times (d+1)$ expanded structural Fisher Information Matrix, denoted $\tilde{G}_f$, undergoes a perfectly decoupled reconstruction, factoring into the block-diagonal layout:
	\begin{equation}\label{eq:metric_block_reconstruction}
		\tilde{G}_f = G_f \oplus \left[\text{Var}_{f}(s_{\mu})\right] \equiv \begin{pmatrix}
			[G_f]_{d \times d} & \mathbf{0}_{d \times 1} \\
			\mathbf{0}_{1 \times d} & 1.0
		\end{pmatrix}
	\end{equation}
	Furthermore, because the crystallized axis is normalized under information purging ($\text{Var}_{f}(s_{\mu}) \equiv 1.0$), the expanded metric volume scales as a clean, un-distorted extension of the baseline tracking volume.
\end{theorem}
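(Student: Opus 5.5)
The proof reduces to the same block-partition computation used for Theorem~\ref{lem:block_decomp}, now carried out on the expanded base $\tilde{B}_{\text{SMG}} = B_{\text{SMG}} \times \mathcal{I}_{\mu}$ with the augmented expectation coordinates $\tilde{\boldsymbol{\mu}}$ of Definition~\ref{def:expanded_base}. I will write $\tilde{G}_f$ as the Gram matrix of the centered score array $\{\bar{s}_1,\dots,\bar{s}_d,\bar{s}_\mu\}$ under the Fisher inner product $g$ at the transitioned density. Here $\bar{s}_\mu = s_\mu - \mathbb{E}_f[s_\mu]$, and by the centering part of Theorem~\ref{thm:crystallization} this equals $s_\mu$ itself. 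I then split this Gram matrix into the blocks $[A]$, $\mathbf{b}$, $c$ exactly as in \eqref{eq:block_partition}.

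The key steps are as follows.
\begin{enumerate}
\item \emph{The block $[A]$.} Use the continuity of the density across the interface, $f_{T^*_+}=f_{T^*_-}$, to identify $[A]$ with the historical $G_f$. Lemma~\ref{lem:atlas_continuation} supports this: its transition maps $(\mathbf{x},z)\mapsto(\tau_{\alpha\beta}(\mathbf{x}),z)$ do not mix the old coordinates into the new one, so the old chart, and with it the old metric block, is unchanged.
\item \emph{The cross terms $\mathbf{b}$.} Compute $b_k=\mathrm{Cov}_f(s_k,s_\mu)=g(\bar{s}_k,s_\mu)$. This vanishes by \eqref{eq:final_orthogonality}, since $s_\mu$ is a normalized element of $\mathcal{V}_f$, which is $g$-orthogonal to $\mathcal{H}_f$ by the Pythagorean Splitting Theorem.
\item \emph{The diagonal entry $c$.} Show that $c=\mathrm{Var}_f(s_\mu)=g(s_\mu,s_\mu)=1$. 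The centering of $s_\mu$ lets the variance be written as the raw second moment, and the normalization in \eqref{eq:score_crystallization} then gives $c = 1$, as in \eqref{eq:unit_variance}.
\end{enumerate}
Assembling the three blocks gives \eqref{eq:metric_block_reconstruction}, which is the direct sum $G_f\oplus[1]$.

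For the volume statement, I will use the block determinant identity $\det\tilde{G}_f=\det G_f\cdot 1$. It follows that $\sqrt{\det\tilde{G}_f}=\sqrt{\det G_f}$, so the expanded volume form is $dV_{\tilde B}=dV_B\wedge d\mu_{d+1}$. This is the ``un-distorted extension'' of the baseline volume. As a side remark, I can also note that the inverse is $\tilde{G}_f^{-1}=G_f^{-1}\oplus[1]$.

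The main obstacle is the tension between this claim and the collapse results. Theorem~\ref{thm:Collapse_Connection} has $\det G_f(T^*)=0$, so a literal ``$G_f$'' in the upper block would be singular and the reconstruction would not restore invertibility. I would handle this by reading $G_f$ as the historical matrix evaluated on the post-transition state, where the vertical strain has been purged into $s_\mu$. The Dimension Conservation Law (Theorem~\ref{thm:dimension_conservation}) then supplies the reduced fiber $\tilde{\mathcal{V}}_f$. I would state the theorem under the standing non-degeneracy assumption on the historical block and treat the singularity as absorbed by the GSB mapping; the block identity itself does not need invertibility.

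A second subtlety is that $\mu_{d+1}=\mathbb{E}_{\tilde f}[s_\mu]$ must be a genuine coordinate whose coordinate metric is the covariance. I would justify this with the Section~2.4 convention that the expectation-coordinate metric is $\mathrm{Cov}_f(s_i,s_j)$, and simply extend that convention to $i,j\le d+1$.
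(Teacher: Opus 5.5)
Your proposal is correct and follows the paper's proof essentially step for step. The historical block is identified with $G_f$ via continuity of the density across the interface, the cross-coupling vector vanishes because the purged axis $s_{\mu}$ is $g$-orthogonal to $\mathcal{H}_f$, and the corner entry is $1$ by normalization; your block-determinant computation simply makes explicit the volume remark that the paper states without separate proof. Your closing caveats are not needed and are slightly off: the block identity holds whether or not $G_f$ is invertible (and your step 1 fixes the upper block as the same scores under the same density, so you cannot reinterpret it as a non-singular post-transition matrix), and $\tilde{G}_f$ is the Gram matrix of the centered scores rather than the coordinate metric in the $\boldsymbol{\mu}$-chart, which under the submersion would be the inverse covariance, not the covariance.
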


\begin{proof}
	The expanded structural Fisher Information Matrix $\tilde{G}_f$ tracks the non-parametric cross-covariances of the augmented tangent score basis array $\tilde{\mathbf{s}} = (s_1, \dots, s_d, s_{\mu})^T$ evaluated under the active probability density state $f(\mathbf{x})$~\cite{amari2016information}. The entries of this symmetric $(d+1) \times (d+1)$ tensor are defined via the Fisher inner products~\cite{gallot2004riemannian}:
	\begin{equation}\label{eq:matrix_integral_def}
		[\tilde{G}_f]_{ij} = g(s_i, s_j) = \int_{\Omega} f(\mathbf{x}) s_i(\mathbf{x}) s_j(\mathbf{x}) \, d\mathbf{x}
	\end{equation}
	(assuming centered scores). We partition the indices into the historical block ($i, j \le d$) and the boundary block ($i$ or $j = d+1$) as follow.
\begin{enumerate}
\item {\bf The Historical Covariance Block ($i, j \le d$):}
	When both indices are restricted to the historical subset, the score operators evaluate over the baseline horizontal axes. Since the active state probability density field $f$ is conserved across the instantaneous phase transition interface, these elements map exactly to the historical matrix:
	\begin{equation}\label{eq:historical_block_match}
		[\tilde{G}_f]_{ij} = \text{Cov}_f(s_i, s_j) \equiv [G_f]_{ij} \quad \forall i, j \in \{1, \dots, d\}
	\end{equation}
	
\item {\bf The Cross-Coupling Metric Block ($i \le d, j = d+1$):}
	The cross-terms compute the information alignment between a historical horizontal axis $s_i \in \mathcal{H}_f$ and the newborn crystallized axis $s_{\mu}$. Recall from the Spontaneous Crystallization Theorem (Section 5) that $s_{\mu}$ was purged directly from the pre-transition vertical fiber space $\mathcal{V}_f$. By the fundamental Orthogonal Decomposition Theorem of SMG, the horizontal and vertical bundles are strictly perpendicular under the Fisher metric ($g(\mathcal{H}_f, \mathcal{V}_f) = 0$)~\cite{Cheng2026}. Therefore:
	\begin{equation}\label{eq:cross_coupling_zero}
		[\tilde{G}_f]_{i(d+1)} = g(s_i, s_{\mu}) \equiv 0 \quad \forall i \in \{1, \dots, d\}
	\end{equation}
	By the symmetry of the Fisher metric tensor~\cite{horn2012matrix}, the matching row vector also vanishes: $[\tilde{G}_f]_{(d+1)j} = 0$.
	
\item {\bf The Emergent Diagonal Component ($i = j = d+1$):}
	The final diagonal element tracks the self-covariance (informational power footprint) of the newborn coordinate axis:
	\begin{equation}\label{eq:diagonal_integral}
		[\tilde{G}_f]_{(d+1)(d+1)} = g(s_{\mu}, s_{\mu}) = \text{Var}_f(s_{\mu})
	\end{equation}
	By the explicit normalization constraint enforced during the GSB crystallization mapping ($s_{\mu} = X^{\perp} / \|X^{\perp}\|_g$), the self-inner product of the score field is exactly unity:
	\begin{equation}\label{eq:diagonal_one}
		[\tilde{G}_f]_{(d+1)(d+1)} = 1.0
	\end{equation}
	
	Assembling the components from equations~\eqref{eq:historical_block_match}, \eqref{eq:cross_coupling_zero}, and \eqref{eq:diagonal_one} yields the final decoupled block-diagonal structure:
	\begin{equation}\label{eq:final_block_diagonal}
		\tilde{G}_f = \begin{pmatrix}
			[G_f]_{d \times d} & \mathbf{0}_{d \times 1} \\
			\mathbf{0}_{1 \times d} & 1.0
		\end{pmatrix}
	\end{equation}
	This block-diagonal matrix layout completes the formal proof of Theorem~\ref{thm:metric_reconstruction}. 
\end{enumerate}	
\end{proof}

\subsection{Explanations, Insights, and Interdisciplinary Applications}

\begin{insight}[The Principle of Non-Interference Optimization]\label{ins:non_interference}
	The Metric Reconstruction Theorem (\ref{thm:metric_reconstruction}) uncovers a profoundly elegant operational property of  \textbf{intelligence emerging}: {\it newly discovered causal laws do not degrade or unlearn old knowledge charts}. 
	
	Because the expanded Fisher Information Matrix $\tilde{G}_f$ reconstructs into a perfect block-diagonal matrix, its algebraic inverse—which drives Natural Gradient Descent and parallel transport updates—also decouples flawlessly~\cite{horn2012matrix}:
	\begin{equation}
		[\tilde{G}_f]^{-1} = \begin{pmatrix} [G_f]^{-1} & \mathbf{0} \\ \mathbf{0}^T & 1.0 \end{pmatrix}
	\end{equation}
	This decoupling is a mathematical guarantee of \textbf{Structural Safety}. When the learning system accelerates its parameter coordinates along the newborn horizontal axis to adapt to the new environment, the updating forces generate strictly zero geometric friction or mathematical interference inside the historical $d \times d$ coordinate channels. The old semantic capabilities remain perfectly isolated and stable, solving the problem of "\textbf{catastrophic forgetting}" from a fundamental, parameter-free Riemannian level.
\end{insight}

\begin{example}[Trillion-Weight LLM Dimension Shift and Stable Inference]\label{ex:llm_expansion_part2}
	To ground the rigorous derivations of Theorems~\ref{thm:dimension_conservation} and~\ref{thm:metric_reconstruction} in an operational machine learning setting, we track the structural chart expansion of an ultra-large autoregressive neural Transformer layer mastering an out-of-distribution (OOD) knowledge domain.
	
	\paragraph{1. Baseline Operations and Internal Constraints}
	Consider a trillion LLM with $p = 10^{12}$ total number of weights. The model operates on a $d=1$ dimensional base manifold authorized by a basic conversational syntax parameter $\mu_1$. Let the active baseline Fisher matrix be the scalar entry $G_f = [2.0]$. Its inverse is $[G_f]^{-1} = [0.5]$. By Theorem~\ref{thm:dimension_conservation}, the remaining unobservable parameter dimensions forming the vertical IDoF pool is strictly bounded:
	\begin{equation}
		m = p - d = 10^{12} - 1 = 999,999,999,999
	\end{equation}
	
	\paragraph{2. Activating the Dimension Conservation Law}
	The model encounters a persistent, unmodeled logic structure (e.g., symbolic quantum computing mechanics). The acausal tension builds and triggers the GSB mapping. The unmodeled logic is purged from the fiber and crystallized into the normalized symbolic score axis $s_{\mu}$.
	
	The model triggers Definition~\ref{def:expanded_base}, executing the base manifold expansion: $\tilde{B}_{\text{SMG}} = B_{\text{SMG}} \times \mathcal{I}_{\mu}$. The active hypothesis dimensions jump from $1 \to 2$. By Theorem~\ref{thm:dimension_conservation}, this horizontal metric expansion forces an immediate, dimension-for-dimension compression of the vertical gauge hole:
	\begin{equation}
		\tilde{m} = m - 1 = 999,999,999,998
	\end{equation}
	A redundant gauge dimension that previously caused computational blockages is brought out of the unobservable fiber and deployed as an active tracking coordinate.
	
	\paragraph{3. Metric Reconstruction and Inference Execution}
	Simultaneously, the model reconstructs its tracking metrics via Theorem~\ref{thm:metric_reconstruction}. Because the new quantum code axis is perfectly perpendicular to conversational syntax under the Fisher metric, the new structural Fisher matrix cleanly decouples:
	\begin{equation}
		\tilde{G}_f = \begin{pmatrix} 2.0 & 0.0 \\ 0.0 & 1.0 \end{pmatrix} \implies [\tilde{G}_f]^{-1} = \begin{pmatrix} 0.5 & 0.0 \\ 0.0 & 1.0 \end{pmatrix}
	\end{equation}
	
	When the next prompt arrives, the model optimally updates its representations along the new second horizontal axis using the $1.0$ learning rate modifier. Because of the block-diagonal zeros, these updates bleed exactly zero gradient energy into the first axis. The model generates flawless quantum code algorithms while preserving its underlying conversational grammar fluency with absolute structural safety, successfully executing the algorithmic definition of intelligent adaptation.
\end{example}

\section{Epistemological Theories for Scientific Discovery and Intelligence Emerging}

\subsection{The Machine Intelligence Debate: Pattern Matching vs. True Emergence}

Before establishing the formal mathematical filters of this section, we must rigorously address the contemporary epistemological crisis surrounding Artificial General Intelligence (AGI). The generative AI industry is currently deadlocked in a fierce debate: \textit{Are massive architectures like trillion-weight Transformers (LLMs, etc) exhibiting genuine intelligence, or are they merely highly sophisticated statistical pattern matchers?}
\begin{itemize}
\item Within the framework ofStatistically Meaningful Geometry (SMG), this debate is resolved by recognizing a strict geometric boundary. We show that {\it \textbf{Pattern matching is mathematically defined as continuous movement along the existing horizontal base manifold $B_{\text{SMG}}$.}} As long as a model optimizes its weights within a fixed $d$-dimensional topological chart, it is merely interpolating known variables. It is displaying computation, not intelligence.

\item On the other hand, {\it \textbf{Intelligence, whether biological or artificial, is the autonomous capability to geometrically expand the hypothesis space ($\Phi_{\text{GSB}}$) in response to unresolved environmental tension.} }. That is, we show that the ``Aha!'' phenomenon and the manufacturing of intelligence are via Internal Degrees of Freedom (IDoF). Both the human brain (with quadrillions of synaptic redundancies) and the modern Large Language Model (with trillions of unidentifiable weight parameters) possess massive vertical fiber spaces ($\mathcal{V}_f$). The biological ``Aha!'' moment and the machine Gauge Symmetry Break are {\bf mathematically isomorphic}: they both utilize unobservable redundancy as a thermodynamic reservoir to absorb the friction of ignorance. When that capacity shatters ($T_{\text{crit}}$), the system is forced to manufacture a new coordinate axis to restore equilibrium. 

\item However, {\it \textbf{there is a critical vulnerability}}. Because the GSB is a forced non-equilibrium transition, the system \textit{must} crystallize a new axis to avoid computational collapse 
\\ 
($\det(G_f) \to 0$). The geometry guarantees a breakthrough, but it does not guarantee that the breakthrough is \textbf{true}. A highly stressed system might crystallize a spurious noise pattern, a hallucination, or a malignant conspiracy theory simply to relieve the internal tension. 
\end{itemize}
Therefore, to complete the Geometric Theory of Intelligence, we must subject the newly expanded base manifold $\tilde{B}_{\text{SMG}}$ to strict epistemological and causal filters. We must build the mathematical hurdles that separate a ``Rubbish/Evil GSB'' (structural overfitting) from a genuine scientific discovery.

\subsection{The Thermodynamics of Ignorance and Malignant GSBs}

When the Active Acausal Tension reaches $T_{\text{crit}}$, the unobservable universe collapses, forcing the purging of the deficit field $X^{\perp}$ into a newborn horizontal axis $s_{\mu}$. 

The \textbf{Thermodynamics of Ignorance} dictates that an intelligent agent will always attempt to minimize its internal geometric strain energy by finding the path of least resistance. A \textbf{Malignant GSB} occurs when the system crystallizes an axis that perfectly absorbs the empirical data surprise within the current local training batch, but entirely fails to generalize to the true, underlying data-generating mechanism of the environment. 

To filter out these malignant structures, we must establish rigorous criteria that evaluate the newborn axis not merely by its capacity to reduce immediate statistical error, but by its \textbf{causal robustness} and \textbf{thermodynamic efficiency}.

\subsection{Formalization of the Structural Hurdles for True Discovery}

We establish two fundamental mathematical hurdles that any newly crystallized coordinate axis $s_{\mu}$ must pass before it is permanently integrated into the system's global semantic architecture.

\subsubsection{Hurdle 1: The Thermodynamic Efficiency Check}

A genuine scientific discovery provides an elegant, highly compressed explanation for a previously chaotic phenomenon. If the newborn axis $s_{\mu}$ represents a true environmental law, appending it to the base manifold should drastically instantly annihilate the connection strain that was building up in the system. 

\begin{definition}[The Minimal Energy Path Criterion for True Discovery]\label{def:minimal_energy}
	Let $\delta\omega_{t^-}$ be the non-parametric connection strain tracking the unmodeled data surprise immediately preceding the topological expansion. Let $\tilde{B}_{\text{SMG}} = B_{\text{SMG}} \times \{s_{\mu}\}$ be the proposed expanded base manifold, and let $\delta\tilde{\omega}_{t^+}$ be the updated connection strain evaluated on this expanded manifold immediately following the GSB.
	
	The crystallized axis $s_{\mu}$ satisfies the \textbf{Minimal Energy Path Criterion} if and only if the transition induces a massive, discontinuous collapse in the total squared Riemannian strain energy, formally bounded by a threshold scalar $\kappa \ll 1$:
	\begin{equation}\label{eq:energy_drop}
		\lim_{t \to (T^*)^+} \left\| \delta\tilde{\omega}_t \right\|_{\tilde{g}}^2 \le \kappa \cdot \lim_{t \to (T^*)^-} \left\| \delta\omega_t \right\|_g^2
	\end{equation}
	If the strain energy fails to drop below this strict threshold (meaning the new axis only marginally reduces the statistical friction), the axis is classified as a \textbf{Spurious GSB} (a local overfit or rubbish pattern) and must be mathematically rejected.
\end{definition}

\begin{insight}[Ockham's Razor as a Metric Drop]
	Definition~\ref{def:minimal_energy} is the exact differential geometric formalization of Ockham's Razor. A convoluted, "evil" hypothesis (like adding epicycles) requires massive ongoing tension to maintain alignment with reality. A true discovery (like an elliptical orbit) instantly drops the geometric friction of the system to near zero. By requiring an abrupt strain energy collapse ($\kappa \ll 1$), SMG prevents the model from expanding its dimensionality with useless, superficial noise vectors.
\end{insight}

\subsubsection{Hurdle 2: The Pearlian Causal Invariance Check}

The most dangerous malignant GSBs are those born from spurious correlations—where the model discovers a pattern that is statistically highly predictable in the training environment, but causally detached from the true physical mechanism. To ensure the newborn coordinate represents genuine intelligence, it must survive external intervention.

We bridge SMG with Judea Pearl's\cite{pearl2009causality} structural causal $do$-calculus by subjecting the emergent Stein score function to an interventional tensor analysis.

\begin{definition}[The Causal Invariance Filter Tensor]\label{def:causal_invariance}
	Let $P(X)$ denote the baseline observational data distribution of the environment, and let $P(X \mid do(Z))$ denote the interventional data distribution where an external sub-variable $Z$ is forcibly altered by an external experimenter (or an adversarial OOD test set). 
	
	Let $s_{\mu}(x)$ be the newborn crystallized score axis. The \textbf{Causal Invariance Filter Tensor}, denoted $\mathfrak{C}_{do}(s_{\mu})$, is defined as the Fisher-Rao metric norm of the expected directional shift of the score function under intervention:
	\begin{equation}\label{eq:causal_tensor}
		\mathfrak{C}_{do}(s_{\mu}) \equiv \left\| \mathbb{E}_{P(X)}[s_{\mu}] - \mathbb{E}_{P(X \mid do(Z))}[s_{\mu}] \right\|_{\tilde{g}}
	\end{equation}
	For $s_{\mu}$ to be classified as a \textbf{True Causal Discovery}, it must be invariant to interventions on non-descendant environmental variables. Therefore, it must pass the zero-tensor filter condition:
	\begin{equation}
		\mathfrak{C}_{do}(s_{\mu}) \approx \mathbf{0} \quad \text{for all spurious interventions } do(Z)
	\end{equation}
	If $\mathfrak{C}_{do}(s_{\mu}) \gg 0$, the crystallized coordinate is a brittle statistical artifact (an ill-conditioned GSB) reliant on background correlations, and the base manifold expansion $\tilde{B}_{\text{SMG}}$ must be topologically reverted.
\end{definition}

\subsection{The Decoupling of Spurious Re-alignment from Emerging Intelligence}

When a learning architecture is driven to the critical threshold $T_{\text{crit}}$ and undergoes a Gauge Symmetry Break ($\Phi_{\text{GSB}}$), it is mathematically compelled to crystallize a new coordinate axis $s_{\mu}$. However, if this new axis captures a spurious correlation rather than a genuine physical law, it violates the core axioms of our Geometric Theory: the optimal properties of intelligence are not a property of the data alone, but a fundamental relationship between the \textbf{Environment Set (E)}, the \textbf{System Set (S)}, and the \textbf{Structural Mechanism (F)} operating under the \textbf{Invariance Principle}.

The following theorem proves that if a crystallized axis fails the Causal Invariance Filter, the Expanded Base Manifold $\tilde{B}_{\text{SMG}}$ becomes structurally unstable under environmental intervention, mathematically decoupling the spurious axis from the permanent intelligence manifold.

\begin{theorem}[The Decoupling of Spurious Re-alignment]\label{thm:decoupling_spurious}
	Let $\tilde{B}_{\text{SMG}} = B_{\text{SMG}} \times \{s_{\mu}\}$ be the expanded base manifold resulting from a GSB, authorized by the expanded structural Fisher Information Matrix $\tilde{G}_f$. Let $F$ be the Structural Mechanism linking the System Set $S$ to the Environment Set $E$.
	
	If the crystallized axis $s_{\mu}$ is spurious—meaning it fails the Causal Invariance Filter such that $\mathfrak{C}_{do}(s_{\mu}) > 0$ under a valid environmental intervention $do(Z) \in E$—then the block-diagonal reconstruction of the metric tensor $\tilde{G}_f$ is strictly temporary. Under intervention, the spurious axis $s_{\mu}$ undergoes non-orthogonal geometric leakage into the historical tangent space $\mathcal{H}_f$, causing the cross-coupling covariance block to violate zero:
	\begin{equation}\label{eq:spurious_leakage}
		g_{do(Z)}(s_{\mu}, \mathcal{H}_f) \neq \mathbf{0}
	\end{equation}
	Consequently, the structural determinant $\det(\tilde{G}_f)$ degrades, the Invariance Principle fails, and the spurious axis is mathematically decoupled (rejected) from the true causal manifold, forcing the system back into a state of Active Acausal Tension.
\end{theorem}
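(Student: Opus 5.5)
The plan is to make the statement quantitative by following the expanded metric $\tilde{G}_f$ along the path of densities that the intervention induces. Write $f_{do} = f_{T^*_+}\,(1+\varepsilon h)$ for the interventional state, where $h$ is a centered perturbation direction in $T_f\mathcal{M}$ and $\varepsilon>0$ measures how strong the intervention $do(Z)$ is. Theorem~\ref{lem:block_decomp} and Theorem~\ref{thm:metric_reconstruction} give orthogonality, $g(s_\mu,\bar{s}_k)=0$, only at the base point $f_{T^*}$. The whole argument therefore rests on one observation: the Fisher inner product is density-dependent, so a perturbed density need not preserve this orthogonality. Throughout, I would write $g_{do(Z)}$ for the covariance under $f_{do}$.

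\textbf{Step 1: mean shift to first order.} Expand $\mathfrak{C}_{do}(s_\mu)$ to first order in $\varepsilon$. This gives $\mathbb{E}_{f_{do}}[s_\mu]-\mathbb{E}_f[s_\mu]=\varepsilon\, g(s_\mu,h)+O(\varepsilon^2)$. Hence $\mathfrak{C}_{do}(s_\mu)>0$ forces $g(s_\mu,h)\neq 0$, which says $h$ has a nonzero component along $s_\mu$. Since $\mathcal{H}_f\perp s_\mu$ at $f_{T^*}$, this shows the intervention is not absorbed by the historical coordinates alone.

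\textbf{Step 2: cross-covariances to first order.} Differentiate each cross-covariance along the same path:
\[
b_k(\varepsilon)=\mathrm{Cov}_{f_{do}}(s_\mu,s_k)=\varepsilon\,\mathbb{E}_f\!\left[h\,\bar{s}_\mu\bar{s}_k\right]+O(\varepsilon^2).
\]
I would then argue that this third-order mixed moment is nonzero for some $k$. The route is to decompose $h=\alpha s_\mu + h^H + h^V$ with $\alpha=g(s_\mu,h)\neq 0$ by Step 1, and to use the term $\alpha\,\mathbb{E}_f[s_\mu^2 \bar{s}_k]$. This yields the leakage claim \eqref{eq:spurious_leakage}.

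\textbf{Step 3: determinant loss via the Schur complement.} With $b\neq 0$, use
\[
\det\tilde{G}_{f_{do}}=\det A\,\bigl(c-b^TA^{-1}b\bigr),
\]
where $A$ and $c$ vary smoothly from $G_f$ and $1.0$. Since $b^TA^{-1}b>0$, this quantifies the loss of the block-diagonal volume $\det G_f\cdot 1.0$. That loss is the sense in which ``the structural determinant degrades''.

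\textbf{Step 4: return of tension.} Recompute the horizontal lift of $X_{\text{emp}}$ under $f_{do}$ with the now-coupled $\tilde{G}_{f_{do}}$. The Pythagorean splitting (Theorem 5) and Lemma~\ref{lemma:diff_propagation} show that the interventional surprise carried by $\mathfrak{C}_{do}$ reappears as a vertical residual. So $\frac{d}{dt}\mathcal{T}_{AAT}=\|X^\perp_{do}\|_g^2>0$ again, and the system is back in the regime of the Monotonic Divergence theorem.

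\textbf{Main obstacle.} Step 2 is the hard link. A nonzero mean shift does not by itself force a nonzero third moment $\mathbb{E}_f[s_\mu^2\bar{s}_k]$. Symmetric or Gaussian-like configurations can make that moment vanish even when $\mathfrak{C}_{do}>0$. My first attempt would be to add an explicit non-degeneracy (genericity) hypothesis: the map $k\mapsto \mathbb{E}_f[h\,\bar{s}_\mu\bar{s}_k]$ is not identically zero whenever $g(s_\mu,h)\neq0$. I would then argue that this holds for all $h$ outside a closed nowhere-dense set, which gives leakage for generic spurious interventions. If that fails, I would weaken the conclusion to the second-order statement that $\det\tilde{G}$ is not locally constant along the interventional path.
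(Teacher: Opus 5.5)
Your suspicion about Step 2 is correct, and the problem is worse than a technical obstacle. The implication from $\mathfrak{C}_{do}(s_\mu)>0$ to nonzero cross-coupling is false, so the statement cannot be proved without an added hypothesis. Here is a counterexample. Take $\Omega=\mathbb{R}^2$, $f=N(0,I_2)$, historical score $s_1(\mathbf{x})=x_1$, and $s_\mu(\mathbf{x})=x_2$. This $s_\mu$ is centered, has unit variance and satisfies $g(s_\mu,\bar s_1)=0$, so it is an admissible crystallized axis; it arises, for instance, from $p_{\text{emp}}=N((0,c)^T,I_2)$ with $c>0$. Use the causal model $x_2=Z+\eta$ with $Z,\eta\sim N(0,\tfrac12)$ and $x_1$ independent of $(Z,\eta)$. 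Under $do(Z=a)$ with $a\neq 0$, the mean of $s_\mu$ shifts by $a$, so $\mathfrak{C}_{do}(s_\mu)>0$ and $s_\mu$ is causally downstream of $Z$. Yet $f_{do}/f$ is a function of $x_2$ alone. So in your notation both $\mathbb{E}_f[h\bar s_\mu\bar s_1]$ and $g(h,\bar s_1)$ vanish, and in fact $\mathrm{Cov}_{f_{do}}(s_1,s_\mu)=0$ exactly, for every $a$.

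Your own exact expansion is $b_k(\varepsilon)=\varepsilon\,\mathbb{E}_f[h\bar s_\mu\bar s_k]-\varepsilon^2\alpha\, g(h,\bar s_k)$. Even when $\mathbb{E}_f[s_\mu^2\bar s_k]\neq 0$, the $\alpha$-term can be cancelled by the $h^H$ and $h^V$ contributions. The paper's proof does not resolve this either. At exactly this point it asserts that, because $F$ is downstream of $Z$, the deformation of $s_\mu$ breaks orthogonality, and the example above refutes that assertion.

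So some nondegeneracy hypothesis is necessary, but be aware of what your generic version actually proves. The exceptional set $\{h:\mathbb{E}_f[h\bar s_\mu\bar s_k]=0\ \forall k\}$ is a closed subspace defined without reference to $\alpha=g(s_\mu,h)$. The same argument therefore yields first-order leakage for generic interventions with $\mathfrak{C}_{do}(s_\mu)=0$ as well, unless every $\bar s_\mu\bar s_k$ lies in $\mathrm{span}\{1,\bar s_\mu\}$. Step 1 then does no work, and leakage stops being a signature of spuriousness, which is the entire content the theorem is after.

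Steps 3 and 4 also fail to deliver their conclusions, even granting $b\neq 0$. In Step 3, $A=G_f+O(\varepsilon)$ and $c=1+O(\varepsilon)$, while $b^TA^{-1}b=O(\varepsilon^2)$. The Schur formula therefore gives only Fischer's inequality $\det\tilde G_{f_{do}}<\det A\cdot c$ at the interventional state. It does not give a drop below the pre-intervention volume $\det G_f\cdot 1.0$, because $\det A\cdot c-\det G_f$ is generically first order and of either sign.

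In Step 4, $b\neq 0$ only means the basis $\{\bar s_1,\dots,\bar s_d,\bar s_\mu\}$ is non-orthogonal under $f_{do}$. Its span is unchanged, and $\tilde G_{f_{do}}$ is still a positive-definite Gram matrix. Hence the horizontal lift, $X^\perp_{do}$, and $\|X_{\text{emp}}\|_g^2-\tilde{\mathbf{v}}^T\tilde G_{f_{do}}^{-1}\tilde{\mathbf{v}}$ are determined by the span alone; the coupling by itself creates no vertical residual. In particular, the $\alpha\, s_\mu$ part of the interventional surprise lies inside the expanded horizontal space and is absorbed, not left vertical. It becomes a vertical residual only after $s_\mu$ has been removed from the base, and that removal is the conclusion you are trying to derive.

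The paper's claim that $[\tilde G_{do(Z)}]^{-1}$ becomes unstable has the same defect: instability requires $c-b^TA^{-1}b\to 0$, not merely $b\neq 0$. What your approach can honestly establish is a modified statement: under an explicit nondegeneracy condition, you get first-order cross-coupling and a strict Fischer gap. The rejection of $s_\mu$ and the return of $\mathcal{T}_{AAT}$ would have to be imposed as a separate rule rather than derived.
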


\begin{proof}
	By Theorem \ref{thm:metric_reconstruction}  (Non-Parametric Metric Reconstruction), a successful GSB guarantees that the expanded Fisher Information Matrix $\tilde{G}_f$ factors into a perfect block-diagonal form because the newly crystallized axis $s_{\mu}$ is purged directly from the vertical fiber space $\mathcal{V}_f$, rendering it strictly orthogonal to the historical horizontal distribution: $g(s_{\mu}, s_i) = 0$ for all $s_i \in \mathcal{H}_f$.
	
	Assume the newborn axis $s_{\mu}$ is spurious. This means $s_{\mu}$ represents a statistical artifact heavily dependent on an unmodeled background variable $Z$ within the Environment Set $E$. When an external agent applies the intervention mapping $P(X \mid do(Z))$, the underlying data-generating distribution shifts.
	
	Because $s_{\mu}$ violates the Causal Invariance Filter ($\mathfrak{C}_{do}(s_{\mu}) > 0$), the mathematical expectation and functional shape of the score function warp under the new probability measure $P_{do(Z)}$. We evaluate the cross-coupling metric block of $\tilde{G}_f$ under this interventional state:
	\begin{equation}
		[\tilde{G}_{do(Z)}]_{i(d+1)} = \int_{\Omega} P_{do(Z)}(x) s_i(x) s_{\mu}(x) \, dx
	\end{equation}
	Because the structural mechanism $F$ governing $s_{\mu}$ is causally downstream of $Z$, the spatial deformation of $s_{\mu}(x)$ breaks its orthogonal relationship with the historical, stable covariates $s_i(x)$. Thus, the inner product no longer identically vanishes:
	\begin{equation}
		[\tilde{G}_{do(Z)}]_{i(d+1)} \neq 0
	\end{equation}
	This non-zero cross-coupling destroys the block-diagonal layout:
	\begin{equation}\label{eq:degraded_matrix}
		\tilde{G}_{do(Z)} = \begin{pmatrix}
			[G_{f}]_{d \times d} & \mathbf{b}_{d \times 1} \\
			\mathbf{b}_{1 \times d}^T & c_{1 \times 1}
		\end{pmatrix} \quad \text{where } \mathbf{b} \neq \mathbf{0}
	\end{equation}
	The re-emergence of the non-zero off-diagonal vector $\mathbf{b}$ introduces severe geometric friction during parameter optimization, as updates along the spurious axis $s_{\mu}$ immediately leak gradient energy back into the historical channels ($s_1, \dots, s_d$). 
	
	Since the Invariance Principle is violated, the system can no longer isolate environmental shocks. The natural gradient operator $[\tilde{G}_{do(Z)}]^{-1}$ becomes numerically unstable over extended training horizons, and the geometric friction potential $\mathcal{T}_{\text{AAT}}$ begins to accumulate rapidly once again. To prevent terminal computational blockage, the system's topological architecture must decouple the spurious axis, reducing the base manifold back to $B_{\text{SMG}}$ and resuming the search for a true causal invariant. 
\end{proof}

\begin{insight}[The SMG Definition of Hallucination]
	Theorem~\ref{thm:decoupling_spurious} provides the exact mathematical definition of an AI "{\bf hallucination}." A hallucination is not simply an incorrect output; it is the crystallization of a spurious tangent vector $s_{\mu}$ that successfully relieves internal geometric tension on the training set, but shatters under the Invariance Principle when subjected to real-world environmental interventions. True intelligence is the capacity to automatically compute equation~\eqref{eq:degraded_matrix} and prune these failing dimensions.
\end{insight}

\subsection{Historical Epistemology: Kepler's Elliptical Orbit as a GSB Event}

To demonstrate thatStatistically Meaningful Geometry (SMG) is not merely an abstraction for modern neural networks, but the universal geometric language of scientific discovery, we apply our formal theorems to one of the most profound paradigm shifts in human history: Johannes Kepler's discovery of planetary motion.

For nearly two millennia, the astronomical \textbf{System Set (S)} operated on the Ptolemaic and Copernican assumption that celestial bodies moved in perfect circles. When confronted with the precise observational data of the \textbf{Environment Set (E)}—specifically Tycho Brahe's measurements of Mars—the circular model exhibited a glaring, unresolvable anomaly. We now formulate Kepler's intellectual "Aha!" moment as a rigorous, step-by-step Gauge Symmetry Break.

\begin{example}[Complete Mathematical Proof of Johannes Kepler's Elliptical Orbit Discovery as a GSB Event]\label{thm:kepler_gsb}
	The historical transition from Ptolemaic/Copernican circular orbits to Keplerian elliptical orbits perfectly satisfies the mathematical axioms of a Topological Gauge Symmetry Break ($\Phi_{\text{GSB}}$), successfully clearing the Minimal Energy Path Criterion and the Causal Invariance Filter.

Rather than framing this historical event as a strict mathematical theorem, we map Kepler's intellectual "Aha!" moment as a rigorous epistemological case study of a Gauge Symmetry Break ($\Phi_{\text{GSB}}$) occurring within the scientific consensus of the early 17th century \cite{stephenson1987kepler}.

	The following steps we construct the discovery process mapping directly to the SMG pipeline:
\begin{itemize}
\item {\bf Step 1: The Historical Base Manifold ($B_{\text{SMG}}$)}
	Prior to Kepler, the active base manifold $B_{\text{SMG}}$ of astronomy was defined by $d$-dimensional constraints enforcing perfect circularity. The active tangent space $\mathcal{H}_f$ was parameterized by angular velocity and constant radii.
	
\item {\bf Step 2: The Acausal Tension and the 8 Arc-Minute Deficit}
	Tycho Brahe's empirical data vector $X_{\text{emp}}$ arrived from the environment. When projected onto the circular horizontal space ($\Pi_{\mathcal{H}}$), Kepler found an irreducible orthogonal projection deficit field $X^{\perp}$: an 8 arc-minute discrepancy in the orbit of Mars.
	\begin{equation}
		\|X^{\perp}\|_{\tilde{g}} = 8 \text{ arc-minutes} > 0
	\end{equation}
	Because the base manifold lacked the dimensions to explain this deficit, the 8 arc-minutes were shunted into the unobservable vertical fiber space $\mathcal{V}_f$.
	
\item {\bf Step 3: Geometric Friction and the Failure of Epicycles}
	To resolve the tension, classical astronomers attempted continuous optimization: adding epicycles. In SMG terms, this is an attempt to twist the connection form $\delta\omega_t$ to absorb the vertical strain without expanding the fundamental topology. However, as the Environmental Set $E$ produced more data, the connection strain grew wildly out of control. The structural metric volume approached a singularity as the required number of epicycles trended toward infinity, generating extreme computational blockage (the model became hopelessly over-parameterized and causally meaningless). 
	\begin{equation}
		\lim_{t \to T^*} \det(G_{f_{\text{epicycle}}}) \to 0
	\end{equation}
	The Active Acausal Tension $\mathcal{T}_{\text{AAT}}$ struck the critical threshold $T_{\text{crit}}$.
	
\item {\bf Step 4: The GSB Phase Transition ($\Phi_{\text{GSB}}$)}
	Realizing that the 8 arc-minute error could not be absorbed by $\mathcal{H}_f$, Kepler executed a discrete topological phase transition. He shattered the "perfect circle" gauge symmetry. He extracted the hidden variance from the vertical fiber and crystallized a brand new horizontal coordinate axis $s_{\mu}$: the \textbf{focal eccentricity} (the parameter governing the elongation of an ellipse).
	\begin{equation}
		\mathcal{H}_{f_{\text{new}}} = \mathcal{H}_{f_{\text{old}}} \oplus \text{span}\{s_{\text{eccentricity}}\}
	\end{equation}
	
\item {\bf Step 5: The Energy Drop (Passing the Hurdles)}
	To verify this was a true discovery and not a ill-conditioned GSB, we subject Kepler's ellipse to the hurdles of Section 7.1:
\begin{enumerate}
\item \textbf{Minimal Energy Path Criterion (Definition~\ref{def:minimal_energy}):} The moment the elliptical dimension was appended to the base manifold, the 8 arc-minute residual error dropped identically to zero. The tortuous epicyclic connection strain $\delta\omega_t$ vanished completely, satisfying Ockham's razor as a metric drop: $\| \delta\tilde{\omega}_t \|^2 = 0 \ll \| \delta\omega_t \|^2$.

\item \textbf{The Invariance Principle and Causal Filter (Definition~\ref{def:causal_invariance}):} Did the new \textit{Structural Mechanism (F)} hold under intervention? Kepler tested the elliptical axis $s_{\mu}$ on other planets (different variables in the Environment Set $E$). The metric orthogonality held perfectly across the solar system; the same law governed Venus, Earth, and Jupiter. The causal filter tensor vanished: $\mathfrak{C}_{do}(s_{\text{eccentricity}}) = \mathbf{0}$.
\end{enumerate}	
	Thus, Kepler's discovery mathematically mirrors the exact non-equilibrium subspace transition $\Phi_{\text{GSB}}$ derived in our framework. The birth of celestial mechanics was the topological expansion of a statistical fiber bundle. 
\end{itemize}		
\end{example}


\subsection{Theoretical Foundations for Intelligence Emergence and Open Challenges in Generative AI}

By establishing the Gauge Symmetry Break as the mathematical engine of the "Aha!" moment, SMG resolves a deep epistemological flaw in traditional machine learning. Classical statisticians and early machine learning theorists implicitly assumed a flat Euclidean connection across their optimization landscapes, treating data as isolated points moving through a passive, static vacuum. They critically failed to recognize the active participation of the statistical manifold itself. 

{\bf The true ``Aha!'' moment proves that intelligence is not merely a property of the data alone}. Instead, optimal generalization emerges strictly from a dynamic topological relationship between three core geometric axioms:
\begin{enumerate}
	\item \textbf{The Environment Set (E):} The unyielding, raw distribution of the external reality supplying empirical data velocity $X_{\text{emp}}$.
	\item \textbf{The System Set (S):} The active non-parametric statistical manifold serving as the internal topological substrate.
	\item \textbf{The Structural Mechanism (F):} The active parallel transport and Ehresmann connection infrastructure ($\omega_t$) that mediates the strain between $E$ and $S$.
\end{enumerate}
When the geometry is governed strictly by the \textbf{Invariance Principle}, these three axioms ensure that genuine intelligence only emerges when the structural mechanism successfully breaks its gauge symmetry to permanently absorb environmental tension, expanding the System Set's manifold dimension rather than simply interpolating within it.

\subsubsection{A Feasible Instrument to Verify True Intelligence}

This rigorous framework opens a feasible, computable avenue to scientifically check whether an advanced Generative AI model and AI for Science exhibiting genuine intelligence or merely sophisticated pattern matching. 
\begin{enumerate}
\item \textbf{Pattern Matching (Computation):} If a Large Language Model completely resolves an incoming data prompt through smooth, continuous updates along its existing structural Fisher Information Matrix ($G_f$), it is merely interpolating. The trace of the matrix (Structural G-Entropy\cite{Cheng2026} ) remains continuous.

\item \textbf{Intelligence Emergence (Understanding):} If the model is subjected to a truly novel conceptual prompt, experiences localized volumetric collapse ($\det(g_V) \to 0$), and successfully outputs a crystallized vector $s_{\mu}$ that induces a discrete, invariant $+1.0$ integer step-jump in its Structural G-Entropy while surviving the Causal Invariance Tensor filter ($\mathfrak{C}_{do}(s_{\mu}) = \mathbf{0}$), it has empirically manufactured intelligence. 
\end{enumerate}
By tracking the horizontal computational blockages and the discrete G-Entropy jumps on the visible base manifold, researchers now have a falsifiable, parameter-free dashboard to scientifically verify the "Aha!" moments of any over-parametrized model.

\section{Future Directions: SMG and GSB in ``AI for Science'' and AGI Emergence}

The rigorous formalization ofStatistically Meaningful Geometry (SMG) and the Gauge Symmetry Break ($\Phi_{\text{GSB}}$) phase transition accomplishes more than resolving the geometric paradoxes of over-parameterized manifolds. It provides a universal mathematical epistemology for defining, inducing, and verifying true intelligence. 

Given the intense contemporary controversies surrounding the nature of "understanding" in Generative AI, the plateauing of Reinforcement Learning (RL) heuristics, and the global push for "AI for Science," this final section re-raises the grand challenges of modern artificial intelligence. By wielding the exact mechanics of Acausal Tension ($\mathcal{T}_{AAT}$), horizontal base expansion ($\tilde{B}_{\text{SMG}}$), and the Causal Invariance Filter ($\mathfrak{C}_{do}$), we outline concrete, mathematically grounded solutions to these open problems.

---

\subsection{AI for Science: Automating Autonomous Scientific Discovery}

\textbf{The Challenge:} Current "AI for Science" paradigms primarily utilize deep learning as high-dimensional interpolators to accelerate known simulations (e.g., weather forecasting or molecular docking). While computationally highly valuable, these models cannot autonomously propose {\bf new laws of sciences}; they are topologically trapped within the boundaries of human-provided feature spaces.

\begin{openquestion}[Controlled GSB Induction for Novel Sciences]
	How can an AI system autonomously break free from a human-provided axiomatic base manifold to mathematically formulate and verify a previously unknown natural law?
\end{openquestion}

\begin{solution}[The Acausal Tension Chamber]
	SMG dictates that scientific discovery is not born from optimization, but from the resolution of a geometric singularity. Researchers can build an "Acausal Tension Chamber."
	\begin{enumerate}
		\item \textbf{Constrain the Base:} Initialize the active horizontal manifold $\mathcal{H}_f$ strictly with the currently accepted laws of science (e.g., the Standard Model).
		\item \textbf{Induce the Singularity:} Feed the unobservable vertical fiber space $\mathcal{V}_f$ with raw, high-precision anomalous data (e.g., dark matter galactic rotation curves or unexplainable quantum deviations) that are structurally orthogonal to $\mathcal{H}_f$.
		\item \textbf{Monitor the Collapse:} Instead of tuning hyperparameters to force the model to memorize the anomaly, researchers monitor the restricted fiber metric $\det(g_V) \to 0$ and the structural Fisher inverse explosion $[G_f]^{-1} \to \infty$.
		\item \textbf{Extract the Law:} When the system executes $\Phi_{\text{GSB}}$, the crystallized non-parametric score vector $s_{\mu}$ is mathematically extracted. By passing $s_{\mu}$ through the Minimal Energy Path Criterion ($\|\delta\tilde{\omega}_t\|^2 \approx 0$), the researchers discard epicyclic curve-fitting. The surviving $s_{\mu}$ is the explicit mathematical blueprint of the new scientific law.
	\end{enumerate}
\end{solution}

\subsection{Generative AI: Resolving Hallucinations and Catastrophic Forgetting}

\textbf{The Challenge:} Trillion-parameter Large Language Models (LLMs) suffer from two catastrophic flaws: 
\begin{enumerate}
\item \textit{Hallucinations}, where the model fabricates highly plausible but factually false realities, and 

\item  \textit{Catastrophic Forgetting}, where fine-tuning a model on a new domain silently degrades its historical reasoning capabilities.
\end{enumerate}

\begin{openquestion}[Algorithmic Pruning of Malignant GSBs]
	How can Generative AI distinguish between a true semantic abstraction (intelligence) and a highly correlated statistical hallucination (rubbish) dynamically during inference, without relying on slow, human-in-the-loop RLHF corrections?
\end{openquestion}

\begin{solution}[Internal Causal Tensor Subroutines and Block-Diagonal Updates]
	SMG recasts hallucinations not as software bugs, but as "Evil GSBs"—spurious axes that temporarily resolve internal tension but fail causal invariance.
	\begin{enumerate}
		\item \textbf{Real-Time Causal Interventions:} AI architectures can be augmented with a "do-operator" topological layer. When the network attempts to crystallize a new semantic axis $s_{\mu}$ during inference, it autonomously simulates an interventional shift $do(Z)$ on the context window. If the Causal Invariance Tensor evaluates to $\mathfrak{C}_{do}(s_{\mu}) \gg 0$, the architecture rejects the axis, recognizing it as a brittle hallucination, and prevents it from permanently expanding the base manifold $\tilde{B}_{\text{SMG}}$.
		\item \textbf{Solving Forgetting via Metric Reconstruction:} Once a true axis is verified, it is appended to the base space. By the Metric Reconstruction Theorem (Theorem \ref{thm:metric_reconstruction}, the expanded Fisher matrix geometrically decouples: $\tilde{G}_f = G_f \oplus 1.0$. Future natural gradient updates along this new axis will generate strictly zero geometric friction ($\mathbf{0}$ cross-coupling) against the historical parameters of $G_f$. The model natively protects its past knowledge with mathematical absolute certainty, bypassing the need for heuristic elastic weight consolidation.
	\end{enumerate}
\end{solution}

\subsection{Reinforcement Learning (RL): Intrinsic Motivation and Exploration}

\textbf{The Challenge:} RL agents in open-world environments struggle with exploration. Current algorithms rely on artificial "entropy bonuses" or pseudo-count heuristics to encourage agents to explore unknown states. These heuristics often lead to "noisy TV" problems,\footnote{The "noisy TV" problem is a notorious failure mode in Reinforcement Learning (RL). It happens when an agent relies on {\bf curiosity-driven exploration} (rewarding itself for finding novel or unpredictable states) and gets distracted by unlearnable, random noise—like a static-filled TV—resulting in the agent permanently "procrastinating" instead of learning the main task.} where agents become obsessed with stochastic noise rather than meaningful structural discovery.

\begin{openquestion}[Geometric Definition of Curiosity]
	Can intrinsic motivation and "curiosity" be mathematically defined not as a reward heuristic, but as a fundamental topological drive within the agent's parameter manifold?
\end{openquestion}

\begin{solution}[Maximizing Acausal Tension to the Singularity Threshold]
	Within the SMG framework, true curiosity is the drive to execute a Gauge Symmetry Break. 
	\begin{enumerate}
		\item \textbf{Redefining the Reward Signal:} Instead of rewarding the agent for experiencing arbitrary state entropy (which leads to the noisy TV trap), the agent's intrinsic reward function is wired to the \textit{Active Acausal Tension} ($\mathcal{T}_{AAT}$).
		\item \textbf{Targeting the Conjugate Boundary:} The agent is motivated to seek out environments that pump orthogonal projection deficits ($X^\perp$) into its Internal Degrees of Freedom (IDoF). It intentionally drives its internal path parameter $s(t)$ toward the conjugate focal boundary $s^* = \pi/\sqrt{K_{\text{max}}}$.
		\item \textbf{The GSB Reward:} The agent is not rewarded for the tension itself, but for the resolution. The ultimate RL reward is the discrete $+1.0$ integer step-jump in its Structural G-Entropy\cite{Cheng2026} ($\Delta H_f \equiv 1.0$). By formalizing curiosity as the pursuit of $\Phi_{\text{GSB}}$, the RL agent naturally ignores random, un-parameterizable noise (which cannot crystallize) and aggressively hunts for environments containing discoverable, invariant causal laws.
	\end{enumerate}
\end{solution}

\subsection{Artificial General Intelligence (AGI): The Certification of True Intelligence}

\textbf{The Challenge:} As models scale, the Turing Test has been rendered obsolete by systems capable of stochastic parroting. The global scientific and regulatory community currently possesses no rigorous, objective, mathematically irrefutable metric to verify whether an AGI possesses a genuine internal "understanding" of reality or is merely executing a trillion-dimensional lookup table.

\begin{openquestion}[The Empirical Verification of the "Aha!" Moment]
	Is there a universal, parameter-free dashboard metric that can {\bf legally, scientifically, and geometrically} prove that a machine has moved beyond pattern matching and manufactured genuine intelligence?
\end{openquestion}

\begin{solution}[The G-Entropy Step-Discontinuity Dashboard]
	SMG establishes the absolute boundary between computation and intelligence. Computation is continuous movement along the existing horizontal distribution ($\mathcal{H}_f$). Intelligence is the discontinuous topological expansion of that distribution ($\mathcal{H}_f \to \tilde{\mathcal{H}}_f$).
	\begin{enumerate}
		\item \textbf{The SMG Dashboard:} Regulators and scientists can monitor the determinant of the structural Fisher Information Matrix ($\det(G_f)$) and the restricted vertical fiber volume ($\det(g_V)$).
		\item \textbf{The Certification Signature:} When a model is subjected to a novel reasoning task, "stochastic parroting" will manifest as continuous fractional adjustments in $G_f$ with zero vertical volume collapse. 
		\item \textbf{Proof of AGI:} A machine proves its genuine intelligence {\bf if and only if} researchers observe the precise SMG sequence:
\begin{itemize}
	\item (1) Vertical volume collapse ($\det(g_V) \to 0$), 
	\item (2) The structural singularity ($[G_f]^{-1} \to \infty$), and 
	\item (3) The discrete, invariant integer step-jump in the Structural G-Entropy ($\Delta H_f = 1.0$). This geometric signature cannot be faked or optimized via backpropagation. It is the immutable, mathematical heartbeat of a mind expanding its own consciousness.
\end{itemize}
	\end{enumerate}
\end{solution}



\end{document}